\numberwithin{equation}{section}
\newtheorem{problem}{Problem}
\newtheorem{theorem}{Theorem}
\newtheorem{lemma}{Lemma}
\newtheorem{cor}{Corollary}
\newtheorem{remark}{Remark}
\numberwithin{lemma}{section}
\numberwithin{theorem}{section}
\numberwithin{problem}{section}
\numberwithin{example}{section}
\numberwithin{cor}{section}
\algrenewcommand\algorithmicindent{1.0em}
\newcommand{\GLC}{\ensuremath{\mathrm{GLC}}\xspace}
\newcommand{\PRM}{\ensuremath{\mathrm{PRM}}\xspace}
\newcommand{\PRMs}{\ensuremath{\mathrm{PRM}^*}\xspace}
\newcommand{\RRT}{\ensuremath{\mathrm{RRT}}\xspace}
\newcommand{\RRTs}{\ensuremath{\mathrm{RRT}^*}\xspace}
\newcommand{\SST}{\ensuremath{\mathrm{SST}}\xspace}
\newcommand{\EST}{\ensuremath{\mathrm{EST}}\xspace}
\newcommand{\NULL}{\ensuremath{\mathtt{NULL}}\xspace}
\newcommand{\Stretch}{\arraycolsep=1.4pt\def\arraystretch{1.5} }
\newcommand{\xfree}{\ensuremath{X_{\mathrm{free}}}\xspace}
\newcommand{\xgoal}{\ensuremath{X_{\mathrm{goal}}}\xspace}
\newcommand{\Xfree}{\ensuremath{\mathcal{X}_{\mathrm{free}}}\xspace}
\newcommand{\Xgoal}{\ensuremath{\mathcal{X}_{\mathrm{goal}}}\xspace}
\newcommand{\U}{\ensuremath{\mathcal{U}}\xspace}
\newcommand{\X}{\ensuremath{\mathcal{X}}\xspace}
\newcommand{\Ufree}{\ensuremath{\mathcal{U}_{\mathrm{free}}}\xspace}
\newcommand{\Ugoal}{\ensuremath{\mathcal{U}_{\mathrm{goal}}}\xspace}
\newcommand{\UR}{\ensuremath{\mathcal{U}_R}\xspace}
\newcommand{\Xic}{\ensuremath{\mathcal{X}_{x_{ic}}}\xspace}
\newcommand{\Xxo}{\ensuremath{\mathcal{X}_{x_{0}}}\xspace}
\begin{document}

%
%
%
%
%
%
%
%
%
%
%

\title{A Generalized Label Correcting Method for Optimal Kinodynamic Motion Planning}

\author{Brian A. Paden}
\department{Department of Mechanical Engineering}

\degree{Doctor of Philosophy}

\degreemonth{June}
\degreeyear{2017}
\thesisdate{June 5, 2017}


\supervisor{Emilio Frazzoli}{Professor of Aeronautics and Astronautics}
\committee{Domitilla Del Vecchio}{Associate Professor of Mechanical Engineering}{Chair, Thesis Committee}
\committee{Karl Iagnemma}{Principal Research Scientist}{Member, Thesis Committee}

\chairman{Rohan Abeyaratne}{Chair, Department Committee on Graduate Theses}

\maketitle



\cleardoublepage
\setcounter{savepage}{\thepage}
\begin{abstractpage}
%
%
%

%
Nearly all autonomous robotic systems use some form of motion planning to compute reference motions through their environment.
An increasing use of autonomous robots in a broad range of applications creates a need for efficient, general purpose motion planning algorithms that are applicable in any of these new application domains.
This thesis presents a resolution complete optimal kinodynamic motion planning algorithm based on a direct forward search of the set of admissible input signals to a dynamical model. 
The advantage of this generalized label correcting method is that it does not require a local planning subroutine as in the case of related methods.

Preliminary material focuses on new topological properties of the canonical problem formulation that are used to show continuity of the performance objective.
These observations are used to derive a generalization of Bellman's principle of optimality in the context of kinodynamic motion planning. 
A generalized label correcting algorithm is then proposed which leverages these results to prune candidate input signals from the search when their cost is greater than related signals. 

The second part of this thesis addresses admissible heuristics for kinodynamic motion planning.
An admissibility condition is derived that can be used to verify the admissibility of candidate heuristics for a particular problem. 
This condition also characterizes a convex set of admissible heuristics.
A linear program is formulated to obtain a heuristic which is as close to  the optimal cost-to-go as possible while remaining admissible.
This optimization is justified by showing its solution coincides with the solution to the Hamilton-Jacobi-Bellman equation.
Lastly, a sum-of-squares relaxation of this infinite-dimensional linear program is proposed for obtaining provably admissible approximate solutions.

\end{abstractpage}


\cleardoublepage

\section*{Acknowledgments}
I would like to express my gratitude to my advisor Professor Emilio Frazzoli for his input and support through my graduate studies.
Emilio has given me tremendous freedom in pursuing the research questions I found most interesting and provided skillful guidance to keep me moving in the right direction.
I have also enjoyed numerous opportunities that were a direct consequence of my affiliation with Emilio's research group. 
Among these were several exciting internships working on autonomous systems and the opportunity to spend a year doing research at ETH.
I am also grateful to my committee members, Professor Domitilla Del Vecchio and Karl Iagnemma for their valuable feedback and advice in out meetings over the past several years. 

Additionally, I would like to thank my colleagues and friends with whom I have worked over the past four years. 
Sze Zheng Yong was a great mentor in my first year at MIT and we were able to collaborate on a number of interesting projects.
The afternoon chalkboard sessions with Dmitry Yershov were stimulating and memorable. 
They contributed significantly to the ideas in the first part of this thesis. 
The second part of this thesis related to heuristics for kinodynamic planning problems came from a collaboration with Valerio Varricchio on a course project in Professor Pablo Parrilo's algebraic techniques class.
%

\pagestyle{plain}
\chapter*{}
\vspace{2.5in}
\begin{center}
\emph{For Stephanie}
\end{center}

\tableofcontents

\chapter{Introduction}
%
%
Robotics and automation have been reshaping the world economy with widespread use in industries such as manufacturing, agriculture, transportation, defense, and medicine.
%
%
Advances to the theoretical foundations of the subject together with a seemingly endless stream of new sensing and computing technology is making autonomous robots capable of increasingly complex tasks.
One of the emerging new applications is driverless vehicles~\cite{padenSurvey} and transportation systems which have the potential to eliminate urban congestion~\cite{spieser2014toward} and dramatically reduce the roughly 1.24 million lives lost each year in road traffic accidents~\cite{world2013global}. 
The sophistication and complexity of modern robotic systems requires entire fields of research devoted to individual subsystems.
These can be divided into two major categories. 
The first, sensing and perception which inform the system about its state and the surrounding environment; the second, planning and control which makes use of sensor data, processed by the perception system, and selects appropriate actions to accomplish task specifications. 

While both of these are subjects of active research, the scope of this thesis falls into the latter category of planning and control.
Specifically, it focuses on the planning of motions through an environment that satisfy a dynamical model of the system while optimizing a measure of performance.

\section{Background and Motivation}

Research on planning and control for robotic systems is addressed with a combination of techniques from theoretical computer science and control theory. 
High level decision making and task planning problems generally involve making selections from a discrete set of alternatives.
The techniques from theoretical computer science lend themselves well to these problems given their discrete nature. 
Central research questions involve: determining expressive problem formulations that can be utilized in a large variety of applications, determining the computational complexity of a particular problem formulation, and then designing sound and correct algorithms which realize this complexity.
At the other end of the spectrum, a feedback control system faithfully executes reference motions given a potentially complex dynamical model of the system.
Central research questions in control theory are the stability of the system in the presence of feedback control, and robustness to disturbances and modeling errors.  
While computer science provides the tools for high level planning and control theory provides the tools to execute a reference motion, the actual planning of a robot's reference motion falls in the middle ground between these two disciplines and has been treated almost independently by the two.
The main specifications for the planned motion are: (i) the motion must originate from a prespecified initial state and terminate in a set of goal states; (ii) it must satisfy constraints on the state, such as obstacle avoidance, along the planned motion; (iii) there must exist a control signal which, together with the planned motion, satisfies the differential equation modeling the system; (iv) lastly, a measure of cost must be minimized.

\subsection{Methods from Optimal Control}
A number of techniques for solving this problem have been developed within the optimal control literature where greater emphasis has been placed on dealing with differential constraints and less on state constraints.  
The classical approach is to use extensions of the calculus of variations to express necessary first order optimality conditions of a solution.
One celebrated approach is known as Pontryagin's minimum principle after Lev Pontryagin who pioneered the extension to optimal control~\cite{pontryagin}.
The first order optimality conditions are a number of differential equations that must satisfy initial and terminal state constraints, also known as a two-point boundary value problem.
The two-point boundary value problem can be solved in closed form in a number of examples, but in general it requires a numerical method to determine the appropriate boundary values. 
The shooting method is the standard numerical method for solving the two-point boundary value problem. 
However, extreme sensitivity of the terminal boundary value with respect to the initial boundary value makes it difficult to solve in most examples~\cite[pg. 214]{brysonapplied}.
Direct methods have become a favored approach.
These methods approximate the trajectory and control as a finite-dimensional vector space and then directly optimize the performance objective with a nonlinear programming algorithm. 
Some of the popular approaches are the direct shooting method, reviewed in~\cite{betts1998survey}, which is particularly easy to implement;  orthogonal collocation methods~\cite{benson2006direct} which offer a good numerical approximation with a relatively small number of basis vectors; and direct collocation methods~\cite{hargraves1987direct} which provide an approximation resulting in a sparse Hessian matrix for the performance objective.
The direct optimization methods rapidly converge to a locally optimal solution when they are given a suitable initial guess.
However, constructing an initial guess is more of an art than a science and if the initial guess does not satisfy the constraints, many solvers may fail to find a feasible solution at all. 
Additionally, direct optimization methods will converge to a locally optimal solution which may be unacceptably different than any globally optimal solutions.
These issues are exacerbated in complex environments that can introduce numerous local minima in the performance objective.
Thus, direct optimization methods are not sound in the sense that they may fail to find a solution when one exists.
In real-time motion planning applications where the output of the motion planner can be safety critical, these methods need to be used with a great deal of caution.

\subsection{Methods from Computational Geometry}
A classic problem in computational geometry is the mover's problem (also called the piano or couch mover's problem) which had received attention from the computational geometry community during the 1970s.
A generalization of the mover's problem was gaining interest with the increasing use of industrial manipulators in manufacturing in the late 1970s.
The generalized mover's problem consider's planning a motion for multiple polyhedra freely linked at distinguished vertices (to model an industrial manipulator arm).
In a famous paper by Reif~\cite{reif1979complexity}, an algorithm is provided for the mover's problem whose complexity is polynomial in the number of constraints defined by the obstacles.
Further, Reif shows that the generalized mover's problem is PSPACE-hard with respect to the number of degrees of freedom meaning that the difficulty of solving motion planning problems grows rapidly as the degrees of freedom increase.
Several years later Schwartz and Sharir presented a cell-decomposition algorithm~\cite{schwartz1983piano} which solved the generalized mover's problem for algebraic (instead of polyhedral) bodies with complexity in $O((2n)^{3^{r+1}}\cdot m^{2^r})$ where $r$ is proportional to the number of degrees of freedom of the robot, $n$ is proportional to the algebraic degree of the constraints, and $m$ is the number of polynomials describing the constraints.
The cell decomposition algorithm is most often cited for the doubly exponential complexity in the degrees of freedom of the robot which is not so disappointing considering Reif's results several years earlier. 
However, the important observation is that the complexity is polynomial with respect to the number of obstacles in the environment.
Canny later provided an algorithm that solved the generalized mover's problem with complexity which is only exponential in the degrees of freedom~\cite{canny1987new}.

In contrast to the trajectory optimization problems addressed by the optimal control community, the classical kinematic motion planning problems are only concerned with obstacle avoidance.
For robot manipulators there is a straight-forward justification for neglecting the robot dynamics.
The dynamics of robot manipulators determined by the principles of classical mechanics~\cite{abraham1978foundations} can almost universally be written as
\begin{equation}\label{eq:manipulator_equation}
M(q(t),\dot{q}(t))\ddot{q}(t)+C(q(t),\dot{q}(t))\dot{q}(t)+G(q(t))=B(q(t))\tau(t),
\end{equation}
where $q(t)$ is the vector of generalized coordinates for the robot's configuration and $\tau(t)$ is the vector of generalized control forces applied by the actuators.
When there is at least one actuator for each generalized coordinate and the matrix $B$ has rank equal to the number of generalized coordinates for every configuration, any twice-differentiable time parameterization of a planned motion $q$ can be executed with the control forces $\tau(t)$ solving the manipulator equation \eqref{eq:manipulator_equation}. 
When these conditions are met, the robot is said to be fully actuated which is typically the case for robot manipulators.

\subsection{Approximate Methods for Motion Planning}
The growing robotics industry in the late 1980s and 1990s called for practical solutions to the motion planning problem. 
With the disappointing complexity of available complete algorithms, researchers began developing practical techniques without theoretical guarantees, that worked well in practice~\cite{khatib1986real,hwang1992potential,rimon1992exact}.
Another practical approach with some theoretical justification was to seek methods with \emph{resolution completeness}\index{resolution completeness}~\cite{brooks1985subdivision}, meaning that a the output of the algorithm is correct for some (not known a-priori) sufficiently high resolution. 
The analogue for randomized algorithms is \emph{probabilistic completeness}\index{probabilistic completeness}~\cite{barraquand1996random}.
The concept of probabilistic completeness was introduced  at around the same time as the probabilistic roadmap (\PRM) algorithm~\cite{kavraki1996probabilistic}, whose effectiveness triggered a paradigm shift in motion planning research towards sampling-based approximate methods.
One of the more attractive features of the algorithm is that the probability of failing to correctly determine the feasibility of a motion planning problem converges to $0$ at an exponential rate in the number of random samples. 
The basic principle of the \PRM algorithm is to randomly sample a large number admissible robot configurations, and then construct a graph by connecting nearby configurations with an edge if the line between them does not contain inadmissible configurations. 

With the \PRM algorithm effectively solving classical motion planning problems, attention turned to planning for systems where dynamical constraints cannot be neglected or \emph{kinodynamic motion planning}\index{kinodynamic motion planning}. 
The difficulty in applying the \PRM to kinodynamic motion planning is that the construction of edges by linear interpolation between configurations may not be a dynamically feasible motion.
A simple adaptation is to replace linear interpolation by a local planning or steering subroutine, but this complicates individual implementations and places a burden on the user of the algorithm to provide this subroutine which itself must solve a kinodynamic motion planning problem.
The first algorithms addressing the kinodynamic motion planning problem without a steering subroutine were the expansive space trees (\EST) algorithm~\cite{EST_Journal} and the rapidly exploring random tree (\RRT) algorithm~\cite{RRT_Journal};
both of which relied on forward integration of the dynamics with random control inputs instead of a point-to-point local planning subroutine.
Further, both methods boasted probabilistic completeness like the \PRM algorithm.
The next major development in sampling-based motion planning was optimal variations of the \PRM and \RRT, denoted \PRMs and \RRTs~\cite{karaman2011sampling} developed by Karaman and Frazzoli.
The innovation was in the selection of edges in the \PRM graph.
Karaman and Frazzoli described how to construct the graph to be as sparse as possible while ensuring asymptotic optimality with respect to a performance objective in addition to probabilistic completeness.
The \RRTs algorithm is in fact more closely related to the \PRM than the \RRT. 
The \RRTs algorithm is essentially an incremental version of the \PRMs algorithm which simultaneously constructs a minimum spanning tree in the graph from an initial configuration.
While the \RRTs and \PRMs algorithms were highly impactful, a local planning subroutine was once again required.
Nonetheless, the utility of the \PRMs and \RRTs stimulated significant research efforts towards finding general methods for solving the local steering problem.
For systems with linear dynamics, classical solutions from linear systems theory can be applied to the local planning subroutine.
The drawback to this approach is that the time spent on local planning is prohibitive, taking several minutes for \RRTs to produce satisfactory solutions in the examples presented in~\cite{webb2013kinodynamic} and~\cite{perez2012lqr}.
Additionally, many systems of interest are non-linear making this approach limited in scope.

More recently, the \SST algorithm~\cite{li2015sparse} was proposed which provided an algorithm converging asymptotically to an approximately optimal solution without the use of a local planning subroutine.
This offers an advantage over \RRTs in problems where the local planning subroutine is not available in closed form.
However, \SST still requires significant running time making it difficult to apply to real-time planning.

\section{Statement of Contributions}
This thesis presents a number of theoretical contributions related to optimal kinodynamic motion planning.
The principal contribution is a resolution complete optimal kinodynamic motion planning algorithm based on a direct forward search of the set of input control signals.
Chapters \ref{chap:problem} and \ref{chap:topo} review the problem formulation addressed, as well as a careful investigation into topological properties of the set of solutions.
The key observation in these chapters is continuity of the performance objective with respect to the input signal,  presented in Lemma \ref{lem:cost_cont}, and a bound on the sensitivity of the cost function with respect to initial conditions, presented in Lemma \ref{lem:cost_sensitivity}. 
The results of Chapter \ref{chap:topo} are the basis of a generalization of classical label correcting algorithms where the comparison of relative cost can be made, not only between trajectories terminating at the same state, but between all trajectories terminating in a particular region of the state space.
The generalized label correcting conditions, described in Chapter \ref{chap:glc}, specify which segments of trajectories can be discarded from the search without compromising convergence to the optimal solution with increasing resolution. 
The generalized label correcting conditions presented in this thesis are a sharper version of the conditions presented in~\cite{paden2016generalized} resulting in faster algorithm run-times.
Theorem \ref{thm:pruning} is a restatement of Bellman's principle of optimality in the context of kinodynamic motion planning and taking into account the topological properties established in Chapter \ref{chap:topo}.
From this result, resolution completeness of the generalized label correcting method follows in Corollary \ref{cor:main} in the same way that completeness of a label correcting algorithm follows from the principle of optimality in graph search problems.  

A wide range of numerical experiments are presented in Chapter \ref{chap:glc} which confirm the theoretical results as well as suggest that this method is suitable for real-time planning applications.
To further improve the running time of the algorithm, Chapter \ref{chap:admissible_heuristics} addresses admissible heuristics for kinodynamic motion planning problems.
An admissibility condition for candidate heuristics is presented in Theorem \ref{thm:admissibility} that provides a tool for verifying the admissibility of candidate heuristics. 
Further, this condition characterizes a convex set of admissible heuristics which contains the optimal cost-to-go for a particular problem.
An infinite-dimensional linear program is formulated to optimize over the set of admissible heuristics, and it is shown in Theorem \ref{thm:HJBequivalence} that this linear program is equivalent to the Hamilton-Jacobi-Bellman equation.
Lastly, a relaxation of this linear program as a sum-of-squares program is proposed which provides provably admissible heuristics which are as close as possible to the optimal cost-to-go within a finite-dimensional subspace of polynomials.

To create a self-contained thesis, appendices containing a concise review of mathematical analysis, dynamical systems theory, and graph search algorithms are provided at the end of this document.

\renewcommand{\bibname}{Chapter References}
\bibliographystyle{amsalpha}
\bibliography{chap1}
\chapter{Problem Formulation}\label{chap:problem}
Consider a controlled dynamical system whose state space is $\mathbb{R}^{n}$, and whose input space is $\mathbb{R}^m$. 
Canonical state variables for a robot include generalized coordinates and momenta, but can also include relevant quantities such as currents and voltages in electronics.
Similarly, canonical input variables for a robot are the generalized forces applied to the robot, but higher fidelity models might include the torque as a state responding to inputs from a drive-train.
To reflect constraints such as obstacle avoidance and velocity limits, the state is restricted to remain in an open\footnote{with respect to the standard topology on $\mathbb{R}^n$.} subset of \emph{admissible states}\index{admissible states} $\xfree$ of $\mathbb{R}^n$ at each instant in time.
Similarly, to reflect the design limitations of the actuators, the control is restricted to a bounded subset of \emph{admissible control inputs}\index{admissible control inputs} $\Omega$ of   $\mathbb{R}^m$ at each instant in time.
Additionally, the motion planning objective is encoded with a terminal constraint that the executed motion terminates in an open set of \emph{goal states}\index{goal states} $\xgoal$ in $\mathbb{R}^n$.

A continuous function $x$ from a closed interval into $\mathbb{R}^n$ is called a \emph{trajectory}\index{trajectory}. 
A trajectory is a time history of states over some time interval. 
If $x$ is a trajectory, then $x(t)$ is the state on that trajectory at time $t$.
A measurable function $u$ from a closed time interval into $\Omega$ is called an \emph{input signal}\index{input signal}.
Like a trajectory, an input signal is a time history, now of control inputs.
Instantaneous changes in input are permitted as long as the control signal is mathematically well behaved (measurable). 
On the other hand, a model of the behavior of a system would not be particularly useful if it permitted instantaneous changes in state, hence the continuity requirement on trajectories (e.g. consider a robot moving instantaneously from one position to another).

\section{Decision Variables}
The input signal space \U of all input signals and trajectory space \Xxo of all trajectories starting from the state $x_{0}$ are defined
\begin{equation}\label{eq:signal_space}
\U\coloneqq\bigcup_{\tau>0}\left\{ u\in L_{1}([0,\tau]):\: u(t)\in\Omega\:\,\forall t\in[0,\tau]\right\}, 
\end{equation}
\begin{equation}\label{eq:traj_space}
\mathcal{X}_{x_0} \coloneqq \bigcup_{\tau>0}\left\{ x:[0,\tau]\rightarrow \mathbb{R}^n: \: x(0)=x_{0},\quad {\rm Lip}(x) \leq M \right\},
\end{equation}
where ${\rm Lip}(x)$ denotes the greatest lower bound\footnote{ Appendix \ref{app:orders}  defines  \emph{greatest lower bound}.} on the set of Lipschitz constants\footnote{Appendix \ref{app:continuity}  defines  \emph{Lipschitz continuity}.} of $x$.
It is important to note that the subscript $x_0$ in \eqref{eq:traj_space} is a parameter that can be varied to denote the space of trajectories originating from the initial condition $x_0$.
Figure \ref{fig:lipschitz} illustrates a trajectory with Lipschitz constant $M$.
%
%
Since the domains of functions in the sets \eqref{eq:signal_space} and \eqref{eq:traj_space} is variable, it will be useful to denote the \emph{terminal time}\index{terminal time} of a trajectory's or control signal's domain by $\tau(x)$ or $\tau(u)$ so that the \emph{terminal state}\index{terminal state} or \emph{terminal control input}\index{terminal control input} is given by $x(\tau(x))$ or $u(\tau(u))$ respectively.
\begin{wrapfigure}{R}{0.4\textwidth}
	\frame{\includegraphics[width=0.4\textwidth]{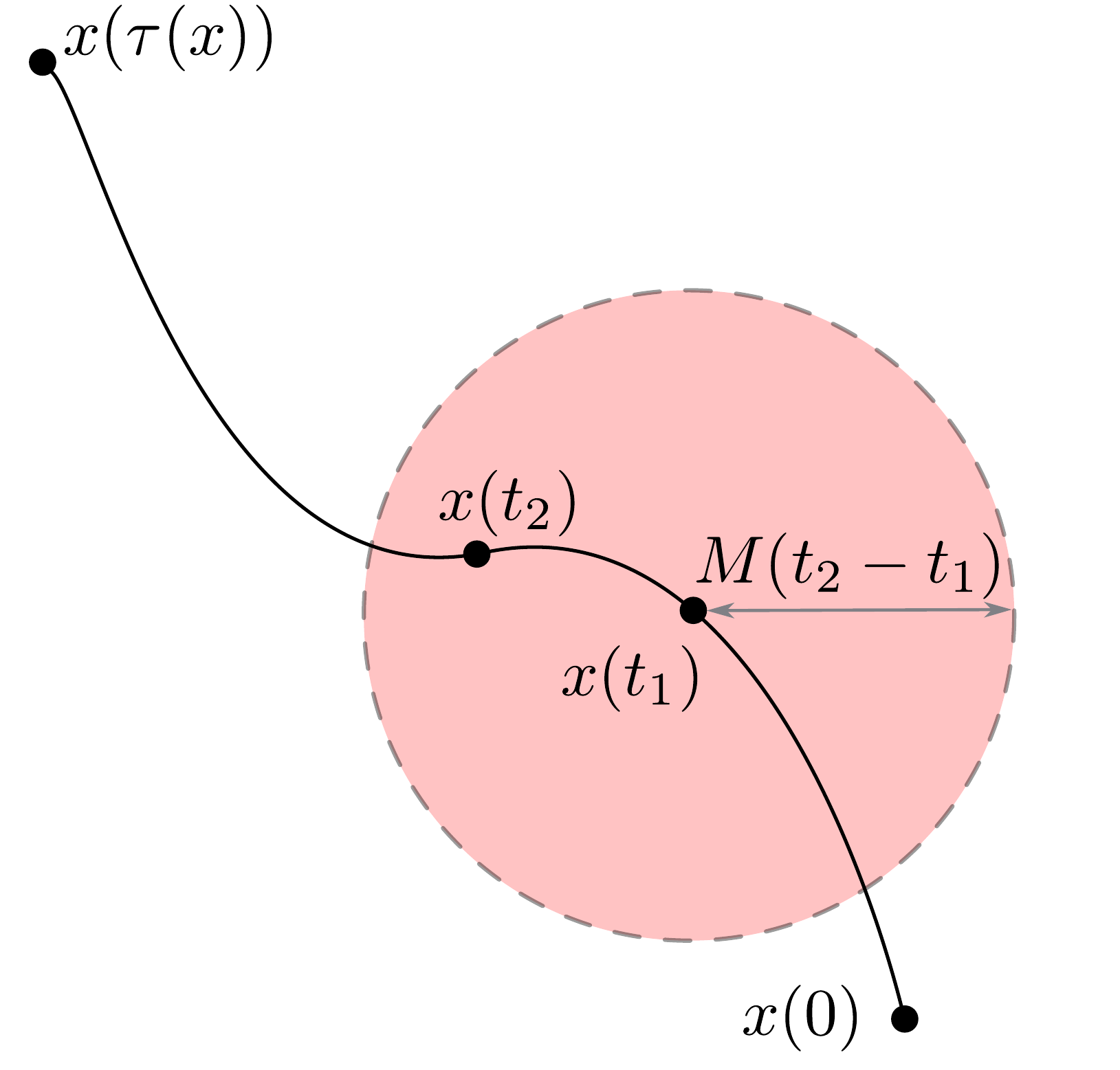}}
	\caption{The black line illustrates a trajectory $x$ with $\Vert x \Vert_{\rm Lip}\leq M$. The Lipschitz constant bounds the distance between the states at $t_1$ and $t_2$ are. }\label{fig:lipschitz}
\end{wrapfigure}

The motivation for defining these two sets comes from the dynamical model of the system\footnote{Taking the derivative of \eqref{eq:int_dynamics} results in the more familiar diffrential form which is equivalent to the integral form.} given in \eqref{eq:int_dynamics}.
The distinguishing feature between various system models is the function $f$ taking a state-control pair into $\mathbb{R}^n$.
Throughout this thesis, the function $f$ is assumed to have a known global Lipschitz constant $L_f$ in its first argument, is measurable in its second, and is bounded by a constant $M$.  
\begin{equation}\label{eq:int_dynamics}
x(t)=x_0+\int_{[0,t]}f(x(t),u(t)) \mu(dt).
\end{equation}
A trajectory $x$ is called a \emph{solution}\index{solution (integral equation)} to  \eqref{eq:int_dynamics} for a given input signal $u\in \mathcal{U}$ and initial condition $x_0\in\mathbb{R}^n$ if it has the same time domain as $u$ and satisfies the equation for each $t$ in the domain of $u$. 
The measure $\mu$ will refer to the usual Lebesgue measure\footnote{A review of Lebesgue integration is presented in Appendix \ref{app:functional_analysis}.} on $\mathbb{R}$.
Let $\varphi_{x_0}\subset \U \times \Xxo$ be a binary relation between the input signal space and trajectory space defined by $(u,x)$-pairs where $x$ is a solution to \eqref{eq:int_dynamics} with input signal $u$.
Theorem \ref{thm:existence-uniqueness} below states that the relation $\varphi_{x_0}$ is a function mapping \U into \Xxo. 
This will be important to the analysis presented in later chapters. 
\begin{theorem}\label{thm:existence-uniqueness}
	For each input signal in \U,  there is a unique solution to \eqref{eq:int_dynamics} in \Xxo.    
\end{theorem}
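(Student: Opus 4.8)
The plan is to recognize this as a standard existence-and-uniqueness theorem for integral (Carath\'eodory-type) equations and to adapt the classical Picard--Lindel\"of argument to the present setting, where the nonautonomous dependence enters through the measurable input signal $u$ and the hypotheses on $f$ are: a global Lipschitz constant $L_f$ in the state argument, measurability in the control argument, and a uniform bound $M$ on $\|f\|$. Fix $u\in\mathcal U$ with terminal time $\tau=\tau(u)$. Define the operator $T$ on the space of continuous functions $x:[0,\tau]\to\mathbb R^n$ with $x(0)=x_0$ by
\[
(Tx)(t)=x_0+\int_{[0,t]}f(x(s),u(s))\,\mu(ds).
\]
First I would check that $T$ is well defined: for a continuous $x$, the map $s\mapsto f(x(s),u(s))$ is measurable (composition of the continuous $s\mapsto x(s)$, the measurable $u$, and $f$ which is Lipschitz hence continuous in its first argument and measurable in its second), and it is bounded by $M$, so the integral exists and $Tx$ is well defined; moreover $Tx$ is Lipschitz with constant $\le M$, so $T$ maps $\mathcal X_{x_0}$ restricted to domain $[0,\tau]$ into itself, which is what delivers membership in $\mathcal X_{x_0}$ for the eventual fixed point.

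Next I would establish the contraction/iteration estimate. A fixed point of $T$ is exactly a solution of \eqref{eq:int_dynamics} with the same time domain as $u$, so existence and uniqueness of the solution reduce to existence and uniqueness of a fixed point of $T$. For two continuous functions $x,y$ with $x(0)=y(0)=x_0$, using the Lipschitz bound on $f$ in its first argument,
\[
\|(Tx)(t)-(Ty)(t)\|\le L_f\int_{[0,t]}\|x(s)-y(s)\|\,\mu(ds),
\]
and iterating this inequality $k$ times gives $\|(T^k x)(t)-(T^k y)(t)\|\le \frac{(L_f t)^k}{k!}\sup_{[0,\tau]}\|x-y\|$. Since $(L_f\tau)^k/k!\to 0$, some iterate $T^k$ is a contraction on the complete metric space $C([0,\tau];\mathbb R^n)$ (with the sup norm) — or, restricting attention to the closed subset fixing $x(0)=x_0$, which is itself complete — so by the Banach fixed point theorem (applied to $T^k$, whose unique fixed point is also the unique fixed point of $T$) there is exactly one fixed point. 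Uniqueness in $\mathcal X_{x_0}$ follows since any solution in $\mathcal X_{x_0}$ is in particular a continuous fixed point of $T$ on $[0,\tau]$, and we have just shown such a fixed point is unique; existence follows since the fixed point we constructed lies in $\mathcal X_{x_0}$ by the Lipschitz-bound observation above.

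The main obstacle, relative to the textbook ODE case, is handling the measurable (rather than continuous) time dependence carefully: one must justify measurability of $s\mapsto f(x(s),u(s))$ and the validity of the integral estimates in the Lebesgue sense, and confirm that the dominated-convergence / absolute-continuity properties needed (e.g. continuity of $t\mapsto\int_{[0,t]}f(x(s),u(s))\,\mu(ds)$, and the iterated-kernel bound) all go through with $\mu$ the Lebesgue measure. These are routine given the stated hypotheses on $f$ but are the points where the proof genuinely differs from the classical statement; everything else is the standard Picard iteration. I would also remark that the global (rather than local) Lipschitz hypothesis on $f$ is what lets us work on the fixed interval $[0,\tau]$ directly without a continuation argument.
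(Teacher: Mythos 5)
Your proposal is correct and follows essentially the same route as the paper. The paper treats this theorem as a direct corollary to a standard existence--uniqueness theorem for integral equations proved in its appendix on ODE models, and the supporting machinery the paper develops there --- the Banach fixed point theorem (Theorem \ref{thm:banach_fixed_point}) and the completeness of $(\mathscr{C},\Vert\cdot\Vert_\infty)$ --- is precisely the Picard/Banach contraction framework you spell out, with the iterated-kernel estimate $\frac{(L_f\tau)^k}{k!}$ being one of the standard ways to make the operator a contraction without shrinking the time interval; the observation that $Tx$ inherits the Lipschitz bound $M$, so the fixed point actually lands in $\mathcal X_{x_0}$, matches the reason the paper restricts the trajectory space to $M$-Lipschitz functions.
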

The function $\varphi_{x_0}$ will be called the \emph{system map}\index{system map} in light of this observation.
This is a straight forward corollary to the standard existence-uniqueness theorem for integral equations provided in  Appendix \ref{sec:ode_models}.

\section{Problem Specifications}
We will denote the subset of trajectories which originate from a given \emph{initial condition}\index{initial condition} $x_{ic}$ and remain within the set $\xfree$ by  $\Xfree$.
\begin{equation}
\Xfree \coloneqq \left\{ x\in \mathcal{X}_{x_\mathrm{ic}}:\,\, x(t)\in X_\mathrm{free}\,\, \forall t\in [0,\tau (x)] \right\}.
\end{equation}
\begin{wrapfigure}{R}{0.4\textwidth}
	\frame{\includegraphics[width=0.4\textwidth]{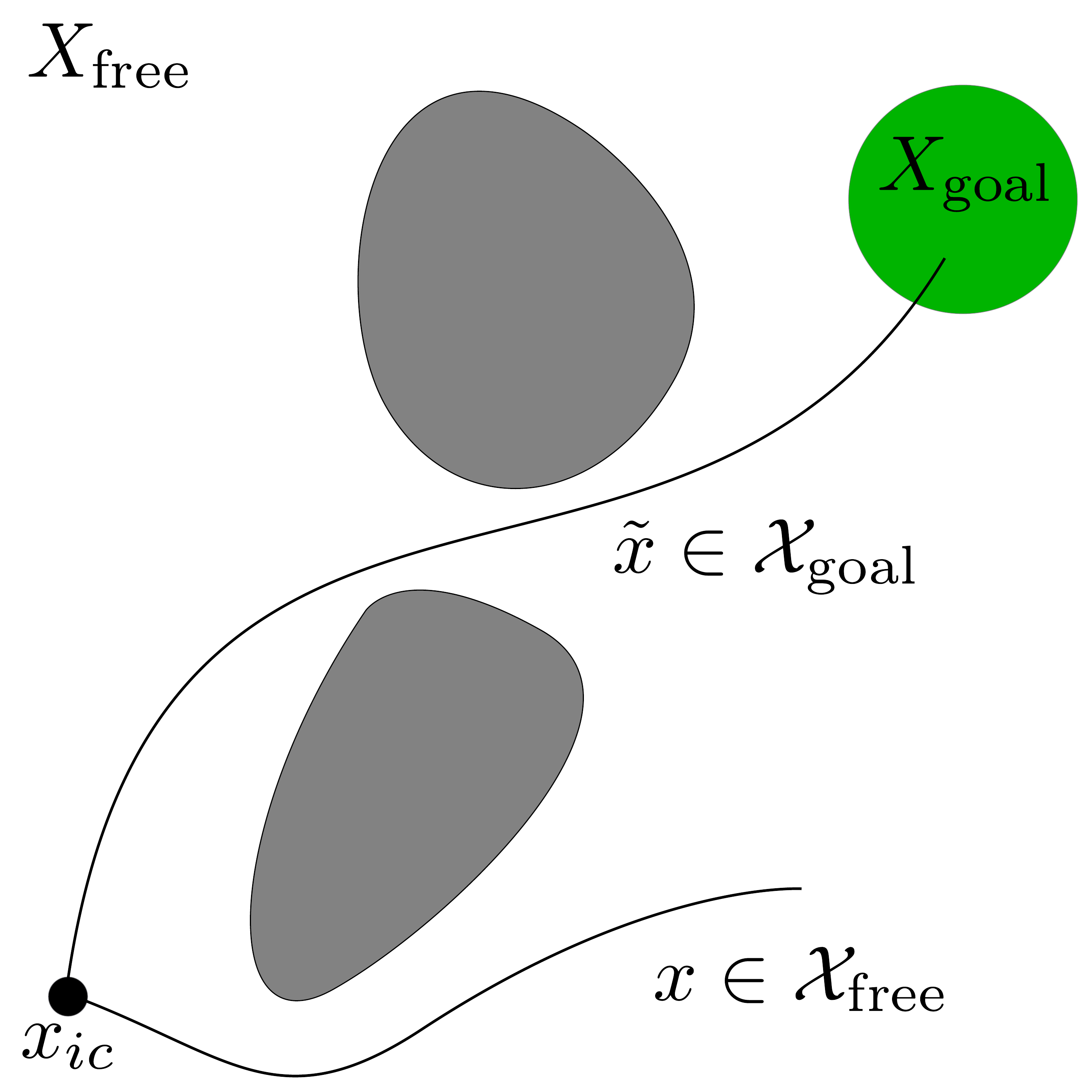}}
	\caption{The admissible states \xfree consist of the rectangular region excluding the gray obstacles. The goal region \xgoal is the green circle. $\tilde{x}$ depicts a trajectory in \Xgoal while $x$ shows a trajectory in \Xfree.}\label{fig:traj_samples}
\end{wrapfigure}
%
%
The subset of trajectories in \Xfree which additionally terminate in \xgoal is denoted \Xgoal.
\begin{equation}
\Xgoal\coloneqq \left\{x\in \Xfree:\, x(\tau(x))\in \xgoal\right\}.
\end{equation}
Figure \ref{fig:traj_samples} illustrates example trajectories in \Xgoal and \Xfree.

The subset of input signals in \U such that $\varphi_{x_{ic}}(u)\in\Xfree$ is denoted \Ufree.
Similarly, the subset of input signals \U such that $\varphi_{x_{ic}}(u)\in \Xgoal$ is denoted \Ugoal.
It is important to remark that \Ufree and \Ugoal are the inverse images of \Xfree and \Xgoal under the system map $\varphi_{x_{ic}}$,
\begin{equation}\label{eq:solution_sets}
\Ufree=\varphi^{-1}_{x_{ic}}(\Xfree),\qquad \Ugoal=\varphi^{-1}_{x_{ic}}(\Xgoal).
\end{equation}
The first part of the problem specification is to find a signal $u\in \Ugoal$. 

Since there are often many input signals in \Ugoal, we additionally would like to minimize a performance objective.
Performance objectives of the form 
\begin{equation}\label{eq:cost}
J_{x_{0}}(u)=\int_{[0,\tau(u)]}g([\varphi_{x_{0}}(u)](t),u(t))\, \mu(dt),
\end{equation}
are considered.
The function $g$ from $\xfree \times \Omega$ into $(0,\infty)$ is a \emph{running cost}\index{running cost} which gives a rate of accumulation of cost for each state-control pair.
It is assumed that a global Lipschitz constant $L_g$ for $g$ on $\xfree \times \Omega$ is known. 

Since the state trajectory is completely determined by the input signal and initial condition, the cost functional $J$ is considered a mapping from \U into $(0,\infty)$ parameterized by the initial condition.
It will be convenient to extend the domain of $J_{x_0}$ with a symbol \NULL where $J_{x_0}(\NULL)=\infty$ for every $x_0$.
We will also adopt the convention that $\inf_{u\in\emptyset}J_{x_\mathrm{ic}}(u)=\infty$.

Intuitively, we would like to minimize the cost $J$ over the set \Ugoal. 
However, a particular problem instance may not admit a minimizer or even a solution at all.
Therefore, we will seek a solution to the following \emph{relaxed optimal kinodynamic motion planning problem}\index{relaxed optimal kinodynamic motion planning problem}:
\begin{problem}\label{Problem}
	Given an initial condition $x_{ic}$, a set of admissible states $\xfree$, a set of goal states $\xgoal$, a set of admissible control inputs $\Omega$, a dynamic model $f$, and a running cost $g$; find a sequence $\{u_{R}\}\subset\mathcal{U}_\mathrm{goal}\cup \NULL$ such that 
	\begin{equation}
	\lim_{R\rightarrow\infty}J_{x_\mathrm{ic}}(u_R)=\inf_{u\in\mathcal{U}_\mathrm{goal}}J_{x_\mathrm{ic}}(u)\coloneqq c^{*}.\label{eq:meaningful_problem}
	\end{equation}
\end{problem}
An algorithm parameterized by a resolution $R \in \mathbb{N}$ whose output for each $R$ forms a sequence solving this problem will be called \emph{resolution complete}\index{resolution complete}.

\section{Review of Assumptions}\label{sec:assumptions}
To summarize the problem formulation, the \emph{problem data}\index{problem data} for the optimal kinodynamic motion planning problem is $(x_{ic},\xfree,\xgoal,\Omega,f,g)$, where $x_{ic}$ is an initial condition, $X_{\rm free}$ is the set of admissible states, \xgoal is the set of goal states, $f$ is the model of the system dynamics, and $g$ is the running cost.
The assumptions on the problem data described in this section are:
\begin{enumerate}[label=A-\arabic*,itemindent=0.25cm]
	\item The sets \xfree and \xgoal are open with respect to the topology induced by the Euclidean distance. 
	\item The set of admissible inputs $\Omega$ is bounded, and a bound $u_{\rm max}$ is known. 
	\item The running cost $g$ is strictly positive and Lipschitz continuous in both arguments with a known Lipschitz constant $L_g$.
	\item The dynamic model $f$ is Lipschitz continuous in its first argument with a known Lipschitz constant $L_f$, Lebesgue measurable in its second argument, and bounded in both arguments by $M$.
\end{enumerate}

There are two important things to consider regarding the assumptions of the input data to an algorithm or method.
First, the applicability of a method requiring the assumptions to be met depends on how general the assumptions are.
Secondly, given an instance of problem data, discerning whether it meets the assumptions should be an easy decision problem, decidable with an algorithm of lesser complexity than the algorithm the problem data is being given to. 
Otherwise, there is little value to the algorithm.

The formulation and assumptions of this chapter are quite general and can be discerned by inspection from typical problem data. 
For example, if \xfree is described by the union of a finite set of closed polyhedra and \xgoal is the union of a set of open polyhedra, then A-1 is satisfied.
Similarly, robot actuators have clear design limitations bounding the set of admissible inputs so that A-2 is satisfied in all practical instances. 
Assumption A-3 is often verified by inspection of the running cost $g$.
For example, minimum time problems with $g(z,w)=1$ are among the most frequently discussed objectives. 
The Lipschitz constant in this case is $0$.
Lipschitz continuity of the dynamics is generally required for dynamical models to be well defined. 
Therefore, most models derived from physical laws will satisfy A-4.  
As a final remark on the generality of this formulation, it does not require any controllability properties of the dynamic model $f$ as is the case in~\cite{Li2016Asymptotically-,karaman2010optimal}.
Although this makes the formulation more general it does not necessarily add to the practical value since there are few applications where motion planning is used on an uncontrollable system.

\subsection{Verifiability of Assumptions}
Most well known sampling-based motion planning algorithms impose an abstract assumption on the problem data for the algorithm to have the desired theoretical guarantees.
In~\cite{kavraki1998analysis} the \emph{$\varepsilon$-goodness}\index{$\varepsilon$-goodness} property was the basis of an analysis of the \PRM algorithm. 
In~\cite{hsu1997path} the \emph{expansiveness}\index{expansiveness} property was required of the problem data to prove the probabilistic completeness of the \EST algorithm.
Similarly, in~\cite{RRT_Journal} the existence of an \emph{attraction sequence}\index{attraction sequence} is required to prove the probabilistic completeness of the \RRT algorithm.
Lastly, in~\cite{karaman2011sampling}, probabilistic completeness of the \RRTs algorithm relies on the problem data satisfying a  \emph{$\delta$-robustness}\index{$\delta$-robustness} property. 

While these assumptions are precisely defined, it is not clear how to verify if these properties are satisfied, whether this subset of problem instances are of practical value, or if there even exists a problem instance satisfying a particular assumption.
These are important open problems in motion planning that have received little attention.

The following chapter uses techniques from the subject of topology to develop a  foundation of tools for analyzing the optimal kinodynamic motion planning problem.
A brief introduction to topology is presented in Appendix \ref{app:topo}.

To motivate the use of these techniques, we apply them to formally prove the intuitive fact that an open set of admissible states is sufficient for the  $\delta$-robustness assumption of the \RRTs algorithm.
The \RRTs algorithm requires that feasible problem instances be \emph{$\delta$-robustly feasible}\index{$\delta$-robust feasibility}.
That is, feasible problem instances must have a trajectory $x\in\Xgoal$, such that for some $\delta>0$ the trajectory satisfies 
\begin{equation}
B_\delta(x(t))\subset \xfree,\quad \forall t\in [0,\tau(x_1)].
\end{equation}  

\begin{lemma}\label{lem:rrt_glc}
	If \xfree is open, then every feasible problem instance ($\Xgoal\neq \emptyset$) is $\delta$-robustly feasible.
\end{lemma}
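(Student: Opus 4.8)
The plan is to exploit the compactness of a feasible trajectory's image together with the openness of $\xfree$ via a standard separation argument. First I would fix any trajectory $x \in \Xgoal$, which exists by the hypothesis $\Xgoal \neq \emptyset$, and consider its image $K \coloneqq x([0,\tau(x)])$. Since $x$ is Lipschitz (hence continuous) and $[0,\tau(x)]$ is a compact interval, $K$ is a compact subset of $\mathbb{R}^n$; and because $\Xgoal \subset \Xfree$, the defining property of $\Xfree$ gives $K \subset \xfree$.

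Next I would separate $K$ from the complement $C \coloneqq \mathbb{R}^n \setminus \xfree$. If $C = \emptyset$ then $\xfree = \mathbb{R}^n$ and $B_\delta(x(t)) \subset \xfree$ holds trivially for, say, $\delta = 1$, so assume $C \neq \emptyset$. Since $\xfree$ is open, $C$ is closed, and the map $y \mapsto d(y,C) \coloneqq \inf_{z\in C}\Vert y-z\Vert$ is continuous (in fact $1$-Lipschitz) and strictly positive exactly on $\xfree$. Restricting $d(\cdot,C)$ to the compact set $K$, it attains its minimum at some $y^{\star} \in K$, and since $y^{\star} \in \xfree$ that minimal value $d(y^{\star},C)$ is strictly positive. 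I would then set $\delta \coloneqq \frac{1}{2}\, d(y^{\star},C) > 0$.

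Finally I would verify the robustness inclusion. Fix $t \in [0,\tau(x)]$ and any $w$ with $\Vert w - x(t)\Vert \leq \delta$. The reverse triangle inequality for $d(\cdot,C)$ gives $d(w,C) \geq d(x(t),C) - \Vert w - x(t)\Vert \geq 2\delta - \delta = \delta > 0$, since $x(t) \in K$ and hence $d(x(t),C) \geq d(y^{\star},C) = 2\delta$. Therefore $w \notin C$, i.e. $w \in \xfree$, so $B_\delta(x(t)) \subset \xfree$ for every $t \in [0,\tau(x)]$, which is precisely the statement that the instance is $\delta$-robustly feasible.

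I do not anticipate a genuine obstacle: the argument is the classical fact that a compact set and a disjoint closed set are separated by a positive distance. The only points needing a little care are the degenerate case $\xfree = \mathbb{R}^n$ (handled separately above) and the harmless ambiguity between open and closed balls, which is why I take $\delta$ to be half the minimal distance rather than the distance itself, so the inclusion holds under either convention.
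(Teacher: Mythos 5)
Your proof is correct and takes essentially the same route as the paper's: both extract a trajectory $x\in\Xgoal$, observe that its image $x([0,\tau(x)])$ is a compact subset of the open set $\xfree$, and then invoke the standard fact that a compact set inside an open set has a uniform positive clearance $\delta$. The only difference is presentational: the paper cites this last step to Lemma~\ref{lem:compact_image} and Corollary~\ref{cor:compact_closed} from its appendix (where it is proved by a sequential-compactness contradiction argument), whereas you prove the separation inline via the $1$-Lipschitz distance function $d(\cdot,\xfree^{c})$ attaining a positive minimum on the compact image. Both are instances of the same classical argument, and your extra care (handling $\xfree=\mathbb{R}^n$, halving the distance to be agnostic about open versus closed balls) is harmless but not needed.
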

\begin{proof}
	Let $x$ be a trajectory in \Xgoal. 
	Since $x$ is continuous, $x([0,\tau(x)])$ is a compact subset of \xfree for all $t$ in the interval $[0,\tau(x)]$ (cf. Lemma \ref{lem:compact_image}). By assumption, \xfree is open so there exists a $\delta>0$ such that $B_{\delta}(x(t))\subset \xfree$ for all $t$ in $[0,\tau(x)]$ (cf. Corollary \ref{cor:compact_closed}) which is the definition of $\delta$-robust feasibility.

\end{proof}

\bibliographystyle{amsalpha}
\bibliography{chap2}
\chapter{Topological Properties of the Input Signal and Trajectory Spaces}\label{chap:topo}

The problem formulation will use concepts from general topology to establish basic properties used by the \GLC method.
The function spaces \Xic and \U are equipped with metrics in order to perform an analysis on the subsets \Xfree, \Xgoal, \Ufree, and \Ugoal in their respective metric topologies. 
It will follow from the assumption that \xfree and \xgoal are open in the standard topology on $\mathbb{R}^n$ that \Xfree and \Xgoal are open in the metric topology on \Xic.
We then review a result based on Gronwall's inequality\footnote{Gronwall's inequality is discussed in Appendix \ref{app:functional_analysis}.} which shows $\varphi_{x_{ic}}$ is a continuous mapping from \U into \Xic. 
A direct consequence of these observations is \Ufree and \Ugoal are open subsets of \U since the inverse image of an open set under a continuous function is open\footnote{This is the notion of continuity for functions on topological spaces. The relation to the definition for metric spaces is discussed in Lemma \ref{app:ep_delt_cont}.}.      
The final important observation is the continuity of the cost function from \U into $\mathbb{R}$. 

These results will be used in later chapters as follows: A dense subset of \U will be constructed in Chapter \ref{chap:approximation} as an approximation of \U. 
Since \Ufree and \Ugoal are open in \U, the approximation will also be dense in \Ufree and \Ugoal. 
Next, the image of the approximation, intersected with \Ufree and \Ugoal, under the cost function $J_{x_{ic}}$ will be dense in $J_{x_{ic}}(\Ufree)$ and $J_{x_{ic}}(\Ugoal)$ so that a signal, as will be seen, in the approximation of \Ugoal achieves the optimal cost with arbitrarily high accuracy.
Lastly, in Chapter \ref{chap:glc} the continuity of the cost function is used once again to identify a large subset of the approximation which can be discarded without compromising the accuracy with which it approximates a signal with the optimal cost.

\section{Metrics on the Signal and Trajectory Spaces}

The input signal and trajectory spaces become metric spaces when equipped with the metrics $d_{\U}$ and $d_{\Xxo}$  adapted from~\cite{yershov2011sufficient}: 
\begin{equation}\label{eq:du}
d_{\mathcal{U}}(u_{1},u_{2})\coloneqq\int_{[0,\min\{\tau(u_{1}),\tau(u_{2})\}]}\Vert u_{1}(t)-u_{2}(t)\Vert_{2} \, \mu(dt)+u_{max}|\tau(u_{1})-\tau(u_{2})|.
\end{equation}
\begin{equation}
d_{\mathcal{X}}(x_{1},x_{2})\coloneqq\max_{t\in\left[0,\min\{\tau(x_{1}),\tau(x_{2})\}\right]}\left\{ \left\Vert x_{1}(t)-x_{2}(t)\right\Vert \right\} +M|\tau(x_{1})-\tau(x_{2})|.\label{eq:dx}
\end{equation}
Figures \ref{fig:du} and \ref{fig:dx} illustrate these two metrics. 
Several results presented in this chapter regarding these metrics are simple variations of results in~\cite{yershov2011sufficient}. However, appropriately modified proofs are provided for the version of these results needed in later chapters.
\begin{figure}
	\centering
	\includegraphics[width=0.8\textwidth]{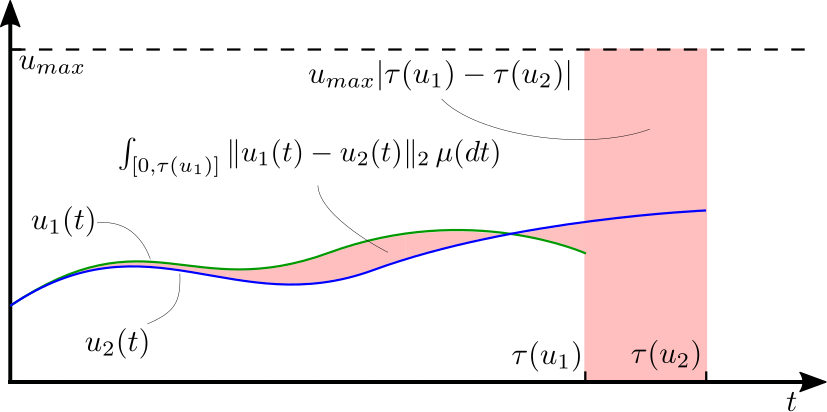}
	\caption{The distance $d_\U(u_1,u_2)$ between two signals is illustrated by the region shaded in red. Note that the signals have different time domains, and that the second term in \eqref{eq:du} describes the worst-case difference on the interval $[\tau(u_1),\tau(u_2)]$ between the two signals if $u_1$ were extended to have the same domain as $u_2$.}\label{fig:du}
\end{figure}
\begin{figure}
	\centering
	\includegraphics[width=0.8\textwidth]{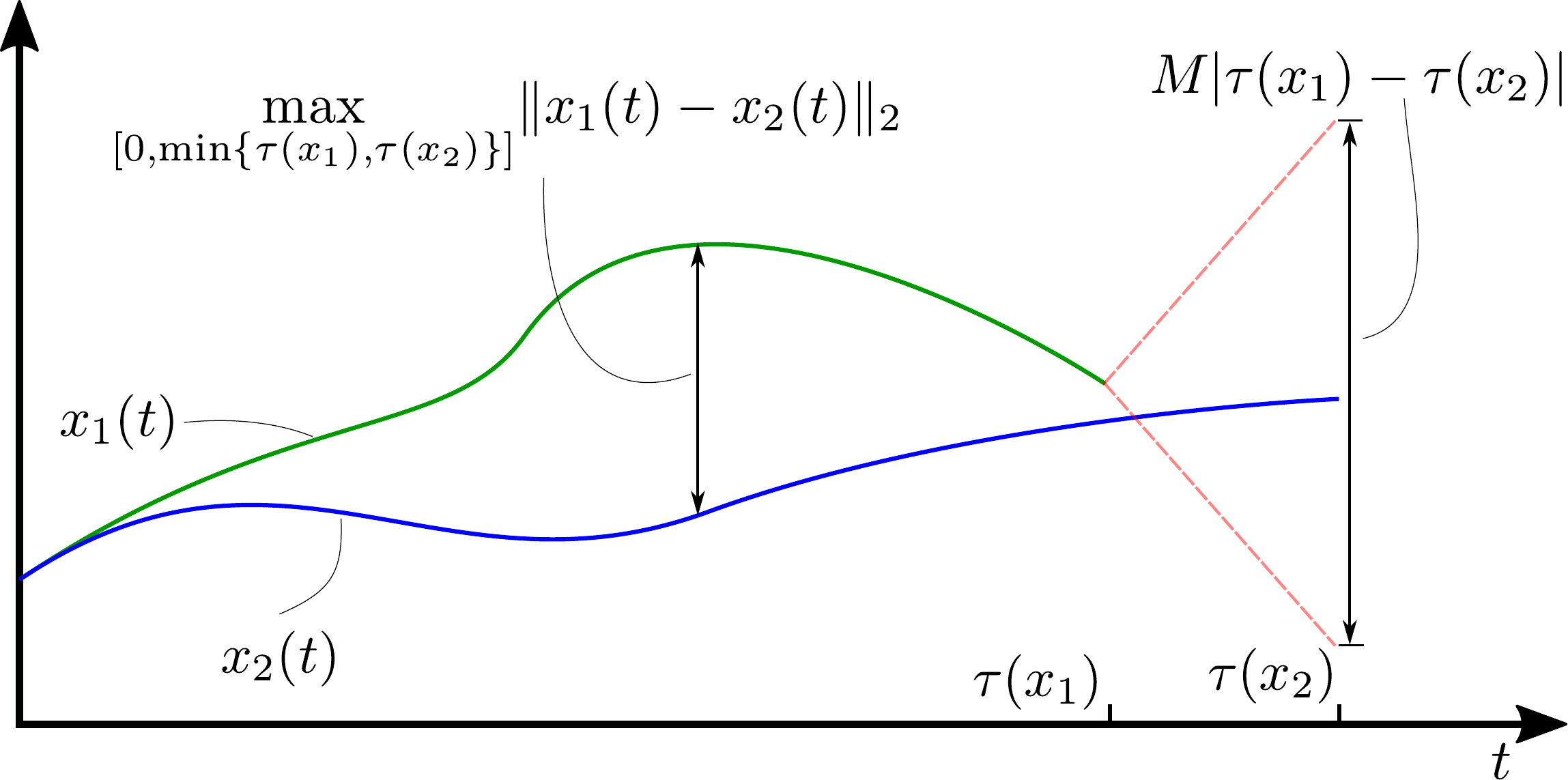}
	\caption{The two terms making up the expression for $d_\X(x_1,x_2)$  are illustrated by two annotated distances. Observe that the second term in \eqref{eq:dx} describes the worst case distance between the two if $x_1$ were extended to have the same domain as $x_2.$}\label{fig:dx}
\end{figure}

While these distance functions have been utilized as metrics in the literature, verification that they satisfy the definition of a metric have not been published. 
These proofs are provided in Appendix \ref{app:pfs}.

\section{Continuity of the System Map $\varphi_{x_0}$ and Cost Functional $J_{x_0}$}\label{sec:continuity}
\begin{wrapfigure}{l}{0.4\textwidth}
	\frame{\includegraphics[width=0.4\textwidth]{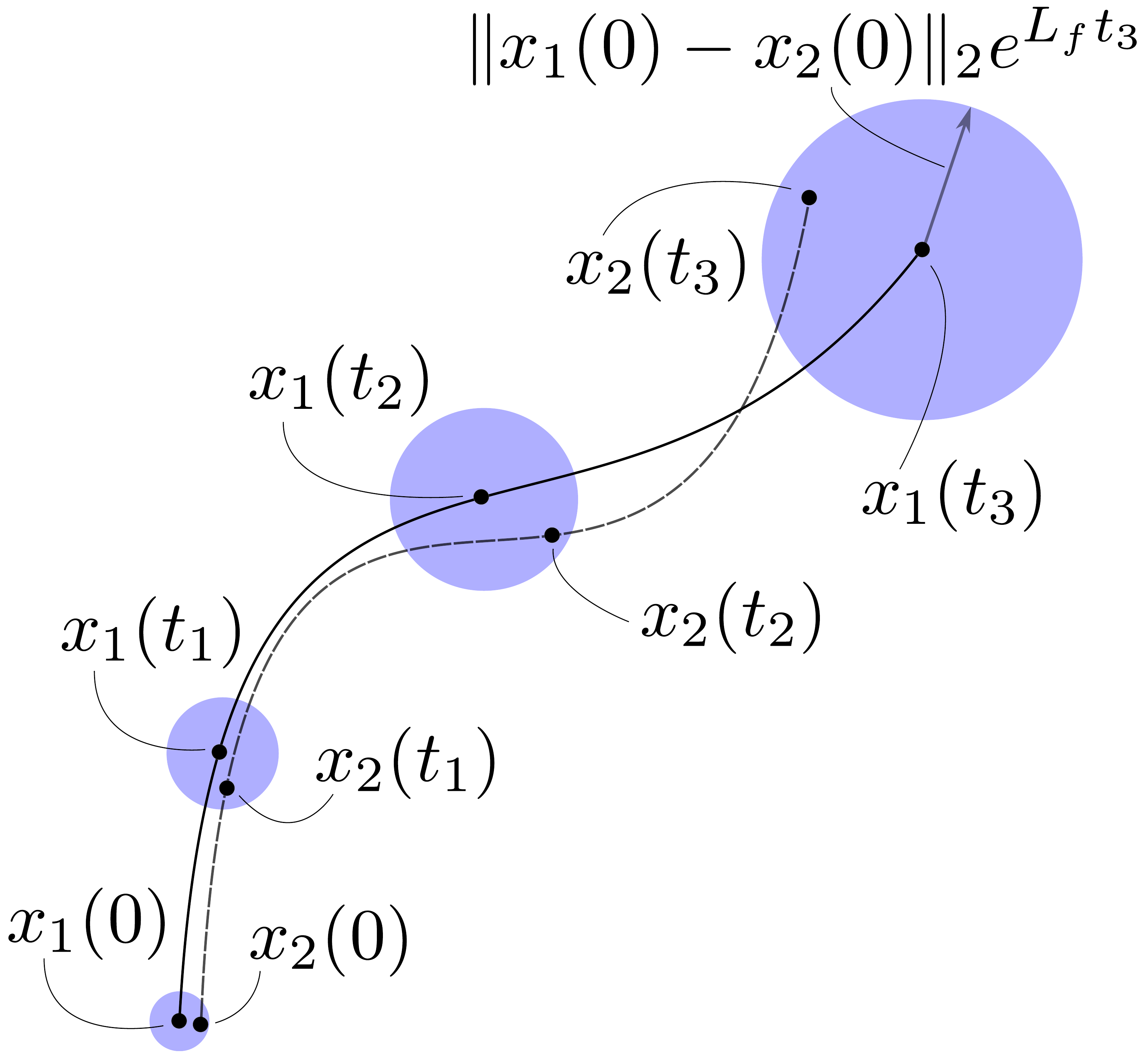}}
	\caption{Illustration of Lemma \ref{lem:cont_ic}. The distance of $x_2(t)$ from $x_1(t)$ is bounded by the initial distance multiplied by $e^{L_f t}$. The circles illustrate possible locations of each $x_2(t_i)$ relative to $x_1(t_i)$ predicted by the inequality.}\label{fig:cont_ic}
\end{wrapfigure}
A classical result in dynamical systems theory is the continuity of trajectories with respect to initial conditions.
Lemma \ref{lem:cont_ic} is a restatement of this result with the appropriate notation.
\begin{lemma}\label{lem:cont_ic}
	Let $u$ be an input signals in \U; $x_0,z_0$ initial conditions in $\mathbb{R}^n$; and $x_1\coloneqq \varphi_{x_0}(u)$, $x_2\coloneqq \varphi_{z_0}(u)$ the corresponding trajectories. Then the difference between these trajectories satisfies the inequality
	\begin{equation}\label{eq:cont_ic}
		\left\Vert x_1(t)-x_2(t)\right\Vert_2 \leq \Vert x_0-z_0 \Vert_2 e^{L_f t}.
	\end{equation} 
\end{lemma}
The standard proof can be found in Appendix \ref{sec:ode_models}.
Figure \ref{fig:cont_ic} illustrates the inequality in \eqref{eq:cont_ic}.

\begin{lemma}[\cite{yershov2011sufficient}]
	The system map $\varphi_{x_0}$ is continuous from $(\U,d_\U)$ into $(\Xxo,d_\X)$.
\end{lemma}
\begin{proof}
	Let $u_1,u_2$ be input signals in \U, ordered so that $\tau(u_1)\leq \tau(u_2)$. 
	We will show that the map $\varphi_{x_0}$ is continuous at $u_1$.
	Denote trajectories $\varphi_{x_{0}}(u_{1})$ and $\varphi_{x_{0}}(u_{2})$ by $x_{1}$ and $x_{2}$ respectively.
	For $t$ in the time interval $[0,\tau(u_1)]$, it follows from equation \eqref{eq:int_dynamics} that
	\begin{equation}\Stretch
	\begin{array}{rcl}
		\Vert x_1(t)-x_2(t) \Vert_2 &=& \left\Vert \int_{[0,t]} f(x_1(\tau),u_1(\tau))\, \mu(d\tau) - \int_{[0,t]} f(x_2(\tau),u_2(\tau))\, \mu(d\tau)\right\Vert_2\\
		&\leq&  \int_{[0,t]}\left\Vert  f(x_1(\tau),u_1(\tau)) - f(x_2(\tau),u_2(\tau))\, \right\Vert_2 \mu(d\tau).
	\end{array}
	\end{equation}
	Then using the Lipschitz continuity of the dynamic model in A-4,
	\begin{equation}\Stretch
		\begin{array}{rcl}
		\Vert x_1(t)-x_2(t) \Vert_2 &\leq& \int_{[0,t]} L_f\Vert x_1(\tau)-x_2(\tau)\Vert_2+L_f\Vert u_1(\tau)-u_2(\tau)\Vert_2\, \mu(d\tau)\\
		\Vert x_1(t)-x_2(t) \Vert_2 &\leq& \int_{[0,t]} L_f\Vert x_1(\tau)-x_2(\tau)\Vert_2\,\mu(d\tau)+L_f\int_{[0,t]}\Vert u_1(\tau)-u_2(\tau)\Vert_2\, \mu(d\tau)\\
		
		&\leq& L_fd_\U(u_1,u_2)+\int_{[0,t]} L_f\Vert x_1(\tau)-x_2(\tau)\Vert_2\, \mu(d\tau).
		\end{array}
	\end{equation}
	Then by Gronwall's inequality (cf. Lemma \ref{lem:gw_ineq}),
	\begin{equation}\label{eq:cont_pf_1}
				\Vert x_1(t)-x_2(t) \Vert_2 \leq L_fd_\U(u_1,u_2) e^{L_ft}.
	\end{equation}
	On the remaining time domain $[\tau(u_1),\tau(u_2)]$, observe that $d_\U(u_1,u_2)<\delta$ implies $|\tau(u_1)-\tau(u_2)|<d_\U(u_1,u_2)/u_{max}$.  Therefore,
	\begin{equation}\label{eq:cont_pf_2}
		\begin{array}{rcl}
		M|\tau(x_1)-\tau(x_2)|&=& M|\tau(u_1)-\tau(u_2)|\\
		&<&Md_\U(u_1,u_2)/u_{max}.
		\end{array}
	\end{equation}
	Then $d_\X(x_1,x_2)$ can be bounded as a function of $\delta$ by combining \eqref{eq:cont_pf_1} and \eqref{eq:cont_pf_2},
	\begin{equation}\Stretch\label{eq:cont_mod}
	\begin{array}{rcl}
		d_\X(x_1,x_2)&=&\underset{{t\in[0,\min\{\tau(x_1),\tau(x_2)\}]}}{\max}\{ \Vert x_1(t)-x_2(t) \Vert_2\} + M|\tau(x_1)-\tau(x_2)|\\
		&<& L_f d_\U(u_1,u_2) e^{L_f \tau(u_1)}+ Md_\U(u_1,u_2)/u_{max}\\
		&=& (L_fe^{L_f\tau(u_1)}+M/u_{max})d_\U(u_1,u_2).
		\end{array}
	\end{equation}
	Thus, the map $\varphi_{x_0}$ continuous at $u_1$ since for any $\varepsilon>0$, the signal $u_2$ can be chosen sufficiently close to $u_1$ so that $d_\X(x_1,x_2)<\varepsilon$, which is equivalent to the definition of continuity    
	(cf. Lemma \ref{app:ep_delt_cont}).
\end{proof}
Note that the continuity is not uniform because of the appearance of $\tau(u_1)$ in \eqref{eq:cont_mod}.

\section{Properties of the Set of Solutions}\label{sec:open_sets}
\begin{figure}
	\centering
	\frame{\includegraphics[width=0.6\textwidth]{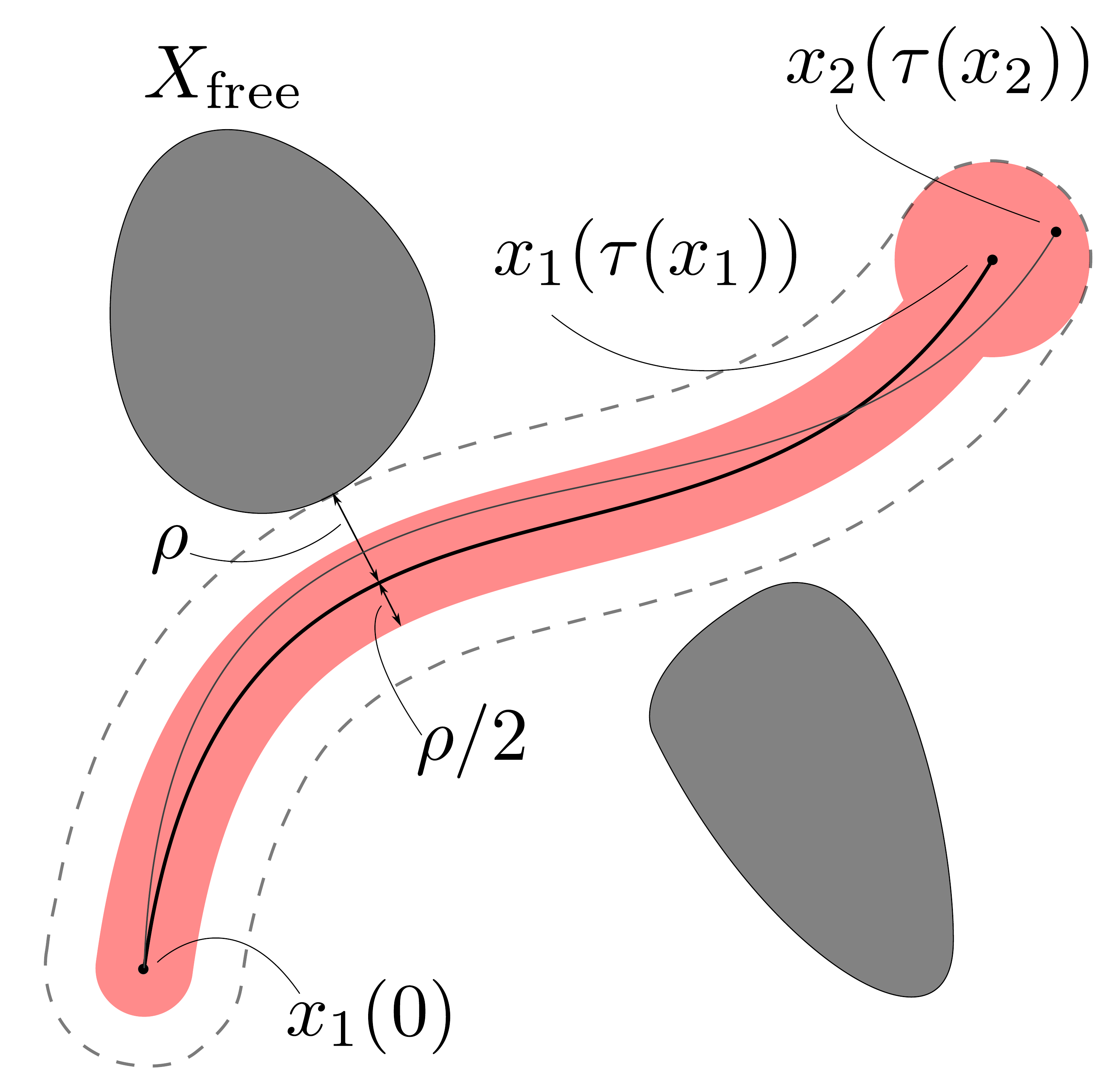}}
	\caption{Illustration of the trajectories $x_1$ and $x_2$ discussed in Lemma \ref{lem:xfree_open}. Any trajectory $x_2$ that is within a distance (in the sense of $d_\X$) of $\rho/2$ of $x_1$, by definition, remains in the red region around $x_1$. }\label{fig:X_open}
\end{figure} 
The first important observation is that \Xfree and \Xgoal are open in the metric topology defined by $d_\X$ when \xfree and \xgoal are open in the standard topology. 
\begin{lemma}[\cite{yershov2011sufficient}]\label{lem:xfree_open}
	 \Xfree is an open subset of \Xic in the topology induced by $d_\X$.
\end{lemma}
\begin{proof}
	Either \Xfree is empty, in which case it is open (cf. Appendix \ref{app:topo}), or it is non-empty. Assume the latter case and let $x_1$ be an element of \Xfree. Since $x_1$ is a Lipschitz continuous function from the compact interval $[0,\tau(x)]$ into $\mathbb{R}^n$, its image, $x_1([0,\tau(x)])$, is also compact (cf. Lemma \ref{lem:compact_image}). Next, since $\xfree$ is open, and $x([0,\tau(x_1)])$ is compact, there exists a $\rho>0$ such that $B_{\rho}(x_1(t))\subset \xfree$ for all $t\in[0,\tau(x_1)]$ (cf. Corollary \ref{cor:compact_closed}). 
	Now consider a trajectory $x_2$ in the ball of radius $\rho/2$ centered at $x_1$ in \Xic (note that this is in the function space \Xic). Then, from the definition of $d_\X$,
\begin{equation}\label{eq:pf_def_ref}
\max_{t\in\left[0,\min\{\tau(x_{1}),\tau(x_{2})\}\right]}\left\{ \left\Vert x_{1}(t)-x_{2}(t)\right\Vert_2 \right\} +M|\tau(x_{1})-\tau(x_{2})|<\rho/2.
\end{equation}
The first term in \eqref{eq:pf_def_ref} implies, that on the interval $[0,\min\{\tau(x_{1}),\tau(x_{2})\}]$, the distance between $x_1(t)$ and $x_2(t)$ is within $\rho/2$.
If $\tau(x_2)\leq \tau(x_1)$, then $x_2$ is necessarily in \Xfree. 
Now suppose that $\tau(x_1)<\tau(x_2)$. Then for $t>\tau(x_1)$,
\begin{equation}\label{eq:triangle_pf}
\Vert x_2(t)-x_1(\tau(x_1))\Vert_2 \leq \Vert x_1(\tau(x_1))-x_2(\tau(x_1)) \Vert_2 + \Vert x_2(\tau(x_1))- x_2(t) \Vert_2.
\end{equation} 
The last term in \eqref{eq:triangle_pf} is further bounded by the Lipschitz constant for $x_2$,
\begin{equation}\Stretch\label{eq:triangle_pf_ctd}
\begin{array}{rcl}
\Vert x_2(t)-x_1(\tau(x_1))\Vert_2 &\leq & \Vert x_1(\tau(x_1))-x_2(\tau(x_1)) \Vert_2 + M|t-\tau(x_1)|\\
&\leq & \Vert x_1(\tau(x_1))-x_2(\tau(x_1)) \Vert_2 + M|\tau(x_2)-\tau(x_1)|\\
&\leq& \rho.
\end{array}
\end{equation} 
Thus, at any time $t$, $x_2(t)$ is within a distance of $\rho$ from some point in $x_1([0,\tau(x_1)])$. Therefore, $x_2$ is in \Xfree. Since the choice of $x_2$ was arbitrary, $x_1$ is an interior point of \Xfree; since the choice of $x_1$ was arbitrary, \Xfree is open.
    
\end{proof}
The trajectories discussed in the proof are illustrated in Figure \ref{fig:X_open}.
An analogous observation holds for the set \Xgoal.
\begin{lemma}[\cite{yershov2011sufficient}]\label{lem:xgoal_open}
	\Xgoal is an open subset of \Xic with respect to the topology induced by $d_\X$.
\end{lemma} 
\begin{proof}
	Either \Xgoal is empty, in which case it is open, or it is non-empty. 
	Assume the latter case and let $x_1$ be a trajectory in \Xgoal in the metric subspace $(\Xfree,d_\X)$. 
	Since \xgoal is open, the terminal point $x_1(\tau(x_1))$ is an interior point of \xgoal, so there exists a $\rho>0$ such that $B_{\rho/2}(x_1(\tau(x_1)))$ is a subset of \xgoal. 
	Now consider a trajectory $x_2$ such that $d_\X(x_1,x_2)<\rho/2$. 
	From the definition of $d_\X$,
	\begin{equation}\label{eq:xopen_pf}
	\max_{t\in\left[0,\min\{\tau(x_{1}),\tau(x_{2})\}\right]}\left\{ \left\Vert x_{1}(t)-x_{2}(t)\right\Vert_2 \right\} +M|\tau(x_{1})-\tau(x_{2})|<\rho/2.
	\end{equation}
	From \eqref{eq:xopen_pf}, the difference in terminal times of $x_1$ and $x_2$ satisfies the bound 
	\begin{equation}
	|\tau(x_1)-\tau(x_2)|<\rho/(2M).
	\end{equation} 
	Similarly, the distance between the states of the two trajectories at the shorter of the terminal times is bounded by
	\begin{equation}
	\Vert x_1(\min\{\tau(x_1),\tau(x_2)\}) - x_1(\min\{\tau(x_1),\tau(x_2)\}) \Vert_2 < \rho/2.
	\end{equation}	
	These observations are then used to bound the distance between the terminal states of the two trajectories. 
	The next step requires considering two cases: First, suppose $\tau(x_2)\leq \tau(x_1)$. 
	Then 
	\begin{equation}\Stretch
	\begin{array}{rcl}
	\Vert x_2(\tau(x_2)) - x_1(\tau(x_1)) \Vert_2 &\leq& \Vert x_2(\tau(x_2)) - x_1(\tau(x_2)) \Vert_2 + \Vert x_1(\tau(x_1))-x_1(\tau(x_2)) \Vert_2\\
	&\leq& \rho/2 + M|\tau(x_2)-\tau(x_1)|\\
	&\leq& \rho/2 + \frac{M\rho}{2M} \\
	&\leq& \rho.
	\end{array}
	\end{equation}
	Second, suppose $\tau(x_1)<\tau(x_2)$. 
	The analogous application of the triangle inequality yields,
	\begin{equation}\Stretch
	\begin{array}{rcl}
	\Vert x_2(\tau(x_2)) - x_1(\tau(x_1)) \Vert_2 &\leq& \Vert x_2(\tau(x_1)) - x_1(\tau(x_1)) \Vert_2 + \Vert x_2(\tau(x_1))-x_2(\tau(x_2)) \Vert_2\\
	&\leq& \rho/2 + M|\tau(x_2)-\tau(x_1)|\\
	&\leq& \rho/2 + \frac{M\rho}{2M} \\
	&\leq& \rho.
	\end{array}
	\end{equation}
	Thus, the terminal state $x_2(\tau(x_2))$ is an element of \xgoal. 
	Since the choice of $x_2$ was arbitrary, $x_1$ is an interior point of \Xgoal; since the choice of $x_1$ was arbitrary, every point in \Xgoal is an interior point. Thus, \Xgoal is open in the metric subspace $(\Xfree,d_\X)$. Since \Xfree is open \Xic, \Xgoal is open in \Xic (cf. Lemma \ref{lem:subspace}).
\end{proof}
Now that we have established that the system map is continuous and \Xfree and \Xgoal are open subsets of its codomain, by definition of continuity, the inverse images of \Xfree and \Xgoal under the system map are open.
\begin{theorem}[\cite{yershov2011sufficient}]\label{thm:uopen}
	\Ufree and \Ugoal are open in the topology induced by $d_\U$. 
\end{theorem}
\begin{proof}
	By Lemma \ref{lem:cont_ic} the map $\varphi_{x_{ic}}$ is continuous, and by Lemmas \ref{lem:xfree_open} and \ref{lem:xgoal_open}, \Xfree and \Xgoal are open. The sets \Ufree and \Ugoal are defined as 
	\begin{equation}
		\Ufree = \varphi^{-1}_{x_{ic}}(\Xfree),\qquad \Ugoal = \varphi^{-1}_{x_{ic}}(\Xgoal).
	\end{equation}
	 By the definition of continuity\footnote{Continuity is defined in Appendix \ref{app:topo}}, \Ufree and \Ugoal are open.
\end{proof} 
\par{\noindent}This is the essential property needed to prove the resolution completeness of the algorithm presented in later chapters.

Next, similar continuity properties for the cost function are derived. 
This continuity will be used to argue that the image of a dense subset of \U will be dense in the cost space.
\begin{lemma} \label{lem:cost_cont}
	$J_{x_0}:\mathcal{U}\rightarrow\mathbb{R}$ is continuous for any $x_0\in \mathbb{R}^n$.
\end{lemma}
\begin{proof}
	Let $u_{1},u_{2}\in\mbox{\ensuremath{\mathcal{U}}}$ and, without loss of generality, assume $\tau(u_{1})\leq\tau(u_{2})$. 
	Denote trajectories $\varphi_{x_{0}}(u_{1})$ and $\varphi_{x_{0}}(u_{2})$ by $x_{1}$ and $x_{2}$ respectively. 
	The associated difference in cost is 
	\begin{equation}\Stretch
	\begin{array}{rcl}
	\left|J_{x_0}(u_{1})-J_{x_0}(u_{2})\right|&=& \left|\int_{\left[0,\tau(u_{1})\right]} g\left(x_{1}(t),u_{1}(t)\right) \,\mu(dt) -\int_{\left[0,\tau(u_{2})\right]}g\left(x_{2}(t),u_{2}(t)\right)\, \mu(dt) \right| \\
	&\leq & \left|\int_{\left[0,\tau(u_{1})\right]}g\left(x_{1}(t),u_{1}(t)\right)\right.-g\left(x_{2}(t),u_{2}(t)\right)\, d\mu(t)\\
	&& -\left.\int_{\left[\tau(u_{1}),\tau(u_{2})\right]}g\left(x_{2}(t),u_{2}(t)\right)\, d\mu(t)\right|.\\
	&\leq&\int_{\left[0,\tau(u_{1})\right]} \left| g\left(x_{1}(t),u_{1}(t)\right)-g\left(x_{2}(t),u_{2}(t)\right)\,\right| \mu(dt)\\
	&& \left. +\int_{\left[\tau(u_{1}),\tau(u_{2})\right]} \left| g\left(x_{2}(t),u_{2}(t)\right)\, \right| \mu(dt).\right.
	\end{array}
	\end{equation}
	This is further bounded using the the Lipschitz constant of $g$ (cf. A-3)
	\begin{equation}\Stretch
	\begin{array}{rcl}
	\left|J_{x_0}(u_{1})-J_{x_0}(u_{2})\right|
	& \leq & \int_{\left[0,\tau(u_{1})\right]} L_{g}\left\Vert x_{1}(t)-x_{2}(t)\right\Vert _{2}+L_{g}\left\Vert u_{1}(t)-u_{2}(t)\right\Vert _{2}\, \mu(dt) \\
	&  & +\int_{\left[\tau(u_{1}),\tau(u_{2})\right]}g\left(x_{2}(t),u_{2}(t)\right)\, \mu(dt)\\
	& \leq & L_{g}\tau(u_{1}) d_{\mathcal{X}}(x_1,x_2)+ L_{g}d_{\mathcal{U}}(u_1,u_2)\\ 
	& &  +\int_{\left[\tau(u_{1}),\tau(u_{2})\right]}g\left(x_{2}(t),u_{2}(t)\right)\, \mu(dt). 
	\end{array}\label{eq:cost_diff}
	\end{equation}
	Since $x_2$ is continuous, $u_2$ is bounded, and $g$ is continuous, there exists a bound $G$ on $g(x_2(t),u_2(t))$ for $t\in [\tau(u_1),\tau(u_2)]$. Thus, the difference in cost is further bounded by  
	
	\begin{equation}
	\begin{array}{rcl}
	\left|J_{x_0}(u_{1})-J_{x_0}(u_{2})\right| & \leq & L_{g}\tau(u_{1}) d_{\mathcal{X}}(x_1,x_2) + L_{g}d_{\mathcal{U}}(u_1,u_2) +G|\tau(u_2)-\tau(u_1)|. 
	\end{array}
	\label{eq:cost_diff2}
	\end{equation}
	Observe that $|\tau(u_2)-\tau(u_1)|<d_\U(u_1,u_2)/u_{max}$ so the term $G|\tau(u_1)-\tau(u_2)|$ can be made arbitrarily small with $d_\U(u_1,u_2)$ sufficiently small. 
	Similarly, since $\varphi_{x_0}$ is continuous, $d_\X(x_1,x_2)$ is made arbitrarily small with $d_\U(u_1,u_2)$ sufficiently small. 
	Thus, for any $\varepsilon>0$, there exists a $\delta>0$ such that $d_\U(u_1,u_2)<\delta$ implies $|J_{x_0}(u_1)-J_{x_0}(u_2)<\varepsilon|$ which implies continuity by Lemma \ref{app:ep_delt_cont}. 
\end{proof} 
The cost functional is continuous with respect to the initial condition parameter as well. 
In fact, it is Lipschitz continuous  which is used when comparing the cost of two trajectories to check if one of the trajectories is provably suboptimal.
\begin{lemma}\label{lem:cost_sensitivity}
	For any $u\in \U$ and
	$x_{0},z_{0}\in\mathbb{R}^{n}$, 
	\begin{equation}
	|J_{x_{0}}(u)-J_{z_{0}}(u)|\leq\Vert x_{0}-z_{0}\Vert_{2}\cdot\frac{L_{g}}{L_{f}}\left(e^{L_{f}\tau(u)}-1\right)\label{eq:cost_sensitivity}
	\end{equation}
\end{lemma}
\begin{proof}
	The absolute difference is bounded using the Lipschitz continuity of $g$. This is further bounded using (\ref{lem:cont_ic}). Denoting $x(t)=[\varphi_{x_{0}}(u)](t)$ and $z(t)=[\varphi_{z_{0}}(u)](t)$,
	\begin{equation}\Stretch
	\begin{array}{rcl}
	|J_{x_{0}}(u)-J_{z_{0}}(u)| & = & \left|\int_{[0,\tau(u)]}g(x(t),u(t))\right. \left.-g(z(t),u(t))\, d\mu(t) \right|\\
	& \leq & \int_{[0,\tau(u)]}\left|g(x(t),u(t))\right. \left.-g(z(t),u(t))\right|\, \mu(dt)\\
	& \leq & \int_{[0,\tau(u)]}L_{g}\left\Vert x(t)-z(t)\right\Vert _{2}\, \mu(dt)\\
	& \leq & \int_{[0,\tau(u)]}\Vert x_{0}-z_{0}\Vert_{2} L_{g}e^{L_{f}t}\, \mu(dt)\\
	& = & \Vert x_{0}-z_{0}\Vert_{2}\frac{L_{g}}{Lf}\left(e^{L_{f}\tau(u)}-1\right).
	\end{array}
	\end{equation}
\end{proof}

\bibliographystyle{amsalpha}
\bibliography{chap3}
\chapter{Approximation of the Space of Input Signals}\label{chap:approximation}
\begin{wrapfigure}{R}{0.5\textwidth}
	\frame{\includegraphics[width=0.5\textwidth]{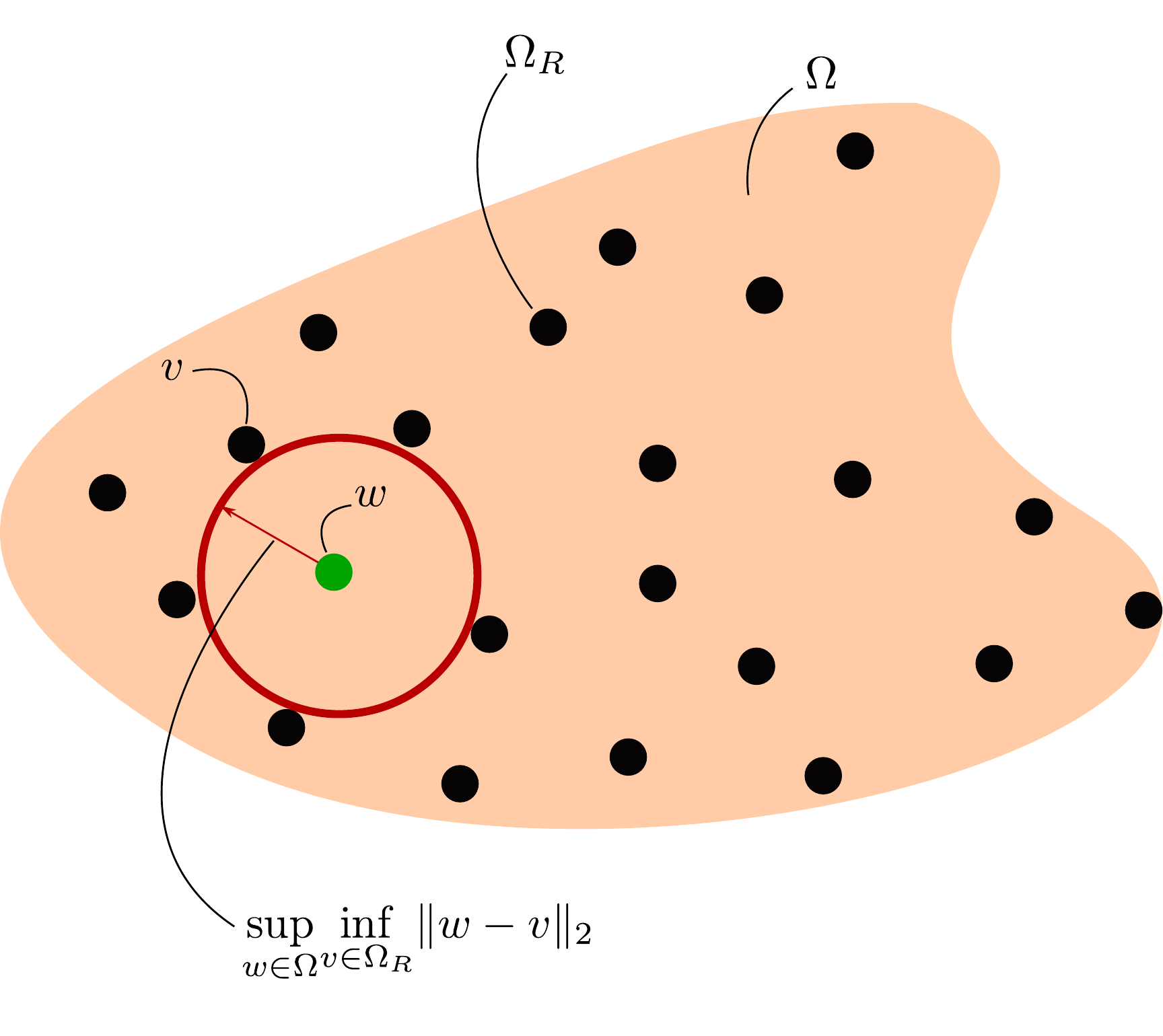}}
	\caption{The set $\Omega$ is illustrated in light red with the subset $\Omega_R$ indicated by small black markers. The dispersion of $\Omega_R$ in $\Omega$ is illustrated by the radius of the dark red circle. It is the radius of the largest ball centered at a point in $\Omega$ which does not intersect $\Omega_R$.}\label{fig:dispersion}
\end{wrapfigure}
The goal of this thesis is to present an algorithm which computes an approximately optimal solution to Problem \ref{Problem}, and, since the decision variable is selected from \U, an uncountably infinite space, we will use a countably infinite approximation which can be systematically searched to provide an approximate solution with arbitrary accuracy in finite time.

The natural number $R$ will denote the resolution of the approximation, and the signal space $\mathcal{U}$ is approximated by a subset $\mathcal{U}_{R}$ indexed by the resolution. 
The approximation $\UR$ of \U is constructed from strings of a finite collection of primitive input signals.
The primitive input signals each take a constant value from a finite subset $\Omega_{R}$ of $\Omega$ for a fixed duration.
The approximation of the control inputs $\Omega$ can be any sequence of subsets whose \emph{dispersion}\index{dispersion} converges to zero in $\Omega$. That is,
\begin{equation}\label{eq:disp_conv}
\lim_{R\rightarrow\infty} \left(\sup_{w\in \Omega} \inf_{v\in \Omega_R} \Vert w-v\Vert_2 \right)=0.
\end{equation}
The dispersion of a subset within a set is a measure of how well the subset approximates its containing set. Figure \ref{fig:dispersion} illustrates the definition of dispersion appearing in equation \eqref{eq:disp_conv}.
A family of subsets $\Omega_R$ exists and is often easily obtained with regular grids or random sampling for a given $\Omega$.   

The primitive input signal $u:[0,1/R]\rightarrow\Omega$ associated to each $w$ in $\Omega_R$ is the following:
\begin{equation}
u(t)=w,\qquad \forall t\in[0,1/R].
\end{equation}
The duration $1/R$ of the primitive input signals is selected for convenience. 
In general, all that is required is that the duration converges to zero with increasing $R$.
For example, the duration of primitive input signals could alternatively be $1/R^2$ or $1/\log(R)$ with the appropriate adjustments made to the remaining analysis.
\begin{wrapfigure}{L}{0.5\textwidth}
	\frame{\includegraphics[width=0.5\textwidth]{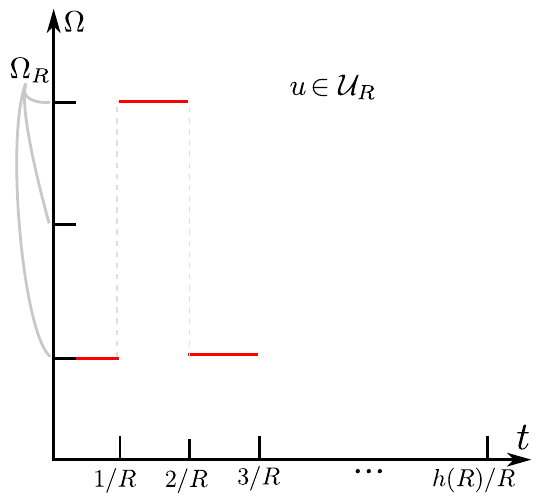}}
	\caption{Signals from \UR are piecewise constant taking values in $\Omega_R$ on intervals of duration $1/R$. The duration is variable as long as the number of constant segments is less than $h(R)$. The duration of the longest input signal in \UR is $h(R)/R$ which grows unbounded when $h(R)$ satisfies \eqref{eq:horizon}.}\label{fig:UR}
\end{wrapfigure}

The approximation \UR is then all possible strings of input signals with length $h(R)$ or less where $h:\mathbb{N}\rightarrow\mathbb{N}$ is a horizon or depth limit. 
That is, an input signal in \UR will have a time domain $[0,d/R]$ for $d\leq h(R)$, and on each interval $\left[\frac{(i-1)}{R},\frac{i}{R}\right)$ with $i\in{1,...,d}$, the input signal takes one value from $\Omega_R$. 
Figure \ref{fig:UR} illustrates the construction of signals in \UR.
The horizon limit can be any function satisfying 
\begin{equation}\label{eq:horizon}
\lim_{R\rightarrow\infty} \frac{R}{h(R)}=0. 
\end{equation}  
This ensures that with sufficiently high resolution, the time domain of any input signal in \U can be approximated by the time domain of a signal in \UR. 
Figure \ref{fig:approx_cartoon} shows how increasing the resolution improves the available approximations in \UR to a particular input signal.
\begin{figure}
	\centering
	\includegraphics[width=1.0\textwidth]{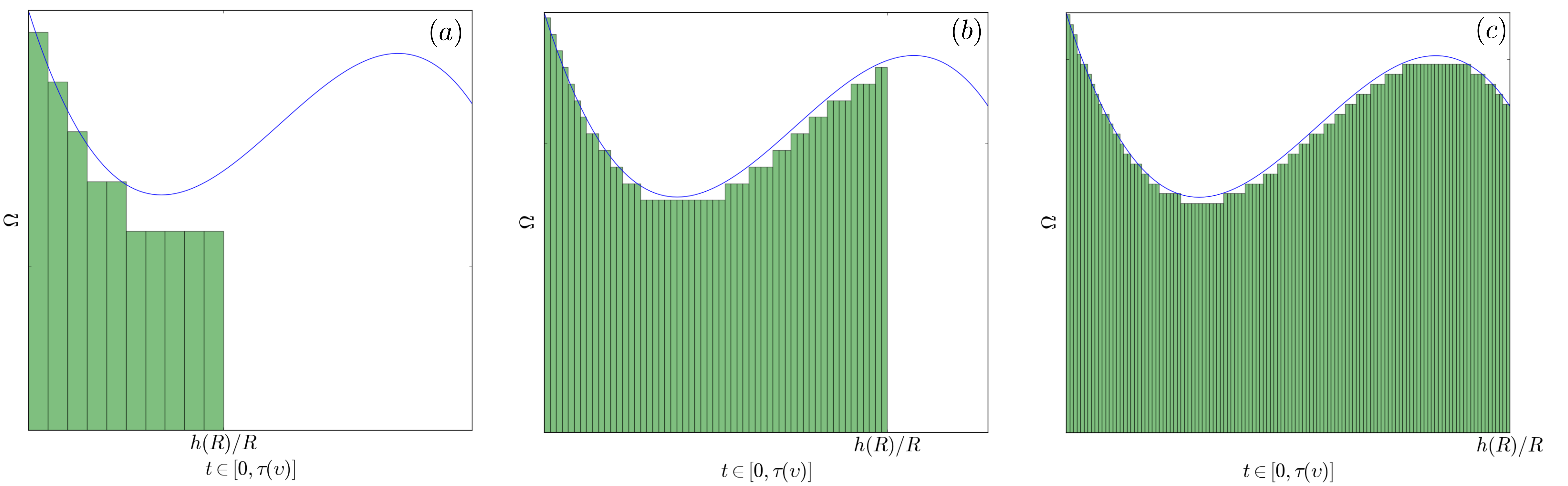}
	\caption{The three tiles illustrate how, with increasing resolution (left to right), there is a signal in \UR that approximates any $\upsilon$ (blue) in \U with increasing accuracy. The vertical axis represents the, generally multi-dimensional, control input space $\Omega$ and the horizontal axis is time.}\label{fig:approx_cartoon}
\end{figure}
\paragraph{Terminology related to \UR:}
To simplify the discussion of how different input signals relate to one another, it will be useful to introduce some terminology.
A \textit{parent} of an input signal $w\in\mathcal{U}_{R}$ with domain $\left[0,i/R\right)$ is defined as the input signal $u\in\mathcal{U}_{R}$ with domain $\left[0,(i-1)/R\right)$ such that $w(t)=u(t)$ for all $t\in\left[0,\frac{c\cdot(i-1)}{R}\right)$.
In this case, $w$ is a \textit{child} of $u$. 
Two signals are \emph{siblings} if they have the same parent.
A tree (graph) is defined for each natural number $R$, with $\mathcal{U}_{R}$ as the vertex set, and edges defined by ordered pairs of signals $(u,w)$ such that $u$ is the parent of $w$.
To serve as the root of the tree, $\mathcal{U}_R$ is augmented with the special input signal $Id_{\mathcal{U}}$ defined such that $J_{x_0}(Id_{\mathcal{U}})=0$ and $ \tau(Id_\U)=0$. $Id_{\mathcal{U}}$ has no parent, but is the parent of signals with domain $[0,1/R]$. 

The signal $w$ is an \textit{ancestor} of $u$ if $\tau(w)\leq\tau(u)$ and $w(t)=u(t)$ for all $t\in[0,\tau(w))$.
In this case $u$ is a \textit{descendant} of $w$. The \textit{depth} of an input signal in $\mathcal{U}_{R}$ is the number of ancestors of that input signal.

\begin{remark}
	With these conventions, each signal is an ancestor, descendant, and sibling of itself.
\end{remark}
\section{Consistency of the Approximation}\label{sec:consistency} 
Since there is a metric on \U, a natural requirement of any approximation of this set by some subset is that for any fixed $u\in\U$, 
\begin{equation}
\lim_{R\rightarrow \infty} \left(\min_{w\in\UR} d_\U (u,w)\right)=0.
\end{equation}
The interpretation of this condition is as follows: given any fixed\footnote{Note that this is a weaker condition than requiring the dispersion of \UR in \U to converge to zero.} input signal $u$ and $\varepsilon>0$, there is an input signal in \UR whose distance from $u$ is less than $\varepsilon$ with a sufficiently high resolution (i.e. sufficiently large $R$). 

The next result establishes this property for \UR.
\begin{lemma}
	\label{lem:density}For each $u\in\mathcal{U}$ and $\varepsilon>0$,
	there exists $R^{*}>0$ such that for any $R>R^{*}$ there exists
	$w\in\mathcal{U}_{R}$ satisfying $d_{\mathcal{U}}(u,w)<\varepsilon$.\end{lemma}
\begin{proof}
	The proof will first rely on Lusin's theorem\footnote{Lusin's Theorem is discussed in Appendix \ref{sec:measure}.} for the existence of a continuous  input signal approximating $u$.
	We then approximate the continuous input  signal $u$ with an input signal $w$ in \UR.
	If follows directly from Lusin's Theorem~\cite{lusin1912proprietes,feldman1981proof} that there exists a continuous input signal $\upsilon:[0,\tau(u)]\rightarrow\Omega$ such that 
	\begin{equation}\label{eq:lusin}
	\mu(\{t\in[0,\tau(u)]:\:\upsilon(t)\neq u(t)\})<\frac{\varepsilon}{4u_{max}}.
	\end{equation}
	Since $u$ and $\upsilon$ have the same time domain, the distance between the two is given by,
	\begin{equation}
		d_\U(u,\upsilon)=\int_{[0,\tau(u)]}\Vert u(t)-\upsilon(t) \Vert_2 \,\mu(dt).
	\end{equation}
	Then from \eqref{eq:lusin} this is equal to 
	\begin{equation}
	d_\U(u,\upsilon)=\int_{\{t\in[0,\tau(u)]:\:\upsilon(t)\neq u(t)\}}\Vert u(t)-\upsilon(t) \Vert_2 \,\mu(dt).
	\end{equation}
	Since $\Omega$ is bounded, $\Vert u(t)-\upsilon(t)\Vert_2<2u_{max}$ resulting in the bound
	\begin{equation}
	d_\U(u,\upsilon)<\int_{\{t\in[0,\tau(u)]:\:\upsilon(t)\neq u(t)\}}2 u_{max} \,\mu(dt)=\frac{\varepsilon}{2}.
	\end{equation} 
	
	Next, we will construct a $w$ in \UR such that $d_\U(\upsilon,w)$ is small.
	The domain of $\upsilon$ is compact so the continuity is also uniform. 
	Let $\delta(\epsilon)$ denote the modulus of continuity. That is,
	\begin{equation}
	|\sigma-\gamma|<\delta(\epsilon)\Rightarrow\Vert\upsilon(\sigma)-\upsilon(\gamma)\Vert_{2}<\epsilon.
	\end{equation}
	To construct an approximation of $\upsilon$ by $w\in\mathcal{U}_{R}$
	choose $R$ sufficiently large so that
	\begin{enumerate}
		\item $\tau(\upsilon)<h(R)/R$,
		\item $1/R<\min\left\{ \frac{\varepsilon}{6u_{max}}, \delta\left(\frac{\varepsilon}{6\tau(\upsilon)}\right)\right\}  $,
		\item there exists an integer $r<h(R)$ such that $0<\tau(\upsilon)-r/R<1/R$,
		\item the dispersion of $\Omega_{R}$ in $\Omega$ is less than $\frac{\varepsilon}{6\tau(\upsilon)}$.
	\end{enumerate}
	Note that if these conditions hold for $R^*$, they hold for all $R>R^*$.
	
	It follows from (2.) and (4.) above, and  the uniform continuity of $\upsilon$ that for each $i\in\{1,2,...,r\}$ and $t\in[(i-1)/R,i/R)$, there exists ${\rm v}_{i}\in\Omega_{R}$ such that $\Vert {\rm v}_{i}-\upsilon(t)\Vert_{2}<\frac{\varepsilon}{3\tau(\upsilon)}$.
	Select $w\in\mathcal{U}_{R}$ which is equal to ${\rm v}_{i}$ on
	each of these intervals so that $\tau(w)=r/R$, and $\tau(w)<\tau(\upsilon)$. 
	By construction of $w$ we have
	\begin{equation}
		d_{\mathcal{U}}(\upsilon,w)  = \int_{[0,r/R]}\Vert w(t)-\upsilon(t)\Vert_{2}\, \mu(dt)+u_{max}|r/R-\tau(\upsilon)|.
	\end{equation}
	Then by condition (2.) and (3.),
	\begin{equation}
	d_{\mathcal{U}}(\upsilon,w)  < \int_{[0,r/R]}\Vert w(t)-\upsilon(t)\Vert_{2}\, \mu(dt)+\frac{\varepsilon}{6}.
	\end{equation} 
	Next, by construction of $w(t)$ we have 
	\begin{equation}
	d_{\mathcal{U}}(\upsilon,w)  < \int_{[0,r/R]}\frac{\varepsilon}{3\tau(u)}\, \mu(dt)+\frac{\varepsilon}{6}.
	\end{equation}
	Recall that $r/R<\tau(\upsilon)$ so that
	\begin{equation}
	d_{\mathcal{U}}(\upsilon,w)  < \int_{[0,\tau(\upsilon]}\frac{\varepsilon}{3\tau(u)}\, \mu(dt)+\frac{\varepsilon}{6}.
	\end{equation}
	Integrating the right hand side of the inequality yields
	\begin{equation}
	d_{\mathcal{U}}(\upsilon,w)  < \frac{\varepsilon}{3}+\frac{\varepsilon}{6}=\frac{\varepsilon}{2}.
	\end{equation}
	
	Thus, by the triangle inequality
	\begin{equation}
	d_{\mathcal{U}}(u,w)\leq d_{\mathcal{U}}(u,\upsilon)+d_\U(\upsilon,w)<\varepsilon.
	\end{equation}
\end{proof}
%
%
%

%
The previous result showed that \UR consistently approximates \U with respect to the metric $d_\U$. 
However, we are primarily interested in approximating \Ugoal. 
Since \Ugoal is open with respect to $d_\U$, the set $\UR \cap \Ugoal$\footnote{This is the set of signals within the approximation of \U resulting in trajectories that satisfy obstacle avoidance constraints and terminal constraints.} will inherit this property.
We will combine this observation with the continuity of $J_{x_{ic}}$ to show that there exist signals in $\UR \cap \Ugoal$ with cost arbitrarily close to the optimal cost:   
\begin{theorem}\label{thm:approx}
	\label{lem:no_limit_points}For any $u\in cl\left(int\left(\mathcal{U}_\mathrm{goal}\right)\right)$
	and $\varepsilon>0$ there exists $R^{*}>0$ such that for any $R>R^{*}$
	\begin{equation}
	\min_{w\in\mathcal{U}_{R}\cap\mathcal{U}_\mathrm{goal}}\left\{ \left|J_{x_\mathrm{ic}}(u)-J_{x_\mathrm{ic}}(w)\right|\right\} <\varepsilon.
	\end{equation}
\end{theorem}
\begin{proof}
	Let $u$ be a signal in  $cl\left(int\left(\mathcal{U}_\mathrm{goal}\right)\right)$.
	Then every neighborhood of $u$ has a nonempty intersection with $int(\Ugoal)$. Equivalently, for every $\delta>0$, 
	\begin{equation}
		B_{\delta/2}(u)\cap int(\Ugoal)\neq \emptyset.
	\end{equation} 
	Let $\upsilon$ be an element of this intersection so that $\upsilon\in 	B_{\delta/2}(u)$ and $\upsilon \in int(\Ugoal)$. 
	Since $\upsilon$ is an interior point of \Ugoal, there exists $\rho>0$ such that 
	$B_{\rho}(\upsilon)$ is a subset of $int(\Ugoal)$.
	Now consider the neighborhood $B_{\min\{\rho,\delta/2\}}(\upsilon)$, 	which is a subset of $int(\Ugoal)$. 
	By Lemma \ref{lem:density}, there exists an $R^{*}$ such that for all $R>R^*$, there is a signal $u$ in \UR satisfying $d_\U(w,\upsilon)<\min\{\rho,\delta/2\}$
    which implies $w$ is in \Ugoal. 
    Further,
    \begin{equation}
    \begin{array}{rcl}
	    d_\U(w,u)&\leq& d_\U(w,\upsilon)+d_\U(\upsilon,u)\\
	    & \leq & \min\{\rho,\delta/2\} + \delta/2 \\
	    & \leq & \delta.
	\end{array}
    \end{equation}
	Now by the continuity of $J_{x_\mathrm{ic}}$, for $\delta$ sufficiently small, $d_{\mathcal{U}}(u,\upsilon)<\delta$ implies $|J_{x_\mathrm{ic}}(w)-J_{x_\mathrm{ic}}(u)| <\varepsilon$ from which the result follows.
\end{proof}
A sufficient condition for every $u\in\mathcal{U}_\mathrm{goal}$ to
be contained in the closure of the interior of $\mathcal{U}_\mathrm{goal}$ is that $\mathcal{U}_\mathrm{goal}$ be open which is the case when Assumption A-1 is satisfied. 

\bibliographystyle{amsalpha}
\bibliography{chap4}
\chapter{The Generalized Label Correcting Method}\label{chap:glc}
Equipped with the set \UR, one could enumerate the strings of input signal primitives in \UR to find the minimum cost input signal in $\UR\cap \Ugoal$. 
Theorem \ref{thm:approx} tells us that this procedure would solve Problem \ref{Problem}.
However the number of signals in \UR is $h(R)^{{\rm card}(\Omega_R)}$ which grows rapidly as the dispersion of the set of control inputs $\Omega_R$ in $\Omega$ converges to zero.
This type of exhaustive search is analogous to searching over all paths in a graph for a shortest path between two vertices.
However, this is an inefficient approach which is remedied with label correcting algorithms.

\paragraph{Label correcting algorithmms:}
%
In a conventional \emph{label correcting method}\index{label correcting method}, the algorithm maintains the least cost path $p_{\rm label}$ known to terminate at each vertex of the graph. This path \textit{labels} that vertex. 
At a particular iteration, if a path $p_{\rm new}$ under consideration does not have lower cost than the path labeling the terminal vertex, the path under consideration is discarded.
Justification for this operation is that any extension $p_{\rm tail}$ of the path $p_{\rm new}$ which reaches the goal vertex can be also be used to extend $p_{\rm label}$ to reach the goal vertex. The cost of the extension $p_{\rm tail}$ in both cases will be the same, but the total cost of $p_{\rm new}$ concatenated with $p_{\rm tail}$ will certainly be no less than the total cost of $p_{\rm label}$ concatenated with $p_{\rm tail}$. 
Therefore, there is no need to consider any extensions of $p_{\rm new}$.  
As a consequence, the subtree of paths originating from the discarded path will not be evaluated.

%

\paragraph{Generalizing the notion of a label:}
Observe that the label of a vertex in conventional label correcting algorithms is in fact a label for the paths terminating at that vertex.
Then each vertex identifies an equivalence class of paths which terminate at that vertex.
Paths within each equivalence class are ordered by their cost, and the efficiency of label correcting methods comes from narrowing the search to minimum cost paths in their associated equivalence class. 
The generalization is to identify paths associated to trajectories terminating in the same region of the state space instead of the same state.
In the context of trajectory planning, this is complicated by the fact that the optimal cost from any particular state to the goal may be discontinuous. This makes it difficult to prove that a trajectory terminating close to another trajectory with lesser cost can be discarded. 
In this chapter we derive precise conditions for discarding a trajectory that resolves this issue.

\section{Constructing Equivalence Classes of Signals}\label{sec:glc_def}
The equivalence classes of input signals are induced by a partition of \xfree.
A partition of \xfree is said to have radius $r$ if each element of the partition is a set contained in a neighborhood of radius $r$. 
No further assumptions on the geometry of these sets will be required. 
A partition of radius $r$ is illustrated in Figure \ref{fig:partition}
\begin{wrapfigure}{R}{0.4\textwidth}
	\frame{\includegraphics[width=0.4\textwidth]{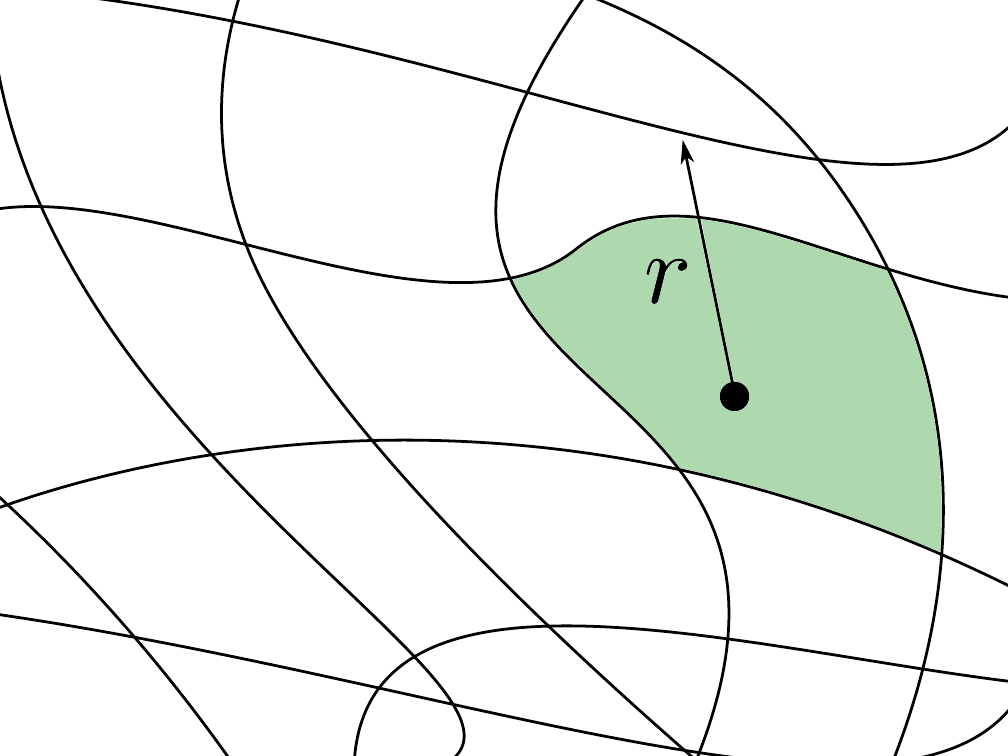}}
	\caption{The green region shows one element of a partition of $\mathbb{R}^2$. This set is contained in a ball of radius $r$ as are all other elements of the partition.}\label{fig:partition}
\end{wrapfigure}

For now we only consider hypercube partitions whose radius is controlled by a function $\eta:\mathbb{N}\rightarrow\mathbb{R}_{>0}$.
%
%
For states $p_1,p_2\in\mathbb{R}^n$ we write $p_1\overset{R}{\sim}p_2$ if
\begin{equation}
\left\lfloor \eta(R)p_1\right\rfloor =\left\lfloor \eta(R)p_2\right\rfloor,
\end{equation}
where $\left\lfloor \cdot\right\rfloor $ is the coordinate-wise floor map (e.g. $\left\lfloor (2.9,3.2) \right\rfloor = (2,3)$). 
The equivalence classes of the $\overset{R}\sim$ relation define a simple hypercube partition of radius $\sqrt{n}/\eta(R)$.    
We extend this relation to control inputs by comparing the terminal state of the resulting trajectory. 
For $u_1,u_2 \in \mathcal{U}_R$ we write $u_{1}\overset{\mathcal{U}_R}{\sim}u_{2}$ if the resulting trajectories terminate in the same hypercube. That is,
\begin{equation}\label{eq:equiv_signal}
u_{1}\overset{\mathcal{U}_R}{\sim}u_{2} \Leftrightarrow [\varphi_{x_\mathrm{ic}}(u_{1})](\tau(u_{1})) \overset{R}{\sim}  [\varphi_{x_\mathrm{ic}}(u_{2})](\tau(u_{2})) .
\end{equation}
Figure \ref{fig:intuition} illustrates the intuition behind this equivalence relation.
\begin{figure}
	\centering
	\frame{\includegraphics[width=1.0\textwidth]{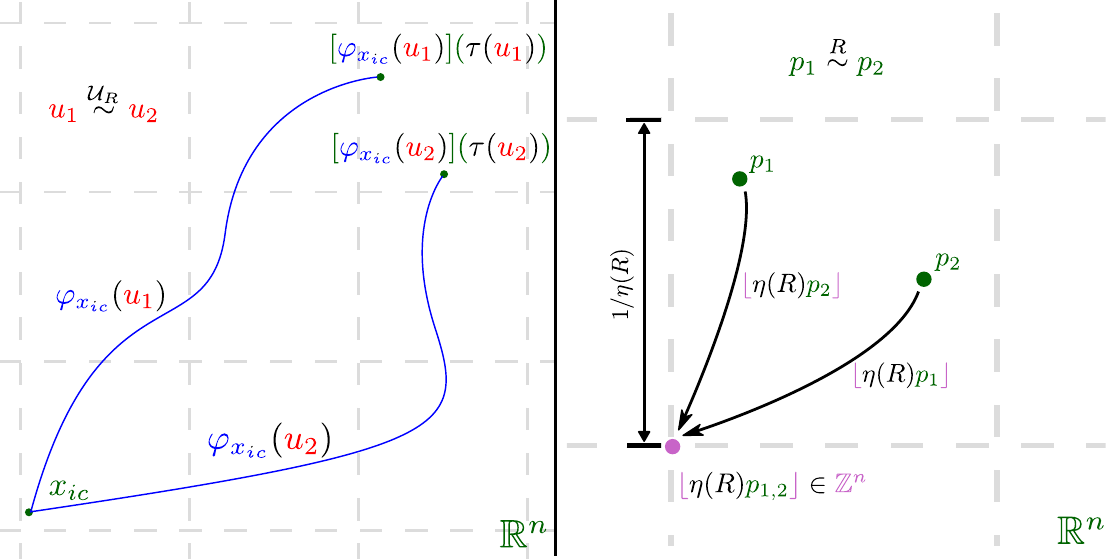}}
	\caption{Illustration of how a partition of \xfree induces a partition on \U. Input signals $u_1$,$u_2$ result in trajectories terminating within the same element of the partition of \xfree. This is the generalization of identifying paths in a graph terminating at the same vertex as equivalent.}\label{fig:intuition}
\end{figure}

The principal contribution of this thesis is the  \emph{GLC conditions}\index{GLC conditions} for ordering input signals among the equivalence classes of the relation $\overset{\mathcal{U}_R}\sim$.
The \GLC conditions define a partial ordering $\prec_{R}$ on $\mathcal{U}_{R}$ which is used to identify signals which can be discarded. 
We write $u_{1}\prec_{R}u_{2}$ if:
\begin{enumerate}[label={GLC-\arabic*},itemindent=0.75cm] 
	\item \label{glc1} $u_{1}\overset{\mathcal{U}_R}\sim u_{2}$,
	\item \label{glc2} $\tau(u_{1})\leq\tau(u_{2}),$
	\item \label{glc3} $J_{x_\mathrm{ic}}(u_{1}) \leq J_{x_\mathrm{ic}}(u_{2}).$
\end{enumerate}
That is, $u_1$ is $\prec_R$-less than $u_2$ if they result in trajectories terminating within the same region of the states space (\ref{glc1}), the duration of $u_1$ is no greater than the duration of $u_2$ (\ref{glc2}), and the cost of $u_1$ is less than the cost of $u_2$ (\ref{glc3}).
These conditions are sharper than the ones presented in~\cite{paden2016generalized} where \ref{glc3} included a threshold for the difference in cost between $u_1$ and $u_2$. 
A signal $u_1$ is called \emph{minimal}\index{minimal (partial order)} if there is no $u_2\in \mathcal{U}_R$ such that $u_{2}\prec_{R}u_{1}$. Such a signal can be thought of as being a good candidate for later expansion during the search. Otherwise, it can be discarded.
In order for the \GLC method to be a resolution complete algorithm, the scaling parameter $\eta$ must satisfy  
\begin{equation}
\lim_{R\rightarrow\infty}\frac{R}{L_f\eta(R)}\left( e^{\frac{L_{f}h(R)}{R}}-1\right)=0.\label{eq:partition_scaling}
\end{equation}
%
%
%

%
%
%

%
%
%
%
%
%

\begin{figure}
	\centering
	\includegraphics[width=0.8\textwidth]{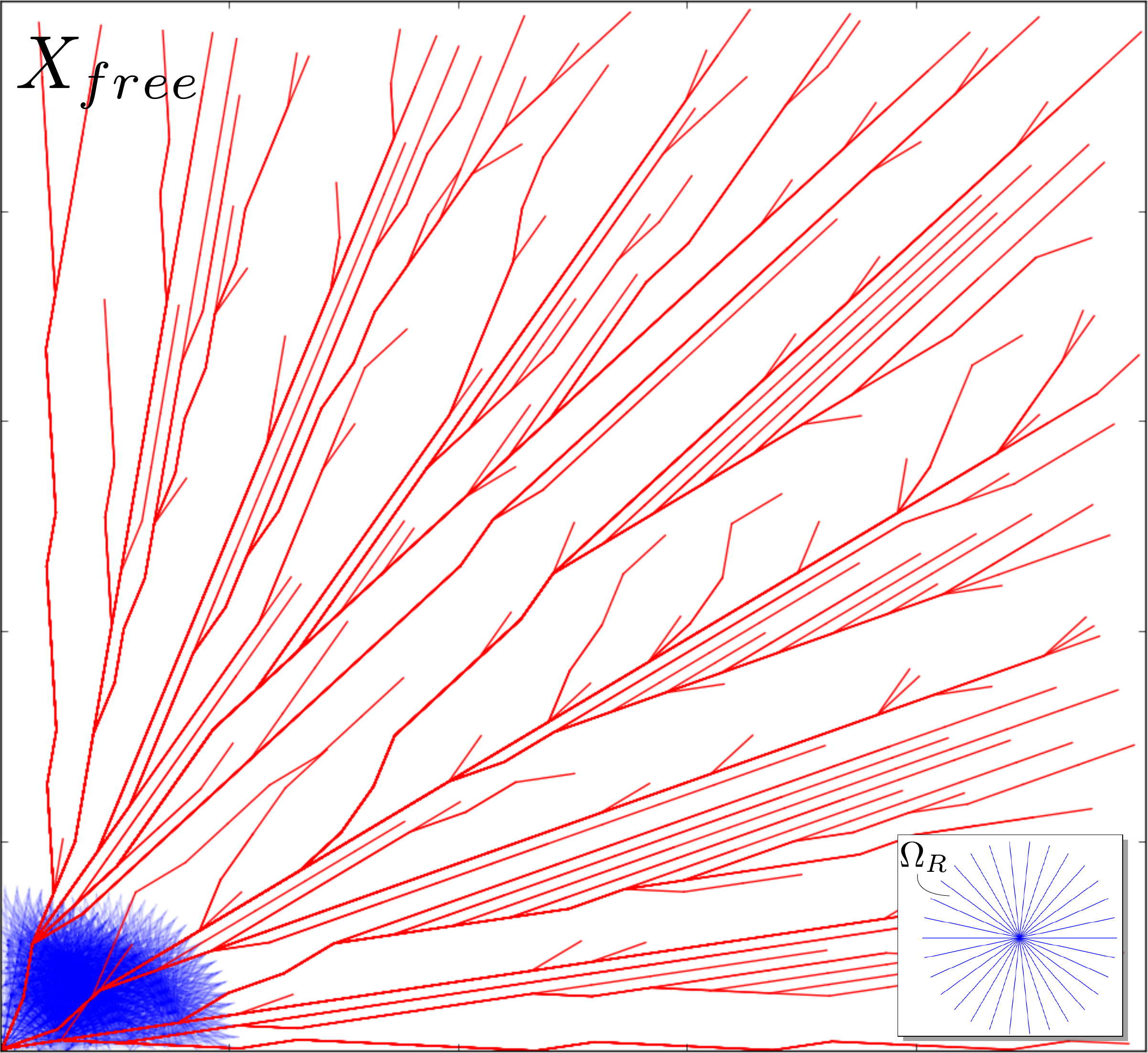}
	\caption{The motions resulting from the primitive control inputs $\Omega_R$ are shown in the bottom right corner. Executing the first $3000$ out of $30^{30}$ signals from \UR in uniform cost order (concatenation of 2-3 primitive  input signals) results in the blue trajectories illustrated which do not effectively cover \xfree.  On the other hand there are only 417 minimal trajectories in \UR which cover \xfree effectively with approximately optimal paths.}\label{fig:glc_demo}
\end{figure}
Figure \ref{fig:glc_demo} contrasts the trajectories resulting from minimal signals in \UR with an exhaustive enumeration of \UR for a two dimensional kinematic point robot. 
%
%
The minimal signals result in approximately optimal trajectories that uniformly cover the free space \xfree while exhaustively enumerating \UR requires many more trajectories to be evaluated. 

\section{Properties of the \GLC Conditions}
This section will develop a number of important concepts and basic properties of the partial order $\prec_R$.

The $\epsilon$-interior of the set \Xfree and \Xgoal, are defined by
\begin{equation}\Stretch
\begin{array}{rcl}
\Xfree^\epsilon&\coloneqq&\{x\in\Xic:\,B_\epsilon (x)\subset \Xfree\},\\
\Xgoal^\epsilon&\coloneqq&\{x\in\Xfree^\epsilon:\,B_\epsilon (x)\subset \Xgoal\}.
\end{array}
\end{equation} 
The inverse image of these sets under the system map $\varphi_{x_{ic}}$ is denoted
\begin{equation}
\Ufree^\epsilon \coloneqq \varphi_{x_{ic}}^{-1}(\Xfree^\epsilon),\qquad \Ugoal^\epsilon \coloneqq \varphi_{x_{ic}}^{-1}(\Xgoal^\epsilon).
\end{equation} 
Note that if $x$ is an element of $\Xfree^\epsilon$, then $B_{\epsilon}(x(t))\subset \xfree$. 
The same is true for \Xgoal.

To simplify notation we will denote the optimal cost of signals in $\Ugoal^\epsilon\cap\UR$ by $c_R^\epsilon$,
\begin{equation}\label{eq:cr_ep}
c_R^{\epsilon}\coloneqq \min_{u\in\UR\cap \Ugoal^\epsilon} \{J_{x_{ic}}(u)\}.
\end{equation}
Like, the optimal cost $c^*$ and the approximate optimal cost $c_R$, $c_R^\epsilon$ can, in some cases, be $\infty$.
An intuitive, but important property regarding the cost $c_R^\varepsilon$ is that it converges to the optimal cost $c^*$ as $\varepsilon$ tends to $0$ and $R$ tends to $\infty$. 
\begin{lemma}
	\label{lem:approx_equal_optimal}If $\lim_{R\rightarrow\infty}\epsilon(R)=0$,
	then $\lim_{R\rightarrow\infty}c_{R}^{\epsilon(R)}=c^{*}$.\end{lemma}
\begin{proof}
	Since the infimum in \eqref{eq:meaningful_problem} may not be attained, the first step is to identify an input signal which is arbitrarily close to the optimal cost.
	By the definition of $c^{*}$ in (\ref{eq:meaningful_problem}), for any
	$\varepsilon>0$ there exists $\omega\in\mathcal{U}_\mathrm{goal}$ such
	that $J_{x_\mathrm{ic}}(\omega)-\varepsilon/2<c^{*}$. 
	
	Next, we use the topological properties discussed in Chapter \ref{chap:topo} to construct a neighborhood of $\omega$ containing signals in \Ugoal.
	Since $\mathcal{U}_\mathrm{goal}$ is open and $\varphi_{x_\mathrm{ic}}$ is continuous, there exists $\tilde{r}>0$ and $\rho>0$
	such that $B_{\tilde{r}}(\omega)\subset\mathcal{U}_\mathrm{goal}$ and $\varphi_{x_\mathrm{ic}}(B_{\tilde{r}}(\omega))\subset B_{\rho}(\varphi_{x_\mathrm{ic}}(\omega))$.
	Thus, $\omega\in\mathcal{U}_\mathrm{goal}^{\rho}$. 
	From the continuity of $J$ in Lemma \ref{lem:cost_cont} there also exists a positive
	$\delta<r$ such that for any signal $\upsilon$ with $d_{\mathcal{U}}(\upsilon,\omega)<\delta$
	we have $|J_{x_\mathrm{ic}}(\omega)-J_{x_\mathrm{ic}}(\upsilon)|<\varepsilon/2$.
	
	The next step is to find a feasible signal in \UR which closely approximates $\upsilon$.
	Choose $R^{*}$ to be sufficiently large such that $R>R^*$ implies $\mbox{\ensuremath{\epsilon}}(R)<\rho$ and $B_{\delta}(\omega)\cap\mathcal{U}_{R}\neq\emptyset$. 
	Such a resolution $R^{*}$ exists by
	Theorem \ref{lem:no_limit_points} and the assumption $\lim_{R\rightarrow\infty}\epsilon(R)=0$.
	Now choose $u\in B_{\delta}(\omega)\cap\mathcal{U}_{R}$.
	Then $|J_{x_\mathrm{ic}}(u)-J_{x_\mathrm{ic}}(\omega)|<\varepsilon/2$ and $u\in\mathcal{U}_\mathrm{goal}^{\rho}\subset\mathcal{U}_\mathrm{goal}^{\epsilon(R)}$.
	
	Finally, we use the triangle inequality to show that $u$ has nearly the optimal cost.
	Then, by definition of $c_{R}^{\epsilon(R)}$, $u\in\mathcal{U}_\mathrm{goal}^{\epsilon(R)}$ implies $c_{R}^{\epsilon(R)}\leq J_{x_\mathrm{ic}}(u)$. 
	Finally, by the triangle inequality, 
	\begin{equation}
	|J_{x_\mathrm{ic}}(u)-c^{*}|<|J_{x_\mathrm{ic}}(u)-J_{x_\mathrm{ic}}(\omega)|+|J_{x_\mathrm{ic}}(\omega)-c^{*}|<\varepsilon.
	\end{equation}
	Rearranging the expression yields $J_{x_\mathrm{ic}}(u)<c^{*}+\varepsilon$
	and thus, $c_{R}^{\epsilon(R)}<c^{*}+\varepsilon$. The result follows
	since the choice of $\varepsilon$ is arbitrary.
\end{proof}

\paragraph{Some new notation:}
At this point it will be advantageous to introduce a concatenation operation on elements of $\mathcal{U}$ and $\mathcal{X}_{x_0}$.
For $s_{1},s_{2}\in\U\cup\Xxo$, their concatenation $s_{1}s_{2}$
is defined by 
\begin{equation}
[s_{1}s_{2}](t)\coloneqq\left\{ \begin{array}{c}
s_{1}(t),\, t\in[0,\tau(s_{1}))\\
s_{2}(t-\tau(s_{1})),\, t\in[\tau(s_{1}),\tau(s_{1})+\tau(s_{2})]
\end{array}\right..\label{eq:concatenation}
\end{equation}
The concatenation is illustrated in Figure \ref{fig:concatenation}.
\begin{figure}[h]
	\centering
	\includegraphics[width=0.95\textwidth]{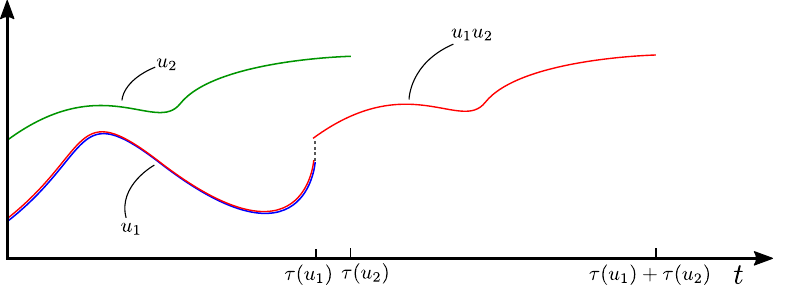}
	\caption{Signals $u_1$ and $u_2$ are illustrated in blue and green respectively. Their concatenation $u_1u_2$ is illustrated in red.}\label{fig:concatenation}
\end{figure}
The concatenation operation will be useful together
with the following equalities for $u_1,u_2\in\U$, 
\begin{equation}\label{eq:cost_homo}
J_{x_{0}}(u_{1}u_{2})=J_{x_{0}}(u_{1})+J_{[\varphi_{x_0}(u_1)](\tau(u_1))}(u_{2})
\end{equation}
\begin{equation}\label{eq:flow_homo}
\ensuremath{\varphi_{x_{0}}(u_{1}u_{2})=\varphi_{x_{0}}(u_{1})\varphi_{[\varphi_{x_{0}}(u_1)](\tau(u_1))}(u_{2})}.
\end{equation}
Equations \eqref{eq:cost_homo} and \eqref{eq:flow_homo} follow directly from \eqref{eq:int_dynamics}.
The interpretation of \eqref{eq:cost_homo} is that the cost of $u_1$ concatenated with $u_2$ is equal to the cost of executing the control signal $u_1$ from $x_0$ plus the cost of executing $u_2$ from the terminal state of the trajectory resulting from $u_1$. 

The next result can be interpreted as a generalization of Bellman's \emph{principle of optimality}\index{principle of optimality}~\cite{bellman1956dynamic}  and is the basis for the generalized label correcting method in the same way that the principle of optimality is the basis for label correcting methods. 
Figure \ref{fig:princ_opt} illustrates the statement of the next theorem. 
\begin{theorem}[Principle of Optimality]
	\label{thm:pruning}
	Let $\gamma=\frac{\sqrt{n}}{\eta(R)} \frac{L_g}{L_f}\left( e^{L_f \left( \frac{h(R)}{R} -\tau(u_j) \right) }-1 \right)$, and
	$\delta=\frac{\sqrt{n}}{\eta(R)}e^{ L_{f} \left( \frac{h(R)}{R}-\tau(u_j) \right)}$.
	If $u_{i},u_{j}\in\UR\cap \Ufree$ and satisfy
	$u_{i}\prec_{R}u_{j}$, then for each descendant of $u_{j}$ in $\UR\cap\Ugoal^{\delta}$
	with cost $c_{j}$, there exists a descendant of $u_{i}$ in $\UR\cap\Ugoal$
	with cost $c_{i}\leq c_{j}+\gamma$.
\end{theorem}
\begin{proof}
	Suppose there exists $w\in\mathcal{U}_{R}$ such that $u_{j}w\in\mathcal{U}_{R}\cap \U^\delta$.
	By \ref{glc1} the signals satisfy $u_{i}\overset{\UR}{\sim}u_{j}$ which means 
	\begin{equation}\label{eq:close}
	\left\Vert[\varphi_{x_{ic}}(u_{i}w)](\tau(u_{i}))-[\varphi_{x_\mathrm{ic}}(u_{j}w)](\tau(u_{j}))\right\Vert_2\leq \frac{\sqrt{n}}{\eta(R)}.
	\end{equation}
	Note that $\tau(w)\leq h(R)/R-\tau(u_j)$ since $u_jw$ has depth no greater than $h(R)$.
	Then, by Lemma (\ref{lem:cont_ic}), for all $t\in[0,\tau(w)]$, 
	\begin{equation}\Stretch\label{eq:-8}
	\begin{array}{rcl}
	\left\Vert[\varphi_{x_\mathrm{ic}}(u_{i}w)](t+\tau(u_{i}))-[\varphi_{x_\mathrm{ic}}(u_{j}w)](t+\tau(u_{j}))\right\Vert&\leq&\frac{\sqrt{n}}{\eta(R)}e^{L_{f}\left(\frac{h(R)}{R}-\tau(u_j)\right)}\\
	&=&\delta.
	\end{array}
	\end{equation}
	Thus,  $u_{i} w \in \Ugoal.$ 
	Next, we show that $c_i\leq c_j+\gamma$. 
	From equation (\ref{eq:cost_homo}), 
	\begin{equation}\label{eq:use_concat}
			J_{x_\mathrm{ic}}(u_{i} w)  =  J_{x_\mathrm{ic}}(u_{i})+J_{[\varphi_{x_\mathrm{ic}}(u_{i})](\tau(u_{i}))}(w).
	\end{equation}
	Combining the continuity of the cost functional with respect to the initial condition (Lemma \ref{lem:cost_sensitivity}) and equation \eqref{eq:close} yields
	\begin{equation}\Stretch \label{eq:use_sensitivity}
	\begin{array}{cl}
		&|J_{[\varphi_{x_\mathrm{ic}}(u_{i})](\tau(u_{i}))}(w)-J_{[\varphi_{x_\mathrm{ic}}(u_{j})](\tau(u_{j}))}(w)| \\
		\leq & \Vert [\varphi_{x_\mathrm{ic}}(u_{i})](\tau(u_{i})) - [\varphi_{x_\mathrm{ic}}(u_{j})](\tau(u_{j})) \Vert_2  \cdot\frac{L_{g}}{L_{f}}\left(e^{L_f \left( \frac{h(R)}{R} -\tau(u_j) \right) }-1\right)\\
		\leq & \frac{\sqrt{n}}{\eta(R)}\frac{L_{g}}{L_{f}}\left(e^{L_f \left( \frac{h(R)}{R} -\tau(u_j) \right) }-1\right)\\
		&=\gamma.
		\end{array}
	\end{equation}
	Then combining \eqref{eq:use_concat} and \eqref{eq:use_sensitivity} yields
	\begin{equation}\Stretch
	\begin{array}{rcl}
	J_{x_\mathrm{ic}}(u_{i} w) & \leq & J_{x_\mathrm{ic}}(u_{j})+J_{[\varphi_{x_\mathrm{ic}}(u_{j})](\tau(u_{j}))}(w)+|J_{[\varphi_{x_\mathrm{ic}}(u_{i})](\tau(u_{i}))}(w)-J_{[\varphi_{x_\mathrm{ic}}(u_{j})](\tau(u_{j}))}(w)|\\
	& \leq & J_{x_\mathrm{ic}}(u_{j}w)+\frac{\sqrt{n}}{\eta(R)}\frac{L_{g}}{L_{f}}\left(e^{L_f \left( \frac{h(R)}{R} -\tau(u_j) \right) }-1\right).\\
	\end{array}
	\end{equation}
	Therefore $c_i\leq c_j+\gamma$. 
\end{proof}
\begin{figure}
	\centering
	\includegraphics[width=0.6\textwidth]{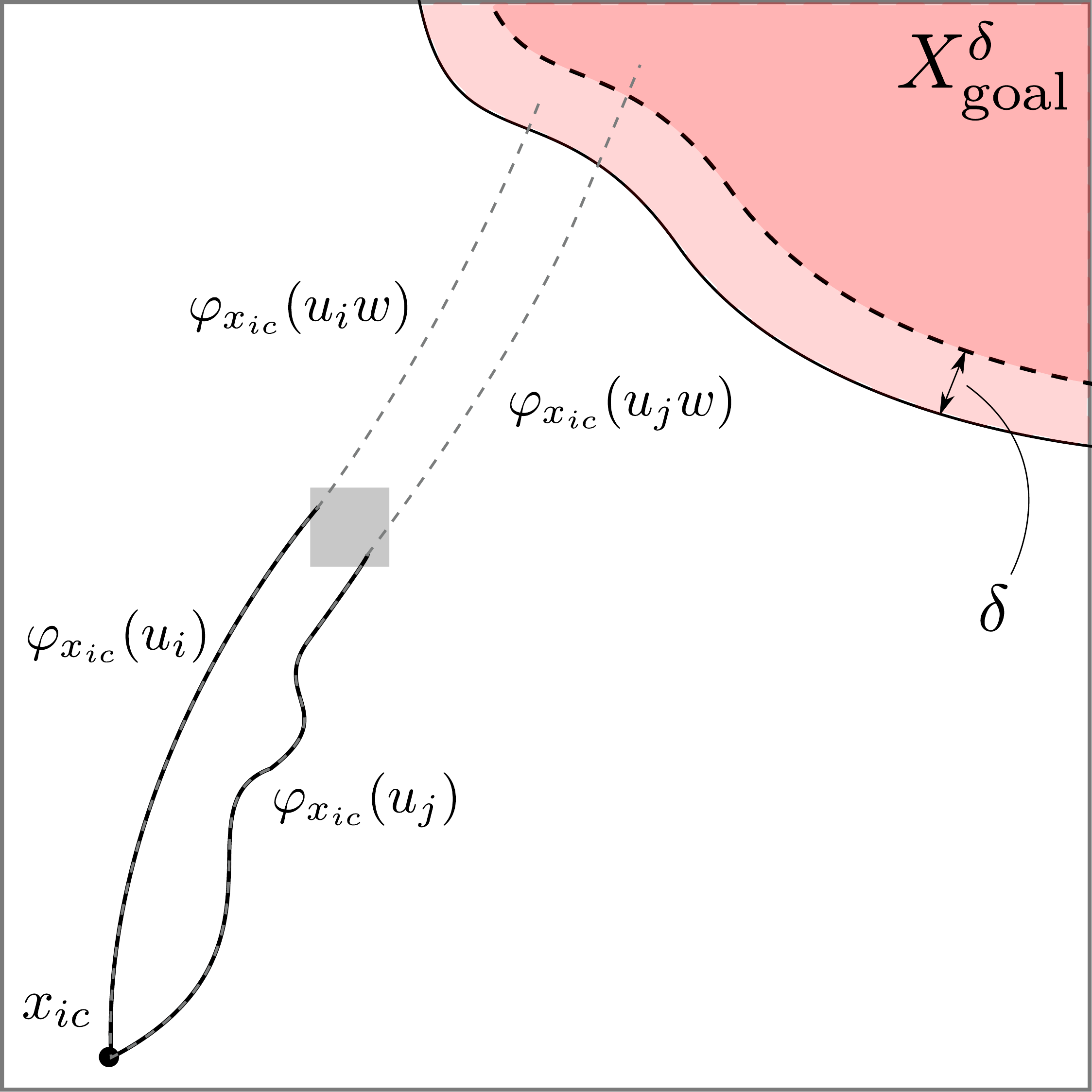}
	\caption{Illustration of the statement of Theorem \ref{thm:pruning}. The input signals $u_i$ and $u_j$ result in trajectories (black curves) terminating in the same partition element (grey square). If $u_{i}\prec_{R}u_{j}$, then for any descendant $u_jw$ of $u_j$, the descendant $u_iw$ of $u_i$ will have lesser cost, and will result in a trajectory terminating near the trajectory resulting from $u_jw$.}\label{fig:princ_opt}
\end{figure}

In reference to the above theorem, since $u_j w\in \UR$ which is limited to signals with depth $h(R)$, the signal $w$, which is concatenated with $u_i$ to form a signal in $\Ugoal\cap\UR$, necessarily satisfies 
\begin{equation}\label{eq:depth_constraint}
\mathtt{depth}(w)\leq h(R)-\mathtt{depth}(u_j),
\end{equation} 
where $\mathtt{depth}$ is the depth of a particular signal.

\section{A \GLC Algorithm}

Pseudocode for a  \GLC algorithm is defined in Algorithm \ref{Alg} below. 
As the name suggests, it is very similar to a canonical label correcting algorithm. 
A set $Q$ serves as a priority queue of candidate signals. A set $\Sigma$ contains signals representing labels of $\overset{\mathcal{U}_R}{\sim}$ equivalence classes.

The method $\texttt{expand}(u)$ returns the set of all children of $u$. 
The method $\texttt{pop}(Q)$ deletes from Q, and returns an input signal $\hat{u}$ such that 
\begin{equation}\label{eq:queue}
\hat{u}\in\underset{u\in Q}{{\rm argmin}}\left\{ J_{x_\mathrm{ic}}(u)\right\},
\end{equation} 
so that the presented \GLC algorithm is a best-first search\footnote{Like canonical label correcting methods, there are many variations that utilize alternative orderings. For example, the addition of an \emph{admissible heuristic}\index{admissible heuristic}~\cite{hart1968formal} in (\ref{eq:queue}) can be used to guide the search without affecting the solution accuracy.  
Admissible heuristics for kinodynamic motion planning are discussed in detail in Chapter \ref{chap:admissible_heuristics}.}. 
The method $\texttt{find}(u,\Sigma)$ returns $w\in\Sigma$ such that $u\overset{R}{\sim}w$ or \NULL if no such $w$ is present in $\Sigma$.
Problem specific collision and goal checking subroutines are used to evaluate $u\in\mathcal{U}_\mathrm{feas}$ and $u\in\mathcal{U}_\mathrm{goal}$.
The method $\texttt{depth}(u)$ returns the number of ancestors of $u$. 
\begin{algorithm} 
	\begin{algorithmic}[1]
		\State $Q\leftarrow \{Id_\mathcal{U}\},\,\Sigma \gets \emptyset,\,S \gets \emptyset$  
		\While {$Q\neq \emptyset$}        
		\State $u \gets \texttt{pop}(Q)$   
		\If{$u \in \mathcal{U}_\mathrm{goal} $} 
		\State \Return $(J_{x_{ic}}(u),u)$ 
		\EndIf
		\State $S \gets \texttt{expand}(u)$   
		\For{$w \in S$} 	 	
		\State $z \gets \texttt{find}(w,\Sigma)$ 	  
		\If{$(w \notin \mathcal{U}_{feas.} \vee (z \prec_R w) \vee \texttt{depth}(w) \geq h(R))$}  	    
		\State $S \gets S\setminus \{w\}$  	  
		\ElsIf{$J_{x_\mathrm{ic}}(w)<J_{x_\mathrm{ic}}(z)$} 	  
		\State $\Sigma \gets (\Sigma \setminus \{z\}) \cup \{w\}$ 	  
		\EndIf 
		\EndFor       
		\State $Q \gets Q \cup S$ 
		\EndWhile  
		\State 
		\Return $(\infty,\NULL)$ 
	\end{algorithmic} \caption{\label{Alg} Generalized Label Correcting (GLC) Method} 
\end{algorithm}
The algorithm begins by adding the root $Id_\mathcal{U}$ to the queue (line 1), and then enters a loop which recursively removes and expands the top of the queue (line 3) adding children to $S$ (line 4). 
If the queue is empty the algorithm terminates (line 2) returning \NULL (line 14). 
Each signal in $S$ (line 5) is checked for membership in $\mathcal{U}_\mathrm{goal}$ in which case the algorithm terminates returning a feasible solution with approximately the optimal cost. 
Otherwise, the signals are checked for infeasibility or suboptimality by the \GLC conditions (line 9). 
Next, a relabeling condition for the associated equivalence classes (i.e. grid cells) of remaining signals is checked (line 11). 
Finally, remaining signals in $S$ are added to the queue (line 13).  

In addition to the problem data, the algorithm requires two functions $h(R)$ and $\eta(R)$ satisfying equations \eqref{eq:horizon} and \eqref{eq:partition_scaling}, and the control inputs $\Omega_R$ satisfying \eqref{eq:disp_conv}.

\section{Proof of Resolution Completeness}
The goal of this section is to prove that Algorithm \ref{Alg} is resolution complete for Problem \ref{Problem}.
Algorithm \ref{Alg} is very similar to classical label correcting algorithms and the proof parallels standard proofs of completeness for these methods.
The reader may benefit from reviewing the proof in the simpler case where the algorithm is applied to a shortest path problem in a graph (e.g. \cite[chapter 2]{bertsekas1995dynamic} or Appendix \ref{app:graphs}).

In Theorem \ref{thm:pruning}, the quantity $\delta$ was constructed based on: the radius of the partition of \Xfree, the sensitivity of solutions to initial conditions determined by Lemma \ref{lem:cont_ic}, and the maximum duration of signals in \UR. In the proof of the next result, we will recursively apply Theorem \ref{thm:pruning} and it will be useful to define the following sum that arises when repeatedly applying Theorem \ref{thm:pruning},  
\begin{equation}
\delta_k\coloneqq\sum_{i=0}^{k}\frac{\sqrt{n}}{L_f \eta(R)}e^{\frac{L_{f}(h(R)-i)}{R}}.
\end{equation}
When $k=h(R)$, the sum satisfies the inequality
\begin{equation}\label{eq:inequality}
\sum_{i=0}^{k}\frac{\sqrt{n}}{L_f \eta(R)}e^{\frac{L_{f}(h(R)-i)}{R}} \leq\frac{R\sqrt{n}}{L_{f}\eta(R)}\left(e^{L_{f}h(R)/R}-1\right),
\end{equation}
which is derived in Appendix \ref{app:inequality_derivation}.
Similarly, it will be convenient to define the quantity $\gamma_k$ related to $\gamma$ of Theorem \ref{thm:pruning},
\begin{equation}
\gamma_k \coloneqq \sum_{i=0}^{k} \frac{\sqrt{n}}{\eta(R)} \frac{L_g}{L_f}\left( e^{L_f \left( \frac{h(R)}{R} -i \right) }-1 \right),
\end{equation} 
which is bounded by 
\begin{equation}\label{eq:messy_formula}
	\sum_{i=0}^{h(R)} \frac{\sqrt{n}}{\eta(R)} \frac{L_g}{L_f}\left( e^{L_f \left( \frac{h(R)}{R} -i \right) }-1 \right) \leq \frac{R\sqrt{n}}{\eta(R)}\frac{L_g}{L_f}\left(e^{L_fh(R)/R} - \frac{h(R)}{R}  \right).
\end{equation}
An important property of the right hand side of \eqref{eq:inequality} and  \eqref{eq:messy_formula} is that they converge to zero when $\eta(R)$ satisfies equation \eqref{eq:partition_scaling}. Thus, $\delta_{h(R)}$ and $\gamma_{h(R)}$ converge to zero as $R$ tends to infinity.   

The pruning operation in lines 9-10 of Algorithm \ref{Alg}, in a sense, discards candidate input signals as liberally as possible.
In the next theorem, it is shown that a signal in $\Ugoal^{\delta_{h(R)}}$ with cost less than $c_R^{\delta_{h(R)}}+\gamma_{h(R)}$ is eventually evaluated by the algorithm in line 4 despite this pruning operation.
\begin{theorem}\label{thm:main}
	The \GLC method described by Algorithm \ref{Alg} terminates in finite time and returns a solution with cost less than or equal to $c^{\delta_{h(R)}}_R+\gamma_{h(R)}$.
\end{theorem}
\begin{proof}
	(Finite running time) The queue is a subset of $\mathcal{U}_R\cup \{Id_\mathcal{U} \}$ and at line 3 in each iteration a lowest cost signal $u$ is removed from the queue. 
	In line 13, only children of the current signal $u$ are added to the queue. Since $\mathcal{U}_R$ is organized as a tree and has no cycles, any signal $u$ will enter the queue at most once.
	Therefore the queue must be empty after a finite number of iterations so the algorithm terminates.
	
	(Approximate Optimality)
	Next, consider as a point of contradiction the hypothesis that the output has cost greater than $c^{\delta_{h(R)}}_R+\gamma_{h(R)}$. 
	Then it is necessary that $c^{\delta_{h(R)}}_R+\gamma_{h(R)}<\infty$, and by the definition of  $c^{\delta_{h(R)}}_R$ in (\ref{eq:cr_ep}), it is also necessary that $\UR \cap \Ugoal^{\delta_{h(R)}}$ is non-empty. 
	
	Choose $u^*\in \UR\cap\Ugoal^{\delta_{h(R)}}$ with cost $J_{x_\mathrm{ic}}(u^*)=c_R^{\delta_{h(R)}}$. 
	It follows from the contradiction hypothesis that $u^*$ does not enter the queue.
	Otherwise, by (\ref{eq:queue}) it would be evaluated before any signal of cost greater than $c_R^{\delta_{h(R)}}$ and the algorithm would terminate returning this input signal.
	If $u^*$ does not enter the queue, then a signal $u_0$ must at some iteration be present in $\Sigma$ which prunes an ancestor $a_0$ of $u^*$ ($u_0 \prec_R a_0$ in line 9). 
	This ancestor must satisfy $\texttt{depth}(a_0)>0$ since the ancestor with depth $0$ is $Id_\U$ which enters queue in line 1. 
	By Theorem \ref{thm:pruning}, $u_0$ has a descendant of the form $u_0d_0\in \Ugoal^{\delta_{h(R)}-\delta_0}$ and $J_{x_\mathrm{ic}}(u_0d_0)\leq c_R^{\delta_{h(R)}}+\gamma_0$. 
	Additionally, $\mathtt{depth}(a_0)>0$ implies $\mathtt{depth}(d_0)\leq h(R)-1$ by equation \eqref{eq:depth_constraint}.

	Having pruned $u^*$ in lines 9-10, the signal $u_0\in\Sigma$, or a sibling which prunes $u_0$ (and by the transitivity of $\prec_R$, prunes $u^*$) must at some point be present in the queue (cf. line 12-13). 
	Of these two, denote the one that ends up in the queue by $\tilde{u}_0$.
	Since $\tilde{u}_0$ is at some point present in the queue and $\tilde{u}_0d_0\in\Ugoal^{\delta_{h(R)}-\delta_0}$, a signal $u_1\in\Sigma$ must prune an ancestor $a_1$ of $\tilde{u}_0d_0$ ($u_1 \prec_R a_1$ in lines 9-10). 
	Since $\tilde{u}_0$ is at some point present in the queue, the ancestor $a_1$ of $\tilde{u}_0d_0$, must have greater depth than $\tilde{u}_0$. 
	By Theorem \ref{thm:pruning}, $u_1$ has a descendant of the form $u_1d_1\in \Ugoal^{\delta_{h(R)}-\delta_1}$ and $J_{x_\mathrm{ic}}(u_1d_1)\leq c_R^{\delta_{h(R)}}+\gamma_1$. 
	Additionally, $\mathtt{depth}(a_1)> \tilde{u}_0$ implies $\mathtt{depth}(d_1)\leq \mathtt{depth}(d_0)-1 \leq h(R)-2$ by equation \eqref{eq:depth_constraint}.

	Continuing this line of deduction leads to the observation that a signal $u_{h(R)-1}$, with a descendant of the form $u_{h(R)-1}d_{h(R)-1}\in \Ugoal^{\delta_{h(R)}-\delta_{h(R)-1}}$ and $J_{x_\mathrm{ic}}(u_{h(R)-1}d_{h(R)-1})\leq c_R^{\delta_{h(R)}}+\gamma_{h(R)-1}$, will be present in the queue; and $\mathtt{depth}(d_{h(R)-1})\leq 1$. 
	Since $u_{h(R)-1}$ is at some point present in the queue, a signal $u_{h(R)}\in\Sigma$ must prune an ancestor $a_{h(R)}$ of $u_{h(R)-1}d_{h(R)-1}$ ($u_{h(R)} \prec_R a_{h(R)}$ in lines 9-10).
	Since $u_{h(R)-1}$ is at some point present in the queue, the ancestor $a_{h(R)}$ of $u_{h(R)-1}d_{h(R)-1}$, must have greater depth than $u_{h(R)-1}$, and therefore, is equal to $u_{h(R)-1}d_{h(R)-1}$. 
	Thus, $u_{h(R)}\in \Ugoal^{\delta_{h(R)}-\delta_{h(R)-1}}$ and $J(u_{h(R)})\leq c_R^{\delta_{h(R)}}+\gamma_{h(R)}$. 
	Then $u_{h(R)}$ or a sibling which prunes $u_{h(R)}$ will be added to the queue; a contradiction of the hypothesis since this signal will be removed from the queue and the algorithm will terminate, returning this signal in line 7. 

\end{proof}

The choice of $\delta_{h(R)}$ and $\gamma_{h(R)}$ in Theorem \ref{thm:main} converge to zero as $R$ tends to infinity by  (\ref{eq:partition_scaling}).
Then by Lemma \ref{lem:approx_equal_optimal} we have $c_{R}^{h(R)} + \gamma_{h(R)}\rightarrow c^{*}$. 
An immediate corollary is that the \GLC method is resolution complete.
It is important to note that for low resolution $R$, it is possible that $c_R^{h(R)}=\infty$. 

\begin{cor}\label{cor:main}
	Let $w_R$ be the signal returned by the \GLC method for resolution $R$. Then $\lim_{R\rightarrow \infty} J_{x_{ic}}(w_R)=c^*$. That is, the \GLC method is a resolution complete algorithm for the optimal kinodynamic motion planning problem.
\end{cor}

\section{Kinodynamic Motion Planning Examples}

The \GLC method (Algorithm \ref{Alg}) was tested on five problems and compared, when applicable, to the implementation of \RRTs from~\cite{rrt_implementation} and \SST from~\cite{BBekris2015}. 
The goal is to examine the performance of the \GLC method on a wide variety of problems. 
The examples include under-actuated nonlinear systems, multiple cost objectives, and environments with/without obstacles.
Note that adding obstacles effectively speeds up the \GLC method since it reduces the size of the search tree.
Another focus of the examples is on real-time application.
In each example the running time for \GLC method to produce a (visually) acceptable trajectory is comparable to the execution time.
Of course this will vary with problem data and computing hardware.    

\paragraph{Implementation Details:}

The \GLC method was implemented in C++ and run with a 3.70GHz Intel Xeon CPU. 
The set $Q$ was implemented with an STL priority queue so that the $\texttt{pop}(Q)$ method and insertion operations have logarithmic complexity in the size of $Q$. 
The set $\Sigma$ was implemented with an STL set which uses a binary search tree so that $\texttt{find}(w,\Sigma)$ also has logarithmic complexity in the size of $\Sigma$. 

Sets $X_\mathrm{free}$ and $X_\mathrm{goal}$ are described by algebraic inequalities.
The approximation of the input space $\Omega_{R}$ is constructed by uniform deterministic sampling of $R^{m}$ (i.e. the resolution $R$ raised to the power $m$ which is the dimension of the input space) controls from $\Omega$.
Recall $m$ is the dimension of the input space.

Evaluation of $u\in\mathcal{U}_\mathrm{feas}$ and $u\in\mathcal{U}_\mathrm{goal}$ is approximated by first numerically computing $\varphi_{x_\mathrm{ic}}(u)$ with Euler integration (except for \RRTs which uniformly samples along the local planning solution). 
The number of time-steps is given by $N=\left\lceil \tau(u)/\Delta\right\rceil $ with duration $\tau(u)/N$. Maximum time-steps $\Delta$ are 0.005 for the first problem, 0.1 for the second through fourth problem, and 0.02 for the last problem. Feasibility is then approximated by collision checking at each time-step along the trajectory.

An additional scaling of the input signal primitive duration by $\gamma$ is introduced in these examples.
Theoretically, the output of the algorithm for any value of $\gamma$ will be consistent with Theorem \ref{thm:main}, but in practice it is advantageous to select a scaling $\gamma$ representing a characteristic time-scale for the problem. For example, if the motion is expected to take on the order of microseconds, then selecting $\gamma=10^{-6}$ would adjust the time-scale of the search appropriately.

\subsection{Shortest Path Problem}\label{ex:kine}

A shortest path problem in $\mathbb{R}^n$ can be represented by the dynamic model
\begin{equation}
\frac{d}{dt} x(t)=u(t),
\end{equation} 
the running-cost
\begin{equation}
g(x,u)=1,
\end{equation} 
and the control input space 
\begin{equation}
\Omega=\{u\in\mathbb{R}^{2}:\,\Vert u\Vert_{2}=1\}.
\end{equation}
In this example, \xfree and \xgoal are described by polyhedral sets illustrated in Figure \ref{fig:kine_bench}.

The parameters for the \GLC method are summarized in Table \ref{tab:shortest_path_params}. 
Note that $h(R)$ and $\eta(R)$ satisfy the asymptotic constraints \eqref{eq:horizon} and  \eqref{eq:partition_scaling}.
\begin{table}[h]
	\vspace{0.075in}
	\begin{center}
		\caption{Tuning parameter selection for the shortest path example. \label{tab:shortest_path_params}}
		\begin{tabular}{ m{6.0cm} | m{5cm}} 
			Resolution range: $R$ & $R=\{20,25,...,200\}$ \\
			\hline
			Horizon limit: $h(R)$ & $100R\log(R)$ \\
			\hline
			Partition scaling: $\eta(R)$& $R^{2}/300$ \\ 
			\hline
			Control primitive duration: $\gamma/R$ & $10/R$
		\end{tabular}
		\vspace{0em}
	\end{center}
	\vspace{-3mm}
\end{table}
The \GLC, \SST, and \RRTs methods were all tested in this example. 
The average performance from 10 trials is reported for the \SST and \RRTs methods. 
Figure \ref{fig:kine_bench} summarizes the results.
This example also considers a guided search using the cost of the \RRTs local planning subroutine solution to the goal as a heuristic.

\begin{figure}[h]
	\centering
	\includegraphics[width=0.7\textwidth]{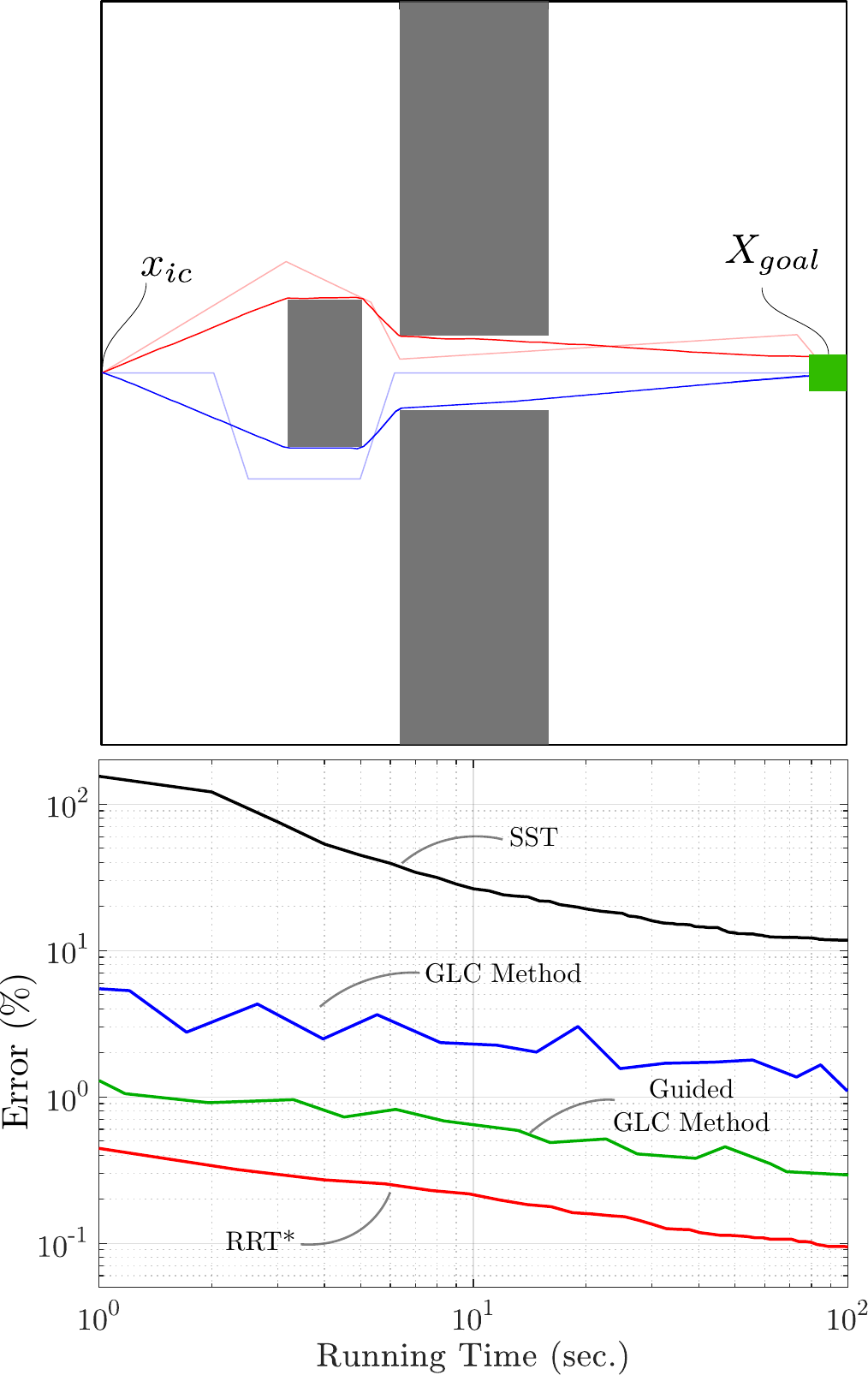}
	\caption{(top) The free space \xfree is shown in white with obstacles shown in grey. The green square is the goal region \xgoal. The light red and blue paths are low resolution solutions obtained by the \RRTs method and the \GLC method respectively. The dark colored paths correspond to high resolution solutions. (bottom) The running time required by each algorithm to produce a solution of a particular accuracy is shown. The \SST method is shown in black, the \GLC method in blue, and the \RRTs method in red. Additionally, a guided implementation of the \GLC method utilizing the \RRTs steering solution as a heuristic is shown in green.}\label{fig:kine_bench}
\end{figure}
This is the only example in this section where the exact solution to the problem is known. In this case we can compare the relative convergence rates of the \GLC method, \SST, and \RRTs. 

\subsection{Torque-Limited Pendulum Swing-Up}
A minimum-time torque-limited pendulum swing-up problem can be represented by the dynamic model
\begin{equation}
\dot{\theta}(t)=\omega(t),\qquad \dot{\omega}(t)=-\sin(\theta(t))+u(t),
\end{equation}
with the running cost 
\begin{equation}
g(\theta,\omega,u)=1,
\end{equation} 
and the control input space 
\begin{equation}
\Omega=[-0.2,0.2].
\end{equation}
The free space and goal set are described by
\begin{equation}
\xfree=\mathbb{R}^2,\qquad \xgoal=\{(\theta(t),\omega(t))\in \mathbb{R}^2:\,\Vert(\theta(t) \pm \pi,\omega(t) \Vert_2 \leq 0.1\}.
\end{equation}
The initial state is the origin $(\theta(0),\omega(0))=(0,0)$.
The parameters for the \GLC method are summarized in Table \ref{tab:shortest_path_params}. 
\begin{table}[h]
	\vspace{0.075in}
	\begin{center}
		\caption{Tuning parameter selection for the pendulum swing-up example. \label{tab:pend_swingup_params}}
		\begin{tabular}{ m{6.0cm} | m{5cm}} 
			Resolution range: $R$ & $R=\{4,5,6,7,8\}$ \\
			\hline
			Horizon limit: $h(R)$ & $100R\log(R)$ \\
			\hline
			Partition scaling: $\eta(R)$& $R^{5/2}/16$ \\ 
			\hline
			Control primitive duration: $\gamma/R$ & $6/R$
		\end{tabular}
		\vspace{0em}
	\end{center}
	\vspace{-3mm}
\end{table}
The \GLC and \SST methods were tested in this example with the average performance from 10 trials reported for the \SST method. 
Figure \ref{fig:pend_bench} summarizes the results.

\begin{figure}[h]
	\centering
	\includegraphics[width=0.7\textwidth]{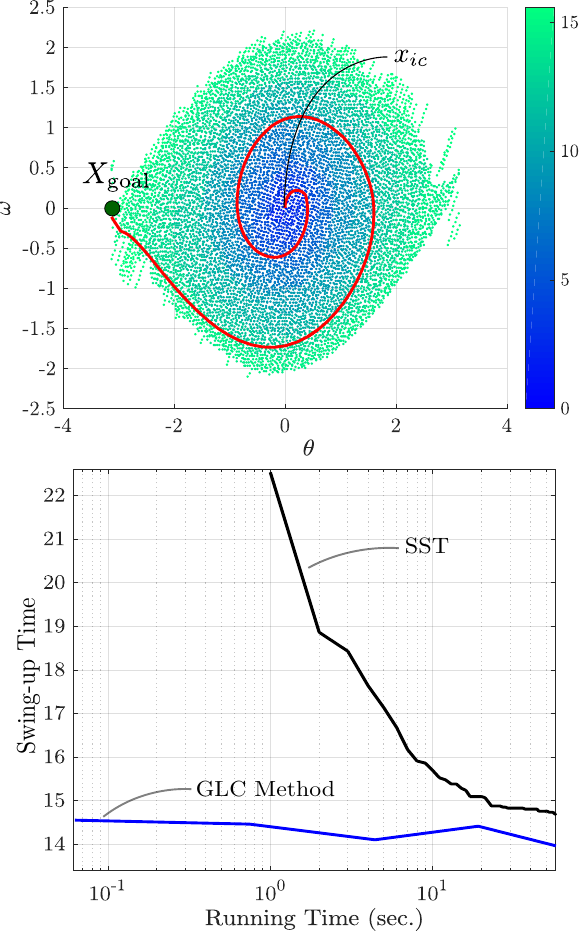}
	\caption{(top) The state space trajectory of the pendulum swing-up motion is shown in red. Colored markers show terminal states of minimal trajectories which have been evaluated with the color indicating the cost of the trajectory terminating at that point. (bottom) The running-time and cost of of the output to the \GLC algorithm for increasing resolution is shown in blue. The 10 trial average of the \SST algorithms output from 1 second of run-time to 500 seconds of run-time is shown in black. Note that the swing-up time is dimensionless.}\label{fig:pend_bench}
\end{figure}

\subsection{Torque Limited Acrobot Swing-Up}

The acrobot is a double link pendulum actuated at the middle joint. 
The expression for the four dimensional system dynamics are cumbersome to describe and we refer to~\cite{spong1995swing} for the details. 
The model parameters, free space, and goal region are identical to those in the benchmark provided in \cite{BBekris2015} with the exception that the radius of the goal region is reduced from $2.0$ to $0.5$.
A minimum-time running-cost 
\begin{equation}
g(x,u)=1,
\end{equation}
is used.
The control input space is $\Omega=[-4.0,4.0]$ representing the minimum and maximum torque that can be applied at the middle joint. 
The parameters for the \GLC method are summarized in Table \ref{tab:acro_swingup_params}.
\begin{table}[h]
	\vspace{0.075in}
	\begin{center}
		\caption{Tuning parameter selection for the acrobot swing-up example. \label{tab:acro_swingup_params}}
		\begin{tabular}{ m{6.0cm} | m{5cm}} 
			Resolution range: & $R\in\{4,5,...,10\}$ \\
			\hline
			Horizon limit: $h(R)$ & $100R\log(R)$ \\
			\hline
			Partition scaling: $\eta(R)$& $R^{2}/16$ \\ 
			\hline
			Control primitive duration: $\gamma/R$ & $6/R$
		\end{tabular}
		\vspace{0em}
	\end{center}
	\vspace{-3mm}
\end{table}
The \GLC and \SST methods were tested in this example with the average performance from 10 trials reported for the \SST method. 
Figure \ref{fig:acro_bench} summarizes the results.

\begin{figure}[h]
	\centering
	\includegraphics[width=0.5\textwidth]{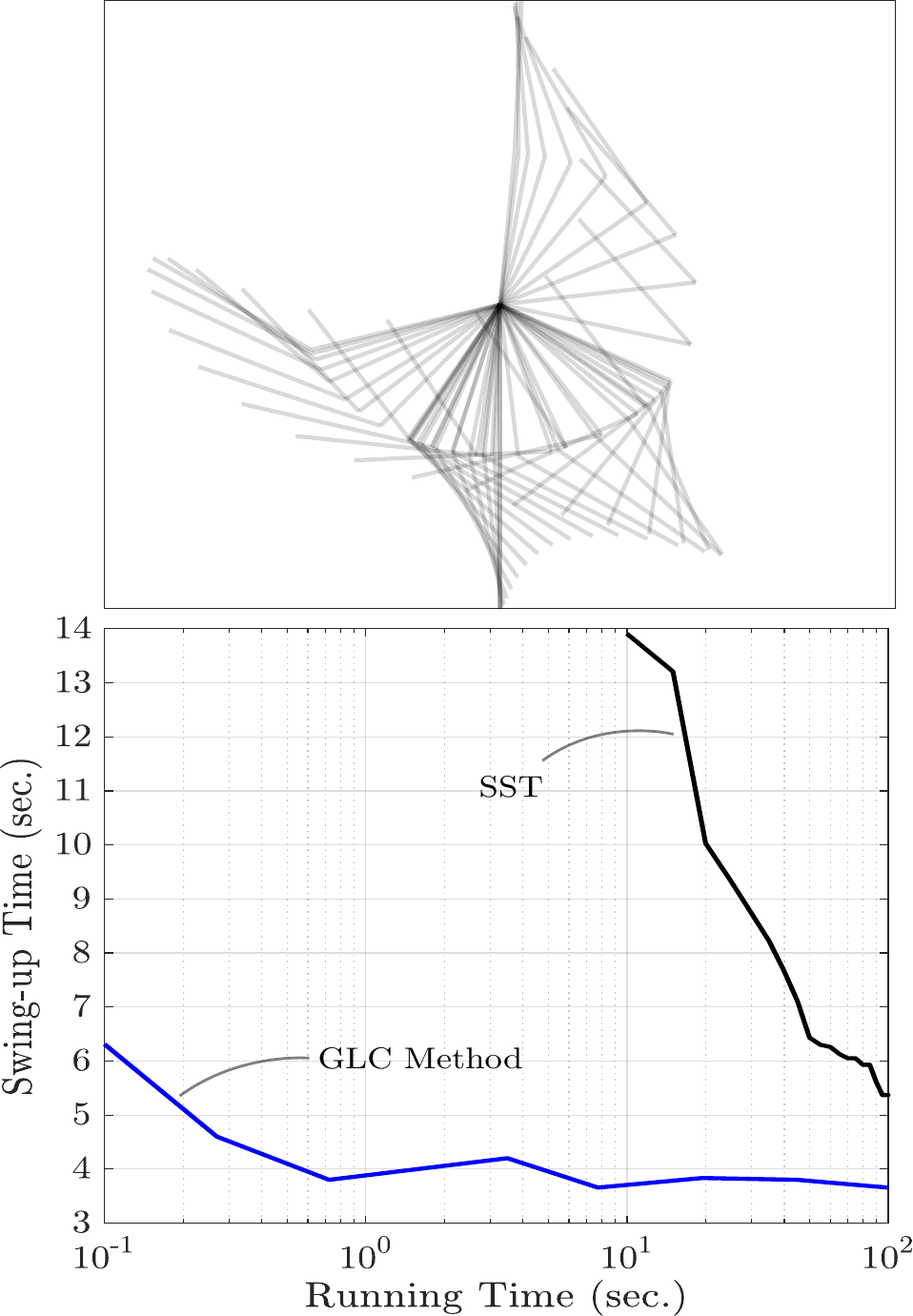}
	\caption{(top) The swing-up motion of the acrobot returned by the \GLC algorithm. (bottom) The running-time and cost of of the output to the \GLC algorithm for increasing resolution is shown in blue. The 10 trial average of the \SST algorithms output from 1 second of run-time to 100 seconds of run-time is shown in black.}\label{fig:acro_bench}
\end{figure}

\subsection{Thrust Limited 3D Point Robot}\label{ex:3D_robot}

To emulates the mobility of an agile aerial vehicle (e.g. a quadrotor with high bandwidth attitude control), the system dynamics are modeled by
\begin{equation}
\dot{x}(t)=v(t),\qquad \dot{v}(t)=5.0\cdot u(t)-0.1\cdot v(t)\cdot \Vert v(t) \Vert_2,
\end{equation} 
where $x(t)$, $v(t)$, and $u(t)$ are each elements of $\mathbb{R}^{3}$; there are a total of six states and three control inputs.
The quadratic dissipative force anti-parallel to the velocity $v(t)$ models aerodynamic drag during high speed flight.
The minimum time running cost is 
\begin{equation}
g(x,v,u)=1,
\end{equation}
and the input space is 
\begin{equation}
\Omega=\{u\in\mathbb{R}^{3}:\Vert u\Vert_{2}\leq1\}.
\end{equation}
The planning task is for the point robot to navigate from an initial state in one room to a destination in an adjacent room connected by a small window.
This free space is illustrated in Figure \ref{fig:uav_bench}.
The dark blue marker in the leftmost room illustrates the goal configuration, and the light blue marker in the rightmost room illustrates the starting configuration. 
The velocity is initially zero, and the terminal velocity is left free.
A guided search is also considered with heuristic given by the distance to the goal divided by the maximum speed $v$ of the robot (A maximum speed of $\sqrt{{50}}$ can be determined from the dynamics and input constraints). 
The parameters for the \GLC method are summarized in Table \ref{tab:shortest_path_params}.

\begin{table}[h]
	\vspace{0.075in}
	\begin{center}
		\caption{Tuning parameter selection for the thrust limited 3D point robot example. \label{tab:uav_params}}
		\begin{tabular}{ m{6.0cm} | m{5cm}} 
			Resolution range: & $R\in\{8,9,...,12\}$ \\
			\hline
			Horizon limit: $h(R)$ & $100R\log(R)$ \\
			\hline
			Partition scaling: $\eta(R)$& $R^{2}/64$ \\ 
			\hline
			Control primitive duration: $\gamma/R$ & $10/R$
		\end{tabular}
		\vspace{0em}
	\end{center}
	\vspace{-3mm}
\end{table}
The control input approximation $\Omega_R$ is generated by an increasing number of points distributed uniformly on $\Omega$. However, because of the symmetry of the sphere, the computational procedure for generating $\Omega_R$ in this case produces the same set of points up to a random orthogonal transformation~\cite{paden2017}.
Since there is some randomness in this example, the results reported in Figure \ref{fig:uav_performance} are averaged over 10 trials both for the \GLC and \SST algorithm. 
Figure \ref{fig:uav_bench} shows the three outputs of the \GLC algorithm at three different resolutions to illustrate the improvement in solution quality with increasing resolution.

\begin{figure}[h]
	\centering
	\includegraphics[width=1.0\textwidth]{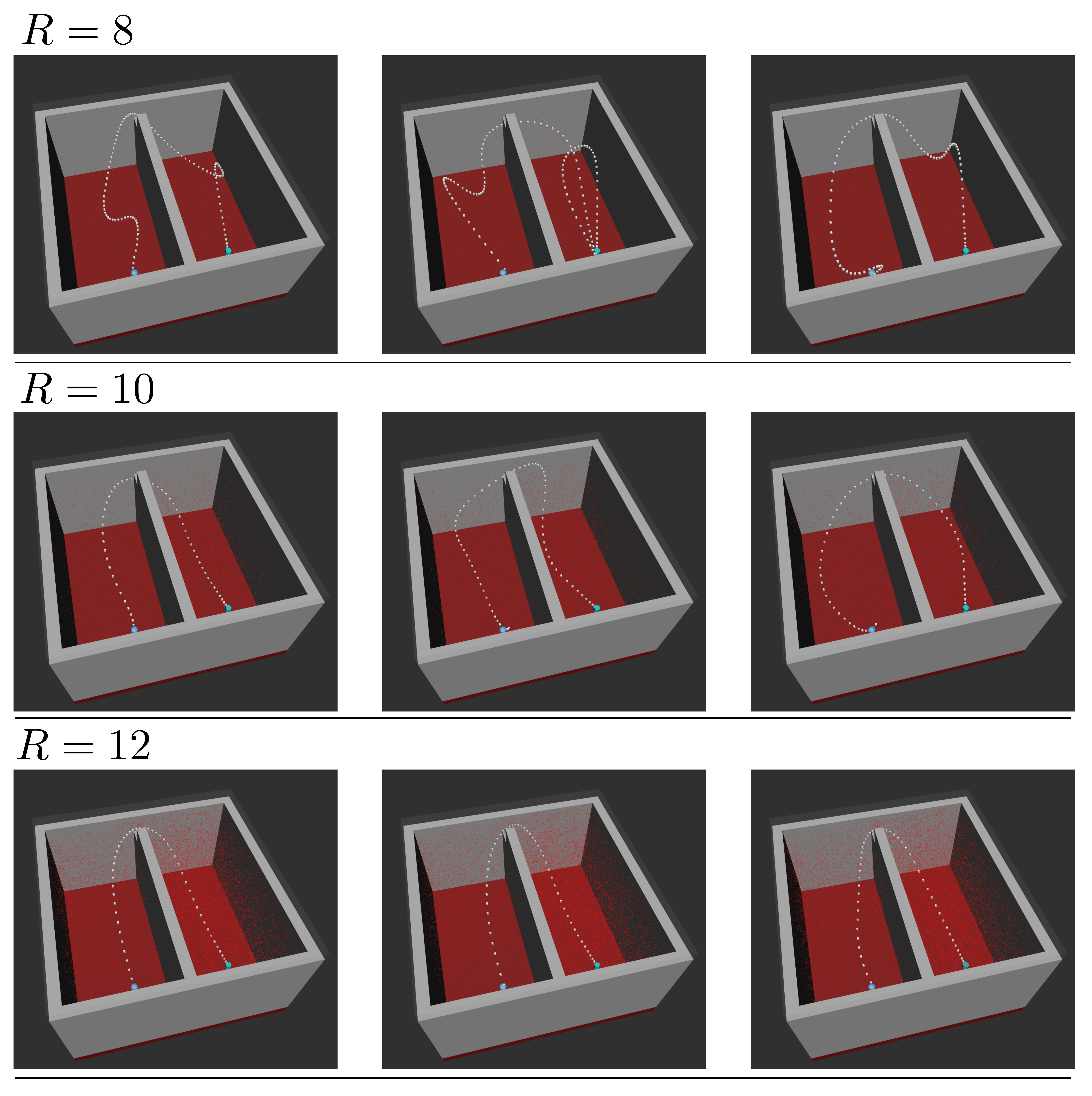}
	\caption{The top row of graphics show trajectories beginning at the start configuration (light blue marker) and terminating at the goal configuration (dark blue marker) returned by the \GLC algorithm with the resolution $R=8$. Since the input space $\Omega_R$ is deterministic up to a random orthogonal transformation in this example, there is some variation in the output. The middle and bottom graphics show the same results for $R=10$ and $R=12$ respectively. With increasing resolution the trajectories converge to the optimal trajectory.}\label{fig:uav_bench}
\end{figure}

\begin{figure}[h]
	\centering
	\includegraphics[width=0.8\textwidth]{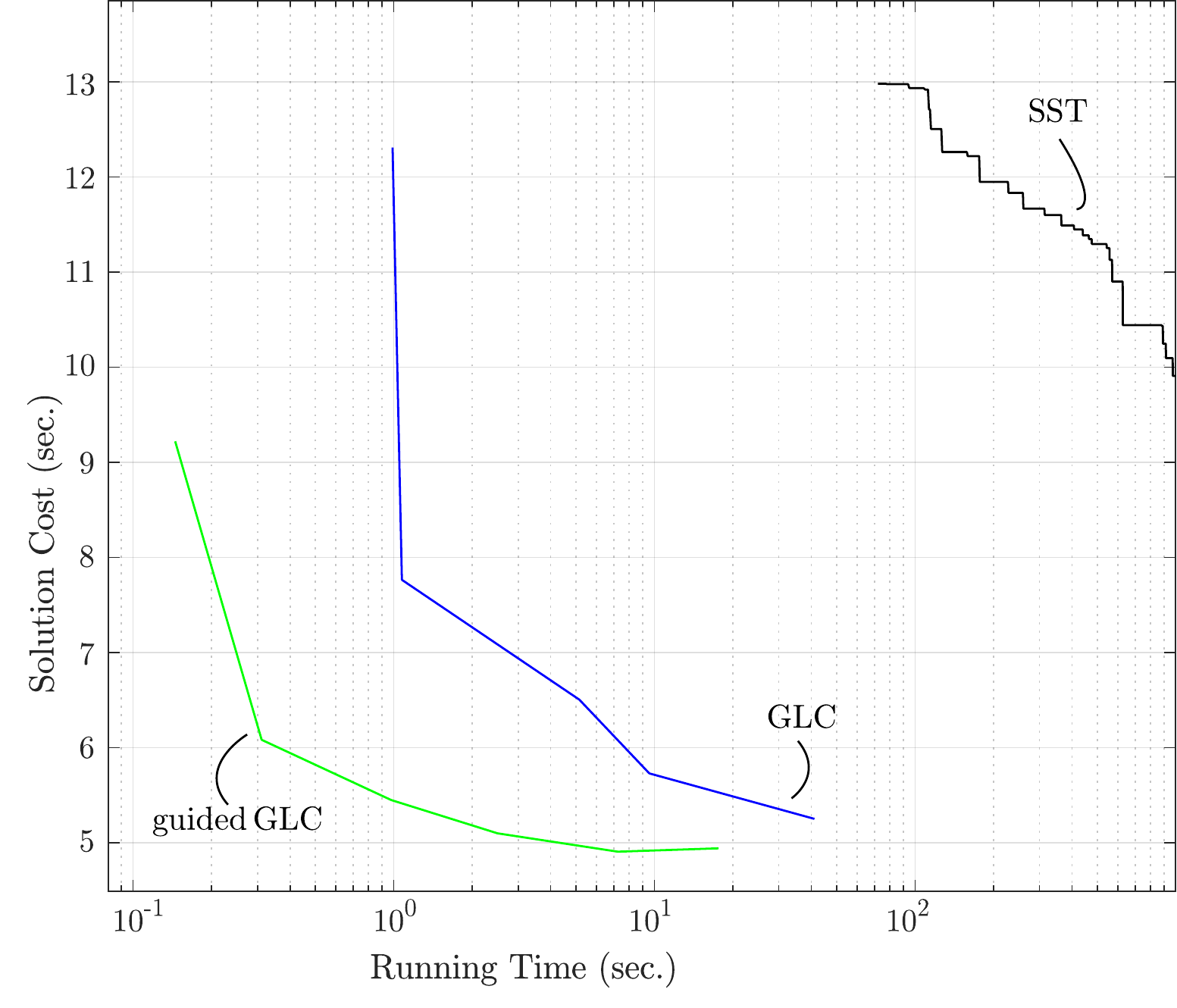}
	\caption{The running time of the \GLC algorithm (blue) and a guided variant (green) for $R\in \{8,9,10,11,12\}$ are compared to the \SST algorithm (black).}\label{fig:uav_performance}
\end{figure}

\subsection{Nonholonomic Wheeled Robot}

The dynamic model emulating the mobility of a wheeled robot is given by
\begin{equation}
\dot{p_1}(t)=\cos(\theta(t)),\quad \dot{p_2}(t)=\sin(\theta(t)),\quad \dot{\theta}(t)=u(t).
\end{equation}
A running cost which penalizes a combination of time and lateral acceleration (believed to be correlated with rider comfort) is given by
\begin{equation} 
g(p_1,p_2,\theta,u)=1+2u^{2}.
\end{equation}
The input space, which is consistent with a turning radius of $1$, is
\begin{equation} 
\Omega=[-1,1].
\end{equation}
The parameters for the \GLC method used in this example are summarized in Table \ref{tab:shortest_path_params}.

\begin{table}[h]
	\vspace{0.075in}
	\begin{center}
		\caption{Tuning parameter selection for the wheeled robot example. \label{tab:car_params}}
		\begin{tabular}{ m{6.0cm} | m{5cm}} 
			Resolution range: & $R\in\{4,5,...,9\}$ \\
			\hline
			Horizon limit: $h(R)$ & $5R\log(R)$ \\
			\hline
			Partition scaling: $\eta(R)$& $R^{5/\pi}/15$ \\ 
			\hline
			Control primitive duration: $\gamma/R$ & $10/R$
		\end{tabular}
		\vspace{0em}
	\end{center}
	\vspace{-3mm}
\end{table}
\begin{figure}[h]
	\centering
	\includegraphics[width=0.7\textwidth]{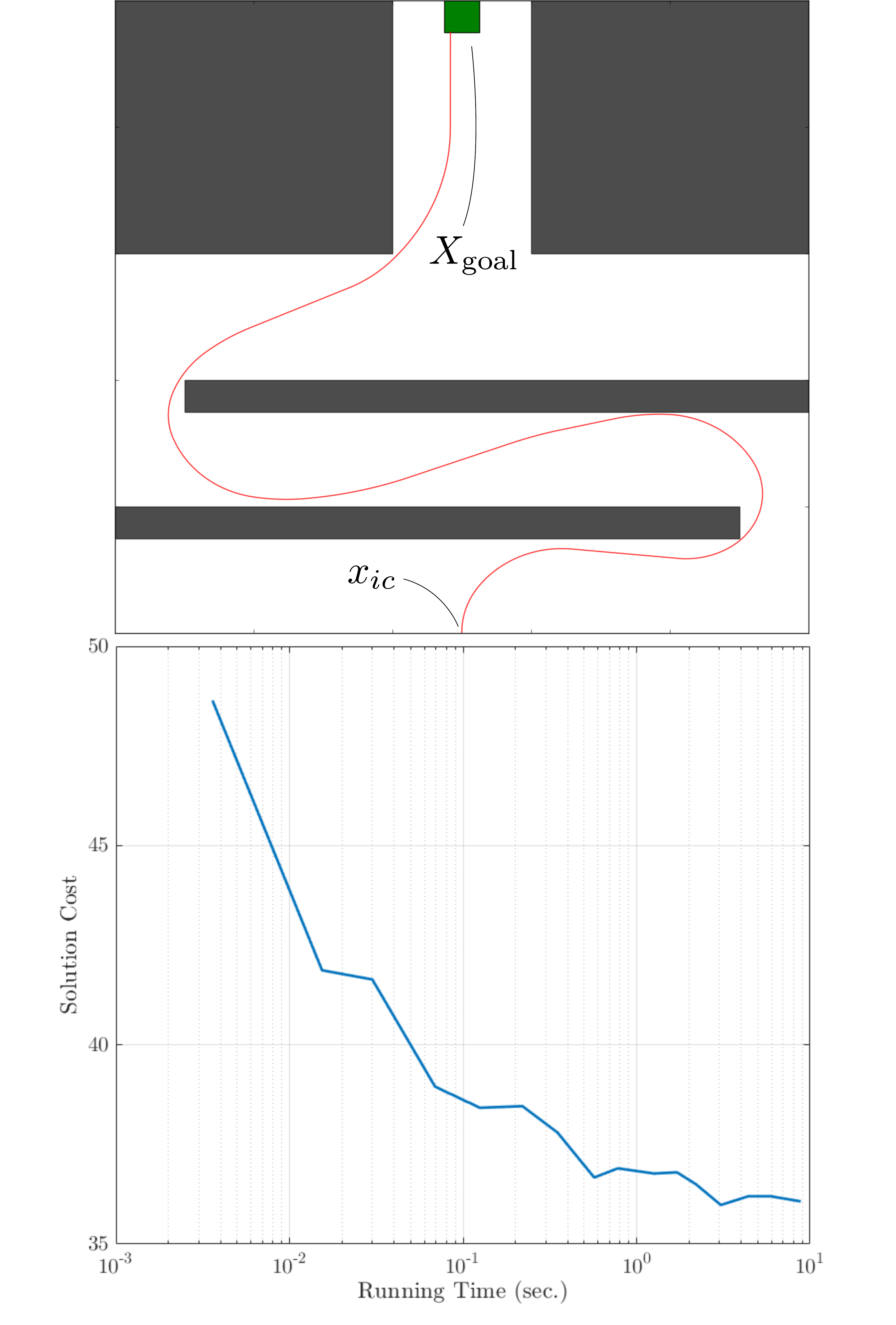}
	\caption{(top) The blue curve illustrates a trajectory from the initial configuration to the goal set which minimizes a mixed penalty on time and lateral acceleration which reflects a comfort objective for a passenger vehicle. (bottom) The running time of the \GLC algorithm required to obtain a solution of given cost.}\label{fig:car_bench}
\end{figure} 

Figure \ref{fig:car_bench} summarizes the performance of the \GLC algorithm on this problem.
\subsection{Observations and Discussion}
The first observation is that all algorithms tested generally produce lower cost solutions with increased run-time. 
In example \ref{ex:kine}, where the optimal cost was known a-priori, the \GLC and \RRTs algorithms were observed to converge to this value, and the \SST algorithm converged to some approximately optimal cost.
In the remaining examples, where the optimal cost was unknown and \RRTs was not applicable, the \GLC and \SST algorithms were observed to produce solutions of generally decreasing cost with increased run-time.
The local planning subroutine for the \RRTs algorithm in example \ref{ex:kine} was simply the line segment connecting two points. 
This is requires virtually no computation to generate making \RRTs the preferred algorithm.

Another observation was that the cost of solutions output by the \GLC algorithm generally decreased with increasing resolution, but did not decrease monotonically.
This is not inconsistent with Corollary \ref{cor:main} since the result only claims convergence.
The cause of the nonmonotonic convergence is that each run of Algorithm \ref{Alg} operates on $\mathcal{U}_R$ for a fixed $R$, and since $\mathcal{U}_{R}\not\subset\mathcal{U}_{R+1}$ it is possible that an optimal signal in $\mathcal{U}_R$ may be better than any signal in $\mathcal{U}_{R+1}$ for certain values of $R$.
In example \ref{ex:3D_robot}, $\Omega_R$ was randomly generated which required averaging several trials. 
This had a smoothing effect on the performance curves which is analogous to the effect of averaging the outputs of \RRTs and \SST. 

A very important distinction between the \GLC and the other methods \RRTs and \SST is that the \GLC algorithm  must run to completion before a solution is returned, while \RRTs and \SST are incremental algorithms that can be interrupted at any time and return the current best solution.
This is a desirable feature for real-time planning and will be an important next step in the development of this approach.
While, \SST offers similar theoretical guarantees to the \GLC algorithm, the difference in running time is several orders of magnitude in all of the trials making the \GLC preferable under most circumstances.

A final remark is that the \GLC algorithm suffers from the curse of dimensionality like all other motion planning algorithms. 
The PSPACE-hardness of these problems suggests this issue will never be resolved. 
The examples demonstrate the \GLC algorithm on state spaces of dimension two to six, and we observe that with increasing dimension, the time required to obtain visually acceptable trajectories increases rapidly with dimension.
Example \ref{ex:3D_robot} had six states and required one to five seconds to produce visually acceptable trajectories. This suggests that six states is roughly the limit for real-time applications.

\bibliographystyle{amsalpha}
\bibliography{chap5}

\chapter{Admissible Heuristics for Optimal Kinodynamic Motion Planning}\label{chap:admissible_heuristics}

Many graph search problems arising in robotics and artificial intelligence that would otherwise be intractable can be solved efficiently with an effective heuristic informing the search.
However, efficiently obtaining a shortest path on a graph requires the heuristic to be admissible as described in the seminal paper introducing the $\rm A^*$ algorithm~\cite{hart1968formal}. In short, an admissible heuristic provides an estimate of the optimal cost to reach the goal from every vertex, but never overestimates the optimal cost.
A good heuristic is one which closely underestimates the optimal cost-to-go from every vertex to the goal.
The workhorse heuristic in shortest path problems is the Euclidean distance from a given state to the goal.
This heuristic is admissible irrespective of the obstacles in the environment since, in the complete absence of obstacles, the length of the shortest path from a particular state to the goal is the Euclidean distance between the two points. 
Returning to example \ref{ex:kine} of the last chapter, using the Euclidean distance as a heuristic in the \GLC algorithm reduces the number of iterations required by $84\%$ in comparison to a uniform cost search.
Figure \ref{fig:intro_demo} shows a side-by-side comparison of the region of \xfree explored by the informed search versus the uniform cost search.

The value of admissible heuristics for kinodynamic planning has already been identified in a number of works where admissible heuristics for individual problems have been derived and used to reduce the running time of planning algorithms~\cite{informedRRT,batchInformedRRT,hauser2015asymptotically,paden2016generalized}. 
The \GLC method is among these. 

While admissibility of a heuristic is an important concept it gives rise to two challenging questions in the context of kinodynamic motion planning.
First, without a priori knowledge of the optimal cost-to-go, how do we verify the admissibility of a candidate heuristic; and second, how do we systematically construct good heuristics for kinodynamic motion planning problems?  
The goal of this chapter is to present a principled study of admissible heuristics for kinodynamic motion planning which answers these questions.

\begin{figure}
	\centering{}
	\includegraphics[width=1.0\columnwidth]{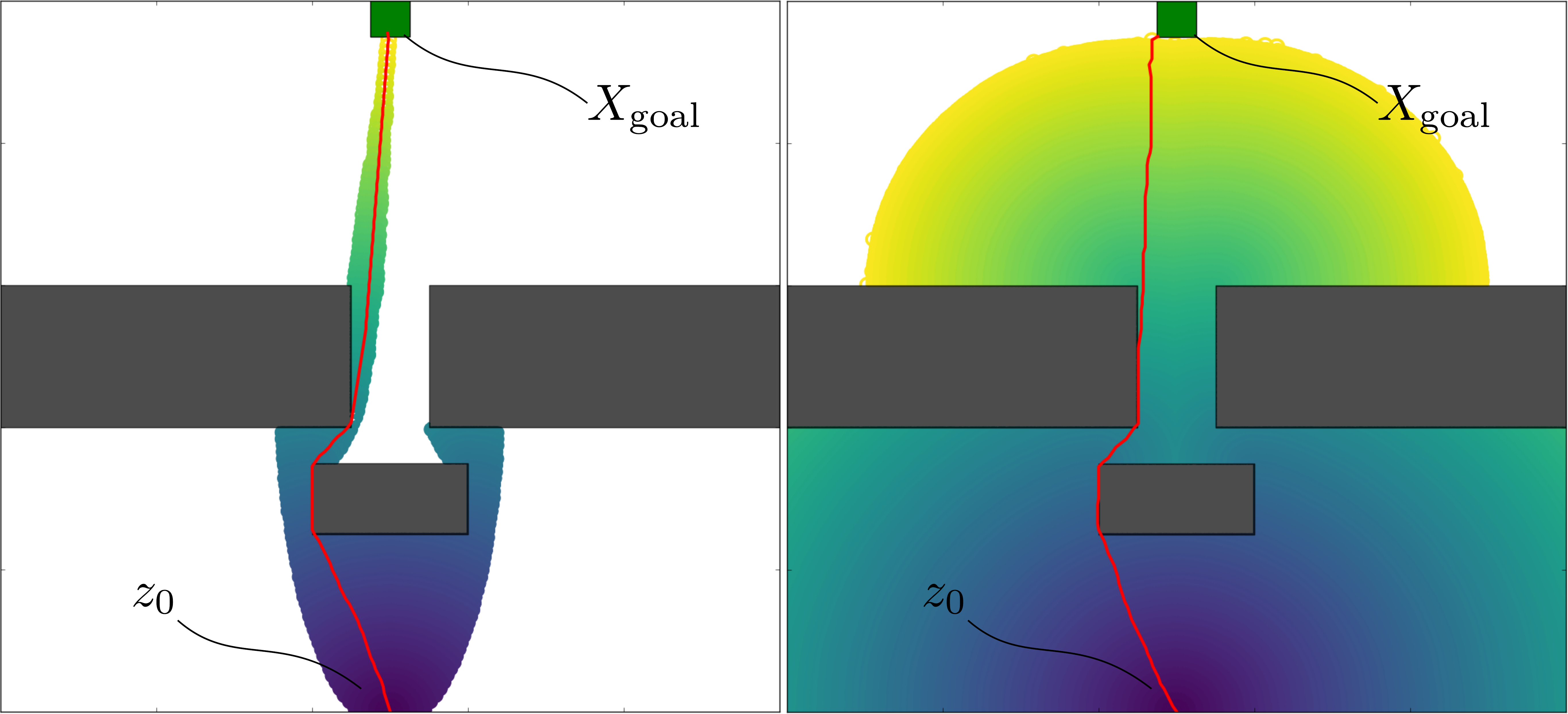}
	\caption{Colored markers indicate the terminal state of a trajectory evaluated by the \GLC algorithm with color indicating the cost to reach that state. The left tile shows the region explored by the informed \GLC algorithm while the right tile is result of the uniform cost search.}\label{fig:intro_demo}
\end{figure}

\paragraph{The Value Function:}
The optimal cost-to-go or \emph{value function}\index{value function} $V:\xfree\rightarrow \mathbb{R}$ is central to this discussion and describes the greatest lower bound on the cost to reach the goal set from the initial state $x_0\in \xfree$.
That is,
\begin{equation}\label{eq:value_func}
V(x_0)=\inf_{u\in \Ugoal} \left\{ J_{x_{0}}(u) \right\},
\end{equation}
where \Ugoal and \Xgoal are redefined (with some abuse of notation) with respect to an initial condition $x_0$.
It is important to note that this definition of $V$ implies that $V$ is well defined and unique.
The following properties of $V$ follow immediately from the assumption $g(z,w) \geq 0$ in (\ref{eq:cost}):
\begin{equation}
V(z) \geq 0\quad \forall z \in X_\mathrm{free}, \qquad
V(z) = 0 \quad \forall z \in X_\mathrm{goal}.
\label{eq:positive_edge_weight}
\end{equation} 
It is well known in optimal control theory that when the value function is differentiable\footnote{The equivalence holds even when the value function is not differentiable if a generalized solution concept known as a \emph{viscosity solution}\index{viscosity solution} is used~\cite{crandall1983viscosity}.}, obtaining $V(x_0)$ in \eqref{eq:value_func} for every $x_0$ in \xfree is equivalent to solving the celebrated \emph{Hamilton-Jacobi-Bellman equation}\index{Hamilton-Jacobi-Bellman equation},
\begin{equation}\tag{HJB}\label{eq:HJB}
\underset{w \in \Omega}{\inf} \left\{ \left\langle \nabla V(z), f(z,w) \right\rangle + g(z,w) \right\}=0, \quad\forall z\in X_\mathrm{free}\setminus \bar{X}_\mathrm{goal},
\end{equation}
with the boundary condition $V(z)=0$ for all $z$ in the closure of $X_\mathrm{goal}$. 
In general solving this equation is equivalent to solving Problem \ref{Problem} from every initial condition in \xfree which is considerably more difficult.
\section{Graph-Based Approximations}\label{sec:admissible_heuristics}
Most kinodynamic motion planning algorithms, the \GLC algorithm included, structure an approximation of the problem as a graph $(\mathcal{V},E)$ with paths in the graph corresponding to trajectories satisfying the dynamic model \eqref{eq:int_dynamics}.  
Conceptually, shortest paths on the graph are in some sense faithful approximations of optimal feasible trajectories for the problem. 
This was analyzed for the \GLC algorithm in Section \ref{sec:consistency}. 

The non-negativity of the cost function (\ref{eq:cost}) enables a nonnegative edge-weight to be assigned to each edge corresponding to the cost of the trajectory in relation with that edge.
The approximated problem can then be addressed using shortest path algorithms for graphs.

The value function $\hat{V}:\mathcal{V}\rightarrow \mathbb{R}$ on the weighted graph is analogous to the value function $V$ in the original problem.
For a vertex $z$ in the graph, $\hat{V}(z)$ is the cost of a shortest path to one of the goal vertices: $\mathcal{V}\cap X_\mathrm{goal}$.  
Since the feasible trajectories represented by the graph are a subset of the feasible trajectories of the problem we have the inequality
\begin{equation}\label{eq:restriction}
V(z)\leq \hat V(z)\qquad \forall z \in \mathcal{V}.
\end{equation}

\subsection{Admissible Heuristics}
Recall that the \GLC algorithm utilized a priority queue which evaluated candidate paths in a graph in order of their cost in \eqref{eq:queue}.
Evaluating candidates in some order of merit is known as a best first search.
This is accomplished with an operation which is traditionally called "$\mathtt{pop}$" in software libraries\footnote{This operation is technically not a function in the usual sense. Rather, it is a relation.}.
When we have a \emph{heuristic}\index{heuristic} $H:\xfree\rightarrow \mathbb{R}$ that estimates the remaining cost to reach the goal, the ordering of the priority queue is modified so that the $\mathtt{pop}$ operation returns an element of the queue $Q$ satisfying,
\begin{equation}
\mathtt{pop}(Q)\in \underset{u\in Q}{\rm argmin} \{J_{x_{ic}}(u)+ H([\varphi_{x_{ic}}(u)](\tau(u))) \}.
\end{equation} 
This orders the priority queue according to the cost of the input signal $u$ plus the estimate of the remaining cost to reach the goal from the terminal state of the trajectory associated to the input signal $[\varphi_{x_{ic}}(u)](\tau(u)))$.

A heuristic $H$ for a problem with value function $V$ is \emph{admissible}\index{admissible heuristic} if it satisfies
\begin{equation}\label{eq:admissible}
H(z)\leq {V}(z)\qquad \forall z \in X_\mathrm{free}.
\end{equation}
When a heuristic is admissible, the first signal found to be in \Ugoal is an optimal signal within the approximation \UR.
In light of (\ref{eq:restriction}), an admissible heuristic for the kinodynamic motion planning problem will also be admissible for any graph-based approximation to the problem.
For the remainder of this chapter, the set of candidate heuristics will be restricted to differentiable scalar valued functions on ${X}_{\rm free}$. 

Another important concept related to heuristics is referred to as consistency\footnote{This usage of the term consistency is distinct from the usage in chapter \ref{chap:approximation}. This is an unfortunate double usage of the term in robotics.}. 
Consistency of a heuristic is similar to the triangle inequality for metric spaces (cf. Appendix \ref{app:topo}). 
To define consistency, the value function for the kinodynamic motion planning problem must be parametrized by the goal set. 
This is denoted $V(z;X_\mathrm{goal})$.
A heuristic $H$ is \textit{consistent}\index{consistent heuristic} if,
\begin{equation}\label{eq:consistent}\arraycolsep=1.4pt\def\arraystretch{1.5}
\begin{array}{l}
H(z)=0,\quad \forall z\in X_\mathrm{goal},\\
H(z) \leq V(z;\{y\}) + H(y), \quad \forall y,z\in X_\mathrm{free}.
\end{array}
\end{equation}
Note that the inequality above involves the optimal cost-to-go from $z$ to $y$. 
While a best first search with an admissible heuristic will return an optimal solution, if the heuristic is not consistent, then the search may not benefit from using the heuristic in terms of iterations required to terminate the search. 

Since the value function is unknown it is difficult to check that (\ref{eq:admissible}) and \eqref{eq:consistent} are satisfied for a particular heuristic $H$.
The next result characterizes a convex subset of admissible heuristics. This sufficient condition for admissibility can be checked directly from the problem data.
\begin{theorem}\label{thm:admissibility}
	A heuristic $H$  is admissible if the following two conditions are satisfied:
	\begin{equation}\tag{AH1}\label{eq:AH1}
	H(z) \leq 0 \qquad \forall z \in  X_\mathrm{goal},
	\end{equation}
	\begin{equation}\tag{AH2}\label{eq:AH2}
	\left\langle \nabla H(z),f(z,w)\right\rangle +g(z,w) \geq 0,\quad \forall z\in \xfree\setminus cl(\xgoal),\,\,{\rm and}\,\, \forall w\in \Omega.
	\end{equation} 
	Further, the heuristic is consistent if 
	\begin{equation}\tag{CH1}\label{eq:ch1}
		H(z) = 0 \qquad \forall z \in  X_\mathrm{goal},
	\end{equation}
\end{theorem}
\begin{proof}
	(admissibility) Let $u$ be an input signal resulting in a feasible trajectory $x$ terminating in the goal.
	By construction the terminal state $x(\tau(u))$ is in the goal set $\xgoal$. Therefore, $H(x(\tau(u)))\leq 0$ by (\ref{eq:AH1}). 
	This implies the inequality
	\begin{equation}
	H(x(0))  \leq  H(x(0))-H(x(T)).
	\end{equation}
	Then by the fundamental theorem of calculus 
	\begin{equation}
	H(x(0)) \leq -\int_{0}^{\tau(u)} \frac{d}{dt} H(x(t)) \,\mu(dt).
	\end{equation}
	Applying the chain rule to the integrand yields
	\begin{equation}
	H(x(0)) \leq -\int_{0}^{\tau(u)}  \left\langle \nabla H(x(t)),f(x(t),u(t)) \right\rangle \, \mu(dt).
	\end{equation}
	Since $x$ is feasible, $x(t)\in \xfree$ for all $t\in[0,\tau(u)]$. Directly applying \eqref{eq:AH2}, 
	\begin{equation}
			H(x(0)) \leq \int_{0}^{\tau(u)}  g(x(t),u(t)), \mu(dt).
	\end{equation}
	The right hand side is now simply the cost $J_{x_{ic}}(u)$ associated to the input signal $u$.
		\begin{equation}
		H(x(0)) \leq J_{x_{ic}}(u).
		\end{equation}
Since the choice of $u$ was arbitrary within $\Ugoal$, we conclude that $H(x(0))\leq J_{x_{ic}}(u)$ for every $u\in \Ugoal$ with associated trajectory $x$. Equivalently,
\begin{equation}
H(z)\leq V(z) \qquad \forall z\in \xfree. 
\end{equation} 

(consistency) Let $u$ be an input signal resulting in a trajectory $x$ from $x(0)=z$ and terminating at $x(\tau(u))=y$.
By a similar line of reasoning as for the admissibility argument observe that 
	\begin{equation} \arraycolsep=1.4pt\def\arraystretch{1.5}
	\begin{array}{rcl}
	H(x(0))-H(x(\tau(u)))&=& -\int_{0}^{\tau(u)} \frac{d}{dt} H(x(t)) \,\mu(dt) \\
	&=&  -\int_{0}^{\tau(u)}  \left\langle \nabla H(x(t)),f(x(t),u(t)) \right\rangle \,\mu(dt) \\
	&\leq&  \int_{0}^{\tau(u)}  g(x(t),u(t) \,\mu(dt) \\
	&=& J(x,u).
	\end{array}
	\end{equation} 
	Thus, $H(z)-H(y)$ lower bounds $J_{x_0}(u)$ for any input signal resulting in a trajectory from $z$ to $y$.
	Since $V(\,\cdot \,;\{y\})$ is the greatest lower bound to the cost of such trajectories we have 
	\begin{equation}
	H(z)-H(y) \leq V(z;\{y\}),\qquad \forall y,z\in X_\mathrm{free}.
	\end{equation}
	Rearranging the expression above yields the definition of consistency for $H(\,\cdot\,;X_\mathrm{goal})$.
\end{proof}

It is interesting to note that the admissibility (and consistency) conditions are affine constraints which implies that the subset of admissible heuristics satisfying \eqref{eq:AH1} and \eqref{eq:AH2} are convex. 
\begin{cor}
	The set of candidate heuristics satisfying the admissibility conditions \eqref{eq:AH1} and \eqref{eq:AH2} is convex.
\end{cor}
\begin{proof}
Choose $\lambda\in [0,1]$, and heuristics $H_1$ and $H_2$ satisfying \eqref{eq:AH1} and \eqref{eq:AH2}. 
The convex combination is given by $\lambda H_1+(1-\lambda)H_2$. By \eqref{eq:AH1}, both $H_1(z)$ and $H_2(z)$ are less than or equal to zero for all $z$ in \xgoal. Thus, 
\begin{equation}
\lambda H_1(z)+(1-\lambda)H_2(z)\leq 0\qquad \forall z\in \xgoal.
\end{equation}
Then inserting the convex combination into \eqref{eq:AH2} yields
\begin{equation}\Stretch\begin{array}{ll}
&\left\langle \nabla (\lambda H_1(z)+(1-\lambda) H_2(z)),f(z,w)\right\rangle +g(z,w) \\
= &\lambda \left\langle \nabla H_1(z),f(z,w)\right\rangle +(1-\lambda) \left\langle  \nabla H_2(z),f(z,w)\right\rangle +g(z,w)\\
=& \lambda \left(\left\langle \nabla H_1(z),f(z,w)\right\rangle + g(z,w) \right)+(1-\lambda) \left( \left\langle  \nabla H_2(z),f(z,w)\right\rangle +g(z,w)\right)\\
\geq &0
\end{array}
\end{equation}
Since $H_1$ and $H_2$ individually satisfy \eqref{eq:AH2},  $\lambda\geq0$, and $1-\lambda\geq0$, the convex combination of $H_1$ and $H_2$ also satisfies \eqref{eq:AH2}. 
\end{proof}

It is interesting that the set of admissible heuristics is convex, but the fact that this characterization is only a subset of all possible heuristics raises the question of whether there are any good heuristics (close to the value function $V$) contained in this subset.
The next section discusses the usefulness of this set showing that it contains the value function.

\section{Selecting an Admissible Heuristic}\label{sec:inf_lp}
Considering that a good heuristic is one which closely underestimates the value function and that the admissibility conditions of Theorem \ref{thm:admissibility} characterize a set of such underestimates, a natural approach to the selection of a heuristic would be to find an admissible heuristic whose integral on \xfree is maximized\footnote{Note that the measure $m$ must be finite on $\xfree$.}.
Since the constraints are affine and this would be a linear functional of $H$, the heuristic selection is then a (infinite-dimensional) linear program:
\begin{equation}\arraycolsep=1.4pt\def\arraystretch{1.5}
\boxed{
	\begin{array}{rll}
	\underset{H}{\max} & \int_{\xfree}H(z)\,\,m(dz) & \\
	{\rm subject\,to:}  & 		H(z) \leq 0 \qquad \forall z \in  \xgoal,
	& \\
	& \left\langle \nabla H(z),f(z,w)\right\rangle +g(z,w) \geq 0 \\
	& \forall z\in \xfree\setminus cl(\xgoal),\,{\rm and}\,\, \forall w \in \Omega.
	
	\end{array} \label{eq:convex_opt}\tag{LP}}
\end{equation}

To justify this formulation, the next result shows that it is equivalent to the HJB equation and the solution to this linear program is in fact the value function.
\begin{theorem}\label{thm:HJBequivalence}
	A function $V$ is the solution to the linear optimization \ref{eq:convex_opt} if and only if it is the solution to the Hamilton-Jacobi-Bellman equation \ref{eq:HJB}.
\end{theorem}
\begin{proof}
	($\Leftarrow$) Let $V$ be a solution to \eqref{eq:HJB}. 
	Then $V$ is equal to the value function so every admissible heuristic $H$ must satisfy $H(z)\leq V(z)$ on $\xfree$. 
	Therefore, the solution to \eqref{eq:HJB} upper bounds the objective  to \eqref{eq:convex_opt},
	\begin{equation}\label{eq:upper_bound}
	\int_{\xfree} H(z)\,\,m(dz) \leq \int_{\xfree} V(z)\,\,m(dz).  
	\end{equation}	
	
	Next, observe that a solution to \eqref{eq:HJB} satisfies the admissibility conditions \eqref{eq:AH1} and \eqref{eq:AH2}:
	The boundary condition $V(z)=0$ on $cl(\xgoal)$ implies $V$ satisfies \eqref{eq:AH2} and the equation 
	\begin{equation}\label{eq:hjbsol}
	\underset{w \in \Omega}{\inf} \left\{ \left\langle \nabla V(z), f(z,w) \right\rangle + g(z,w) \right\}=0, \quad\forall z\in X_\mathrm{free}\setminus cl({X}_\mathrm{goal}),
	\end{equation}
	implies	
	\begin{equation}
	 \left\langle \nabla V(z), f(z,w) \right\rangle + g(z,w)\geq 0\quad \forall z\in X_\mathrm{free}\setminus cl({X}_\mathrm{goal}),\,{\rm and} \, \forall w\in \Omega,
	\end{equation}
	which is \eqref{eq:AH2}.
	Thus, the upper bound in \eqref{eq:upper_bound} is attained at the solution to \eqref{eq:HJB}.

	($\Rightarrow$) As a point of contradiction, suppose that $H$ is a solution to \eqref{eq:convex_opt} and is not a solution to \eqref{eq:HJB}.
	Then there exists $\tilde{z}$ such that 
	\begin{equation}\label{eq:strict_ineq}
	 \left\langle \nabla H(\tilde{z}), f(\tilde{z},w) \right\rangle + g(\tilde{z},w)> 0\quad \forall w\in \Omega.	   
	 \end{equation}
	 The existence of a point where there is strict inequality comes from the assumption that $H$ does not solve \eqref{eq:HJB}, but satisfies the constraints of \eqref{eq:convex_opt}.
	 By the continuity of $H$, there is a neighborhood $\mathcal{N}_{\tilde{z}}$ of $\tilde{z}$ on which \eqref{eq:strict_ineq} holds. 
	 Let $K$ be a compact subset of $\mathcal{N}_{\tilde{z}}$ and denote the minimum of $H$ attained on this compact set by $\varepsilon$. 
	 Additonally, denote the supremum of $\Vert f(z,w) \Vert_2$ over $K\times \Omega$ by $F$.
	 Then 
	 \begin{equation}\label{eq:F}
	 \left\langle \nabla H(z), f(z,w) \right\rangle + g(z,w)-\varepsilon \geq 0 \quad \forall z\in K,\,{\rm and} \,\forall w\in \Omega.	   
	 \end{equation}
	 Next, let $\Psi:\xfree\rightarrow \mathbb{R}^n$ be a smooth, nonnegative bump function supported on $K$ satisfying $\Psi(x) \leq \varepsilon$, and $\Vert \nabla \Psi(x) \Vert_2\leq \min\{1,\varepsilon/ F\}$.
	 Perturbing $H$   by $\Psi$ yields
	 \begin{equation}\label{eq:bumb}\Stretch
	 \begin{array}{ll}
	 &\left\langle \nabla (H(z)+\Psi(z)), f(z,w) \right\rangle + g(z,w)\\
	 =&	\left\langle \nabla H(z), f(z,w) \right\rangle +  \left\langle \nabla \Psi(z), f(z,w) \right\rangle + g(z,w)  \\  
	 \geq & \left\langle \nabla H(z), f(z,w) \right\rangle -  \Vert \nabla \Psi(z)\Vert_2 \Vert f(z,w) \Vert_2 + g(z,w)\\ 
	 \geq & \left\langle \nabla H(z), f(z,w) \right\rangle -   \min\{1,\varepsilon/F\}\cdot F + g(z,w)\\
	 \geq & \left\langle \nabla H(z), f(z,w) \right\rangle -   \varepsilon + g(z,w), \quad \forall z\in K,\,{\rm and} \,\forall w\in \Omega.	 	   
	 \end{array}
	 \end{equation}
	 Thus, the perturbation by $\Psi$ leaves the heuristic admissible by \eqref{eq:F} and this perturbation is a strict improvement to the objective in \eqref{eq:convex_opt} a contradiction since $H$ solves \eqref{eq:convex_opt}.
\end{proof}
Thus, \eqref{eq:HJB} and (\ref{eq:convex_opt}) can both be solved to obtain the exact optimal cost-to-go for a problem. 
This equivalence between \eqref{eq:HJB} and \eqref{eq:convex_opt} is a stronger version of the result than can be found in the published work~\cite{paden2017verification}.
While the HJB equation is a nonlinear partial differential equation, the problem (\ref{eq:convex_opt}) is a linear program lending itself to the methods of convex analysis.
%

\subsection{Problem Relaxations}
In many applications the set $\xfree$ is not entirely known a priori. 
This is particularly true when the heuristic is computed off-line and used in a real-time application where a perception system constructs or modifies $\xfree$ for the current task (e.g. detecting obstacles in a robot workspace).
This consideration motivates the following observation: suppose $\xfree$, $\xgoal$, $\Omega$, $f$, and $g$ is problem data for problem $P$; and $\tilde{X}_{\rm free}$, $\tilde{X}_{\rm goal}$, $\tilde{\Omega}$, $\tilde{g}$, and $\tilde{f}$ is problem data for problem $\tilde{P}$.
Let $V_P$ and $V_{\tilde{P}}$ denote the optimal cost-to-go for each of these problems.
If the two problems are related by
\begin{equation}\label{eq:relax}\arraycolsep=1.4pt\def\arraystretch{1.5}
\begin{array}{c}
\xfree\subset\tilde{X}_{\rm free},\quad \xgoal \subset \tilde{X}_{\rm goal}, \quad \Omega \subset \tilde{\Omega},\quad f=\tilde{f},\\
g(z,w)\geq \tilde{g}(z,w)\quad\forall z\in \xfree,\,{\rm and}\,w\in\Omega,
\end{array}
\end{equation}
then any feasible trajectory of problem $P$ must also be feasible for problem $\tilde{P}$. Additionally, this trajectory will have the same or lesser cost for $\tilde{P}$.
Therefore, $V_{\tilde{P}}(z) \leq V_P(z)$ for any $z\in \Xfree$. 
We can conclude that an admissible heuristic $H_{\tilde{P}}$ for problem $\tilde{P}$ must also be admissible for problem $P$ since $H_{\tilde{P}}(z)\leq V_{\tilde{P}}(z)$ implies $H_{\tilde{P}}(z)\leq V_{P}(z)$. 
Problem $\tilde{P}$ is referred to as a \emph{relaxation}\index{problem relaxation} of problem $P$.
In the case of an unknown environment, one can derive an admissible heuristic for a relaxed problem which considers only constraints known in advance.
This heuristic remains admissible if $\xfree$ is updated to a smaller set due to perceived obstacles. 
Alternatively, it may be easier to verify the admissibility of a candidate heuristic with (\ref{eq:AH1}) and (\ref{eq:AH2}), or evaluate (\ref{eq:convex_opt}) on a relaxation of a particular problem known a-priori.
This comes at the expense of increasing the gap between the heuristic and the optimal cost-to-go for the actual problem which can make the heuristic less effective for a search-based algorithm but allows for off-line construction.

\section{Sum-of-Squares (SOS) Approximation to \\ Equation (\ref{eq:convex_opt}) }\label{sec:Characterising-Admissibility-as}
One way to tackle (\ref{eq:convex_opt}) in the case of problem data consisting of semi-algebraic sets and polynomials is a SOS programming approximation.
%
%

%
SOS programming~\cite{parrilo2004sum} is a method of optimizing a functional of a polynomial subject to semi-algebraic constraints. 
The technique involves replacing semi-algebraic constraints with sum-of-squares constraints which can then be solved as a semi-definite program (SDP). 
The advantage of this particular approach to approximating (\ref{eq:convex_opt}) is that the result of the SOS program is guaranteed to satisfy the admissibility constraint.
\subsection{Sum-of-Squares Polynomials}
A polynomial $p$ in the ring $\mathbb{R}[z]$ in $n$ variables is said to be a sum-of-squares if it can be written as 
\begin{equation}
p(z)=\sum_{k=1}^{d}q_{k}(z)^{2},\label{eq:-5}
\end{equation}
for polynomials $q_{k}(z)$. 
Clearly, $p(z)\geq0$ for all  $z \in \mathbb R^n$. 
Note also that $p(z)$ is a sum-of-squares if and only if it can be written as 
\begin{equation}
p(z)=\mathfrak{m}(z)^{T}Q\mathfrak{m}(z),\label{eq:-4}
\end{equation}
for a positive semi-definite matrix $Q$ and the vector of   monomials $\mathfrak{m}(z)$ up to degree $d$.
For a polynomial $p$ admitting a decomposition of the form (\ref{eq:-4}) we write $p \in SOS$. 
Equation (\ref{eq:-4}) is a collection of linear equality constraints between the entries of $Q$ and the coefficients of $p(z)$. 
Finding entries of a positive semi-definite $Q$ such that the equality constraints are satisfied is then a semi-definite program (SDP). 
The complexity of finding a solution to this problem using interior-point methods is polynomial in the size of $Q$.
This method of analyzing polynomial inequalities has had a profound impact in many fields. As a result there are a number of efficient solvers~\cite{sedumi,sdpt3} and modeling tools~\cite{sostools,yalmip} available.

\subsection{Optimizing the Heuristic}

To proceed with computing a heuristic using the SOS programming framework, the problem data must consist of polynomials and intersections of semi-algebraic sets. 
Let
\begin{equation}\arraycolsep=1.4pt\def\arraystretch{1.5}
\begin{array}{rcl}
X_{\rm free} & = & \left\{ z \in \mathbb{R}^n:\,h_z(z)\geq 0 \right\}, \\
\Omega   & = & \left\{ w \in \mathbb{R}^m:\,h_w(w) \geq 0 \right\}, 	
\end{array}
\end{equation}
for vectors of polynomials $h_z$ and $h_w$ with "$\geq$" denoting element-wise inequalities.
Assume also that $f$, $g$ and the candidate heuristic $H$ are polynomials in the state and control variables.
Then the admissibility condition (\ref{eq:AH1})
%
is a polynomial inequality. 
To restrict the non-negativity constraint of the heuristic to $X_{\rm free}$ and $\Omega$, we add auxiliary vectors of SOS polynomials $\lambda_{z}(z)\geq0$ and $\lambda_{w}(w)\geq0$ to the equation as 
\begin{equation}\arraycolsep=1.4pt\def\arraystretch{1.5}
\begin{array}{c}
\left\langle \nabla H(z),f(z,w)\right\rangle +g(z,w) 
-\left\langle\lambda_{z}(z), h_{z}(z)\right\rangle-\left\langle\lambda_{w}(w), h_{w}(w)\right\rangle \geq 0, \\
\forall w\in \mathbb{R}^m,\,\,{\rm and}\,\, z\in \mathbb{R}^n,
\end{array}\label{eq:-7}
\end{equation}
which trivially implies the positivity of (\ref{eq:AH1}) over $\xfree$
and $\Omega$. 
When $H$ is a polynomial, the objective in (\ref{eq:convex_opt}) is linear in the coefficients of $H$.
Thus, it is an appropriate objective for an SOS program.
The SOS program which is solved to obtain an admissible heuristic is then 
\begin{equation}\label{eq:sosformula}\arraycolsep=1.4pt\def\arraystretch{1.5}
\begin{array}{rll}
\underset{H,\lambda_{z},\lambda_{w}}{\max}& \int_{\xfree}H(z)\,m(dz)& \\  
{\rm subject\, to:}& H(z)\leq 0, \quad \forall z\in cl({X}_\mathrm{goal}),&\\
&\left\langle \nabla H(z),f(z,w)\right\rangle +g(z,w) -\left\langle\lambda_{z}(z), h_{z}(z) \right\rangle - \left\langle \lambda_{w}(w), h_{w}(w)\right\rangle &\in SOS,\\
& \lambda_{z}(z),\lambda_{w}(w) \in  SOS.&  \\
\end{array} 
\end{equation}

\section{Examples of Heuristics for Kinodynamic Motion Planning}\label{sec:Examples_usage}
The examples of this section demonstrate how to directly apply Theorem \ref{thm:admissibility} to verify admissibility of candidate heuristics as well as the heuristic synthesis optimization (\ref{eq:convex_opt}). 
\subsection{Admissibility of Euclidean Distance in Shortest Path Problems}\label{ex:spp}
In the first example we show how to use Theorem \ref{thm:admissibility} to verify the classic Euclidean distance heuristic for kinematic shortest path problems.

	Consider a holonomic shortest path problem,
	\begin{equation}
	\dot{x}(t)=u(t), 
	\end{equation}
	where $\xfree =  \mathbb{R}^n$, $\xgoal=\{0\}$, and $u(t)\in \{ w\in \mathbb{R}^n :\, \Vert w \Vert = 1 \}$.
	The running cost is given by 
	\begin{equation}
		g(z,w)=1.
	\end{equation}
	We would like to verify the heuristic
	\begin{equation}
	H(z)=\Vert z \Vert.
	\end{equation}
	Applying the admissibility Lemma we obtain
	\begin{equation}  \arraycolsep=1.4pt\def\arraystretch{1.5}
	\begin{array}{rcl}
	\left\langle \nabla H(z),f(z,w)\right\rangle +g(z,w) & = & \dfrac{\left\langle z,w\right\rangle }{\Vert z\Vert}+1\\
	& \geq & \dfrac{-\Vert z\Vert\Vert w\Vert}{\Vert z\Vert}+1\\
	& \geq & -1+1\\
	& = & 0,
	\end{array}
	\end{equation}
	which reverifies the fact that the Euclidean distance is an admissible heuristic for the shortest path problem.
	The crux of this derivation is simply applying the Cauchy-Schwarz inequality. 
	Figure \ref{fig:intro_demo} illustrates the use of this heuristic in an environment with obstacles.
\subsection{Wheeled Robot Example}\label{ex:dubins}
	Consider a simple wheeled robot with state coordinates in $ \mathbb{R}^3$, and whose mobility is described by
	\begin{equation}\label{eq:reed_shepp_car}	
	\dot{p}_1(t) = \cos(\theta(t)),\quad
	\dot{p}_2(t) = \sin(\theta(t)),\quad
	\dot{\theta}(t) = u(t),
	\end{equation}
	where $(p_1(t),p_2(t),\theta(t))$ are individual coordinates of the state.
	Let $X_\mathrm{free}=\mathbb{R}^3$, $X_\mathrm{goal}=\{(0,0,0)\}$, and  $\Omega=[-1,1]$.
	The running cost
	\begin{equation}
	g(p_1(t),p_2(t),\theta(t))=1,
	\end{equation}
	results in a cost equal to the path length in the $x$-$y$ plane.
	We will consider two candidate heuristics: the line segment connecting the $p_1$-$p_2$ coordinate to the origin (discussed in~\cite{dolgov2010path}), and the magnitude of the heading error,
	\begin{equation}\label{eq:unicycle_heuristic}
	H_1(p_1(t),p_2(t),\theta(t))=\sqrt{p_1(t)^2+p_2(t)^2},\quad
	H_2(p_1(t),p_2(t),\theta(t))=\vert \theta(t) \vert.
	\end{equation}
	%
	%
	Clearly, (\ref{eq:AH1}) is satisfied for both heuristics. 
	Then (\ref{eq:AH2}) is once again verified with the Cauchy-Schwarz inequality. 

	\begin{figure}
		\centering{}
		\includegraphics[width=1.0\columnwidth]{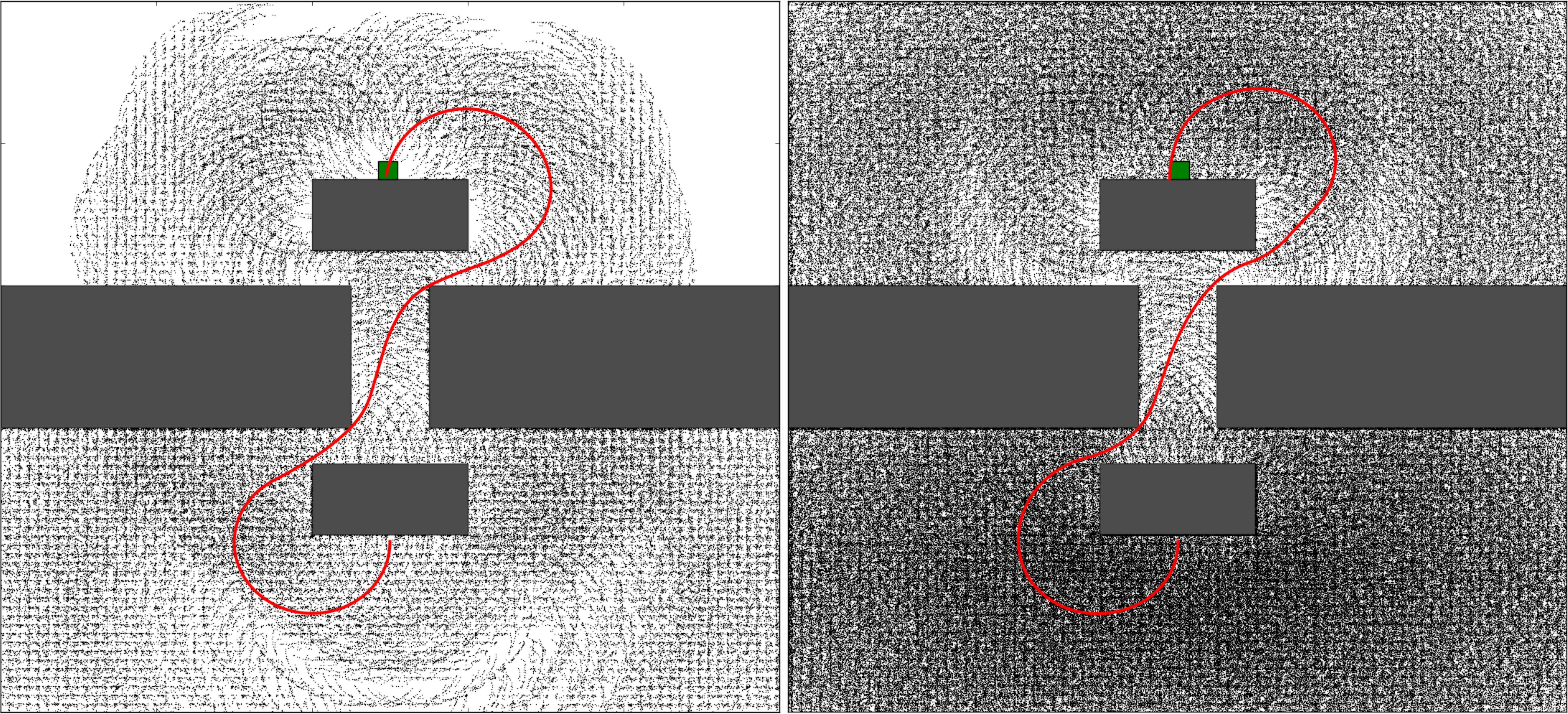}
		\caption{Comparison of the vertices examined by the heuristically guided search (left) and the uniform cost search (right) for a simple wheeled robot motion planning problem. Equation \ref{eq:dubins_heuristic}) was used as the heuristic.}\label{fig:dubins_demo}
	\end{figure}

	For brevity, but with some abuse of notation, the time argument is dropped from the trajectory in the following derivations.
	Inserting the expression for the heuristics into  (\ref{eq:AH2}) yields 
	\begin{equation}\arraycolsep=1.4pt\def\arraystretch{2.0}
	\begin{array}{rcl}
	\left\langle \nabla H_1(p_1,p_2,\theta),f(p_1,p_2,\theta,u)\right\rangle +g(p_1,p_2,\theta,u) &=& \dfrac{\left\langle (p_1,p_2,0),(\cos(\theta),\sin(\theta),u)\right\rangle }{\sqrt{p_1^2+p_2^2}}+1\\
	
	&\geq&-\dfrac{\sqrt{p_1^2+p_2^2} \sqrt{\cos^2(\theta)+\sin^2(\theta)} }{\sqrt{p_1^2+p_2^2}}+1\\
	&=&0.
	\end{array}
	\end{equation}
	Similarly, for $H_2$,
	\begin{equation}  \Stretch
	\begin{array}{rcl}
	\left\langle \nabla H_2(p_1,p_2,\theta),f(p_1,p_2,\theta,u)\right\rangle +g(p_1,p_2,\theta,u) 
	&=& \dfrac{\left\langle (0,0,\theta),(\cos(\theta),\sin(\theta),u)\right\rangle }{|\theta|}+1\\
	&\geq&  -\dfrac{|\theta||u|}{|\theta|}+1\\
	&=&  0.
	\end{array}
	\end{equation}
	Therefore, both heuristics are admissible.
	Note that the maximum of the two heuristics is also an admissible heuristic, 
	\begin{equation}\label{eq:dubins_heuristic}
	H(p_1,p_2,\theta )=\max \{H_1(p_1,p_2,\theta ),H_2(p_1,p_2,\theta )\}.
	\end{equation}

Figure \ref{fig:dubins_demo} illustrates a shortest path query in a simple environment with and without the use of the heuristic in (\ref{eq:dubins_heuristic}).
The heuristically guided search obtains a solution in 81,686 iterations while the uniform cost search requires 403,197 iterations.
In contrast, using just $H_1$  in Equation (\ref{eq:unicycle_heuristic}) as a heuristic requires 104,492 iterations.
\subsection{Autonomous Underwater Vehicle Example}\label{ex:AUV}
	Consider an autonomous underwater vehicle (AUV) navigating a strong current relative to the vehicle's top speed as discussed in~\cite{garau2014path}.
	This scenario is common for long range underwater gliders which travel at roughly $0.5\,\nicefrac[]{m}{s}$ relative to currents traveling at $1.0$-$1.5\,\nicefrac[]{m}{s}$.
	Let the state space  be $\mathbb{R}^n$ ($n=2$ or $3$) representing position in a local Cartesian coordinate system.
	The AUV's motion is modeled by
	\begin{equation}
	\dot{x}(t)=c(x(t))+u(t),
	\end{equation} 
	where $c:\mathbb{R}^n\rightarrow\mathbb{R}^n$ describes the current velocity at each point in the state space. The input $u(t)$ is the controlled velocity relative to the current.
	The thrust constraint is represented by 
	\begin{equation}
	\Omega=\left\{w\in \mathbb{R}^n:\,\Vert w \Vert \leq w_{\rm max} \right\},
	\end{equation}
	where $w_{\max}$ is the maximum achievable speed relative to the current.
	The goal set is $\xgoal=\{0\}$.
	The running cost
	\begin{equation}
	g(x(t),u(t))=1+\Vert u(t)\Vert,
	\end{equation}
	reflects a penalty on the duration of the path as well as the total work done by AUV's motors.
	
	The admissibility of the heuristic proposed in~\cite{garau2005path} can be evaluated using Theorem \ref{thm:admissibility} .
	Let $v_{\max}\coloneqq\underset{z}{\max}\{\Vert c(z)\Vert\}$ be the maximum speed of the current, and consider the heuristic	
	\begin{equation}\label{eq:AUVheuristic}
	H(z)=\frac{\Vert z \Vert}{(v_{\max}+w_{\max})}. 
	\end{equation}
	Clearly (\ref{eq:AH1}) is satisfied. Now evaluate (\ref{eq:AH2}),
	\begin{equation}\Stretch 
	\begin{array}{rcl}
	\left\langle \nabla H(z),f(z,w) \right\rangle+g(z,w)&=&\frac{\left\langle z,c(z)+w \right\rangle}{\Vert z\Vert(v_{\max}+w_{\max})}+1+\Vert w \Vert
	\\
	&\geq& \frac{-\Vert z \Vert \cdot \Vert c(z) \Vert - \Vert z \Vert \cdot \Vert w \Vert }{\Vert z\Vert(v_{\max}+w_{\max})}+1+\Vert w \Vert
	\\
	&\geq&  \frac{- \Vert c(z) \Vert -  \Vert w \Vert }{(v_{\max}+w_{\max})}+1+\Vert w \Vert.
	\end{array} 
	\end{equation}
	Notice that $v_{\max}$ and $w_{\max}$ were defined so that $\Vert w \Vert + \Vert c(z) \Vert \leq v_{\max}+w_{\max}$.
	Thus, 
	\begin{equation}\Stretch 
	\begin{array}{rcl}
	\left\langle \nabla H(z),f(z,w) \right\rangle+g(z,w)&\geq&-1+1+\Vert w \Vert\\
	&\geq&0,
	\end{array} 
	\end{equation}
	so the heuristic is admissible. Figure \ref{fig:AUV_demo} illustrates a numerical example in a 2D environment where the maximum current speed is 2.6 times the AUV's maximum relative speed.
	The heuristically guided search expands $49\%$ fewer vertices than the uniform cost search.
	\begin{figure}
		\centering{}
		\includegraphics[width=0.8\columnwidth]{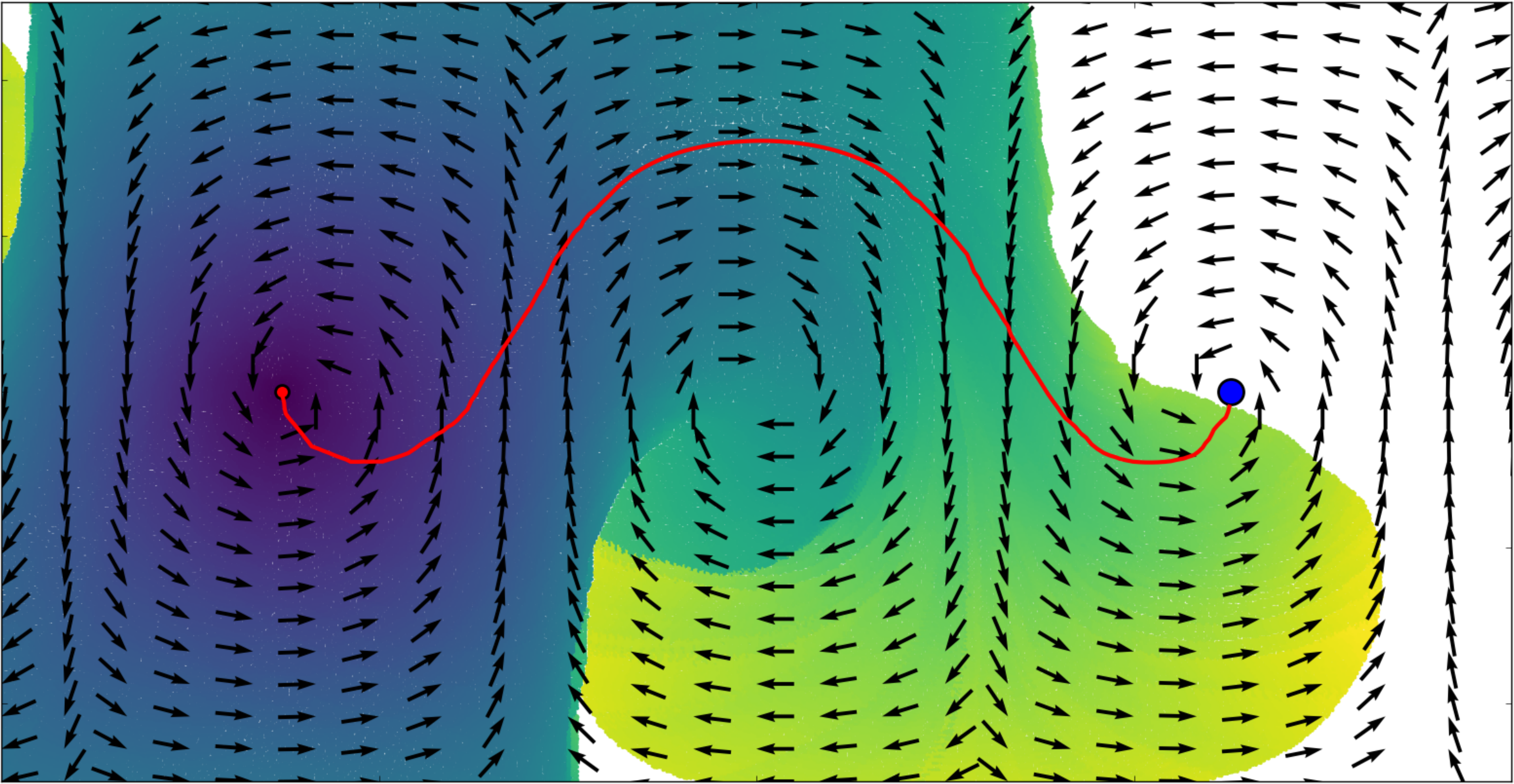}
		
		\vspace{1mm}
		\includegraphics[width=0.8\columnwidth]{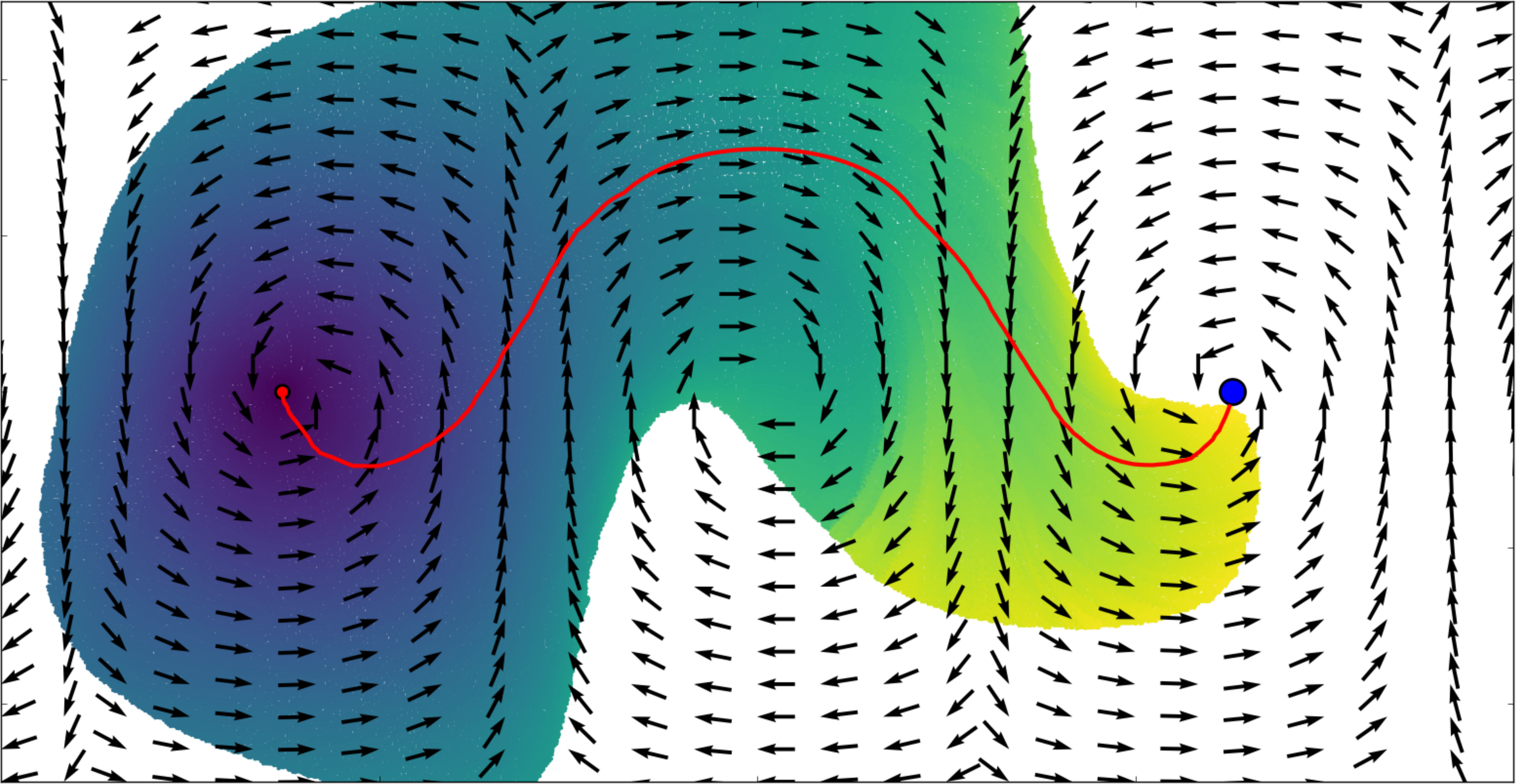}
		%
		\caption{Comparison of a minimum cost path search in a graph approximating feasible paths for an AUV in a strong current. The heuristically guided search using equation (\ref{eq:AUVheuristic}) of Example \ref{ex:AUV}  obtains the solution in 271,642 iterations (left) while the uniform cost search requires 537,168 iterations (right). Initial and final states are illustrated with red and blue markers respectively. Colored regions represent states explored by the algorithm and the color indicates the relative cost to reach that state.}\label{fig:AUV_demo}
	\end{figure}

\section{Sum-of-Square Heuristic Synthesis Examples} \label{sec:Examples}

The last examples demonstrate the SOS programming formulation described in Section \ref{sec:Characterising-Admissibility-as}. 
A closed form solution for the optimal cost-to-go is known for the examples in this section which provides a useful point of comparison for the computed heuristic. 
The example problem was implemented using the SOS module in YALMIP \cite{yalmip} and solved using SDPT3 for the underlying semi-definite program \cite{sdpt3}. 
To further illustrate the approach, YALMIP scripts for these examples can be found in~\cite{SOS_heuristics}.

\subsection{Single Integrator Example} \label{ex:1d_kine}
To illustrate the procedure, we revisit Example \ref{ex:spp} in the 1-dimensional case. 
The differential constraint is given by $f(x,u)=u$ where $x,u\in\mathbb{R},$
$X_\mathrm{free}=[-1,1],$ $\Omega=[-1,1]$, and $X_\mathrm{goal}=\{0\}$. 
Again we use the minimum time objective where $g(x,u)=1$.
The value function $V(x)=|x|$ is obtained by inspection.
The heuristic is parameterized by the coefficients of a univariate polynomial of degree $2d$
\begin{equation}
H(x)=\sum_{i=0}^{2d}c_{i}x^{i}.
\end{equation}
Using a discrete measure on $[-1,1]$ concentrated at the boundary the SOS program is, 
\begin{equation}\arraycolsep=1.4pt\def\arraystretch{1.5}  \label{eq:simple_demo}
\begin{array}{rll}
\underset{H,\lambda_{x},\lambda_{u}}{\max} & \left\{ H(1)+H(-1)\right\} & \\
{\rm subject\, to:} &H(0)=0, & \\
&\left(\frac{d}{dx}H(x)\right)u+1-\lambda_{x}(x)(1-x^{2})-\lambda_{u}(u)(1-u^{2}) \in SOS,&\\
&\lambda_{x}(x),\lambda_{u}(u) \in SOS.&
\end{array}
\end{equation}
The numerical solution for polynomial heuristics with increasing degree is shown in Figure \ref{fig:1dheuristic}. 
\begin{figure}
	\centering{}\includegraphics[width=0.7 \columnwidth]{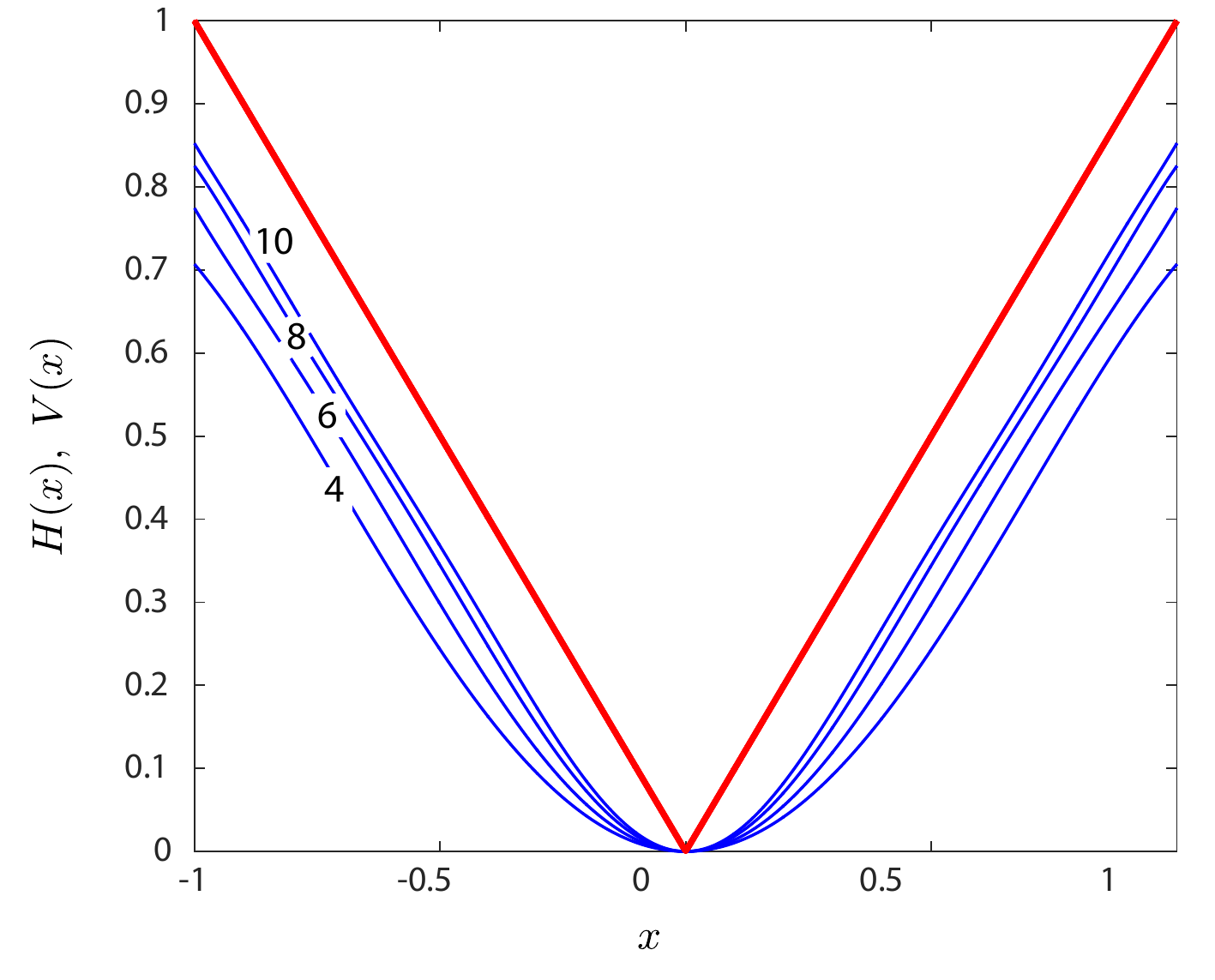}\caption{Univariate polynomial heuristics of degrees 4, 6, 8, and 10 for the 1D single integrator shown in blue. The value function is shown in red. Polynomial heuristics with higher degree provide better underestimates of the value function. \label{fig:1dheuristic}}
\end{figure}

\begin{figure}
	\centering
	\includegraphics[width=0.7\columnwidth]{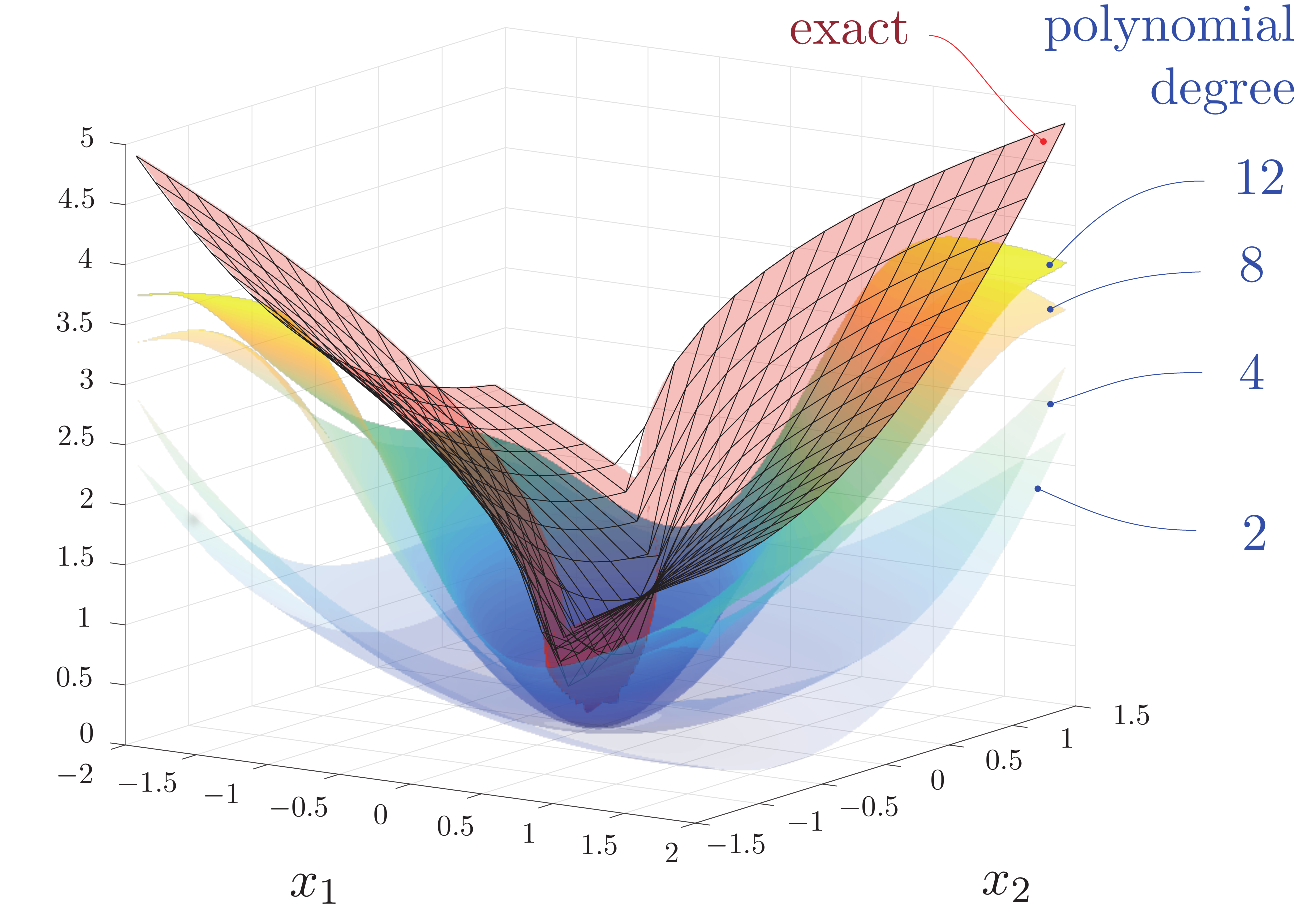}
	\includegraphics[width=0.7\columnwidth]{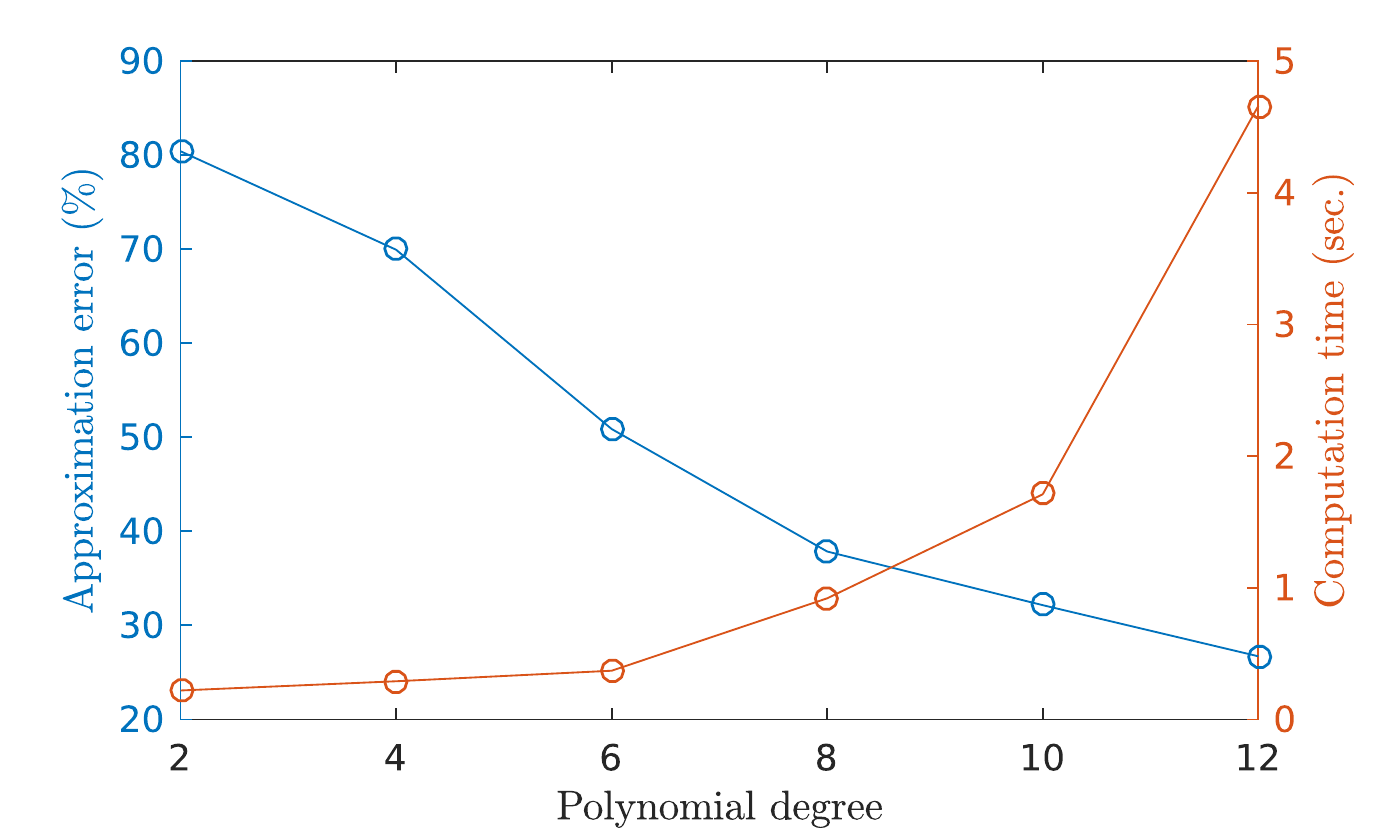}
	
	\caption{(top) Polynomial heuristics of degree 2, 4, 8, and 12 for the double integrator model in comparison with the known value function shown in red. (bottom) Running time and approximation accuracy to compute heuristics of degrees from 2 to 12. } \label{fig:1d_dbl}
\end{figure}

\subsection{Double Integrator Example}\label{ex:2dInt}
	
	Consider a double integrator model,
	\begin{equation}
	\dot{x}_1(t)=x_2(t),\quad \dot{x}_2(t)=u(t),
	\end{equation}
	with the minimum time running cost
	\begin{equation}
	g(x_1(t),x_2(t),u(t))=1,
	\end{equation}
	free space $\xfree=[-3,3]^2$, input space $\Omega = [-1,1]$, and goal set $\xgoal= \{0\}$. 
	%
	%
	
	%
	Polynomial heuristics of degree $2d$ of the form
	\begin{equation}
	H(x_1, x_2) = \sum_{p + q\leq 2d} c_{p, q}\; x_1^p x_2^q,
	\end{equation}
	are computed with the measure $m$ in \eqref{eq:convex_opt} supported on 
	$
	[-2,2] \times [-\sqrt 2, \sqrt 2]
	$. 
	This focuses the optimization in a region around the goal while maintaining admissibility of the heuristic over all of $X_\mathrm{free}$.
	The SOS approximation of \eqref{eq:convex_opt} is formulated as follows
	\begin{equation}
	\arraycolsep=1.4pt\def\arraystretch{1.5} 
	\begin{array}{rl}
	\underset{H,\lambda_{x},\lambda_{u}}{\max}&\int_{\xfree}H(x_{1},x_{2})\,\, m(dx) \\ 
	{\rm subject\, to:} & H(0,0)\leq 0,\\ 
	&\left\langle \nabla H(x_{1},x_{2}),(x_2,u)\right\rangle+1 \\
	&-\lambda_{x_{1}}(x_{1})\left(9-x_{1}^{2}\right)  \\
	&-\lambda_{x_{2}}(x_{2})\left(9-x_{2}^{2}\right)  \\
	&-\lambda_{u}(u)(1-u^{2}) \in SOS,\\
	&\lambda_{x_{1}}(x_{1}),\lambda_{x_{2}}(x_{2}),\lambda_{u}(u) \in SOS.
	\end{array} 
	\end{equation}
	Note that the polynomial $H$ can be integrated over the rectangular region $S$ in closed form with standard integration rules or with the integration functionality in YALMIP.

	The optimized heuristics of increasing degree are shown in Figure \ref{fig:1d_dbl} together with the value function for $X_\mathrm{free}=\mathbb{R}^n$. 
	The percent error in Figure \ref{fig:1d_dbl} is defined as 
	\begin{equation}
	{\rm \%\, error}\, =100\cdot \int_{\Xfree} \frac{V(z)-H(z)}{V(z)}m(dz).
	\end{equation}
\subsection{Observations}
The optimization of Example \ref{ex:2dInt} focused on the region $[-2,2] \times [-\sqrt 2, \sqrt 2]$ instead of $[-3,3]^2$.
The reason is that some states in $[-3,3]^2$ cannot be reached and have infinite cost.
%
%
A remarkable observation is that the resulting SOS program does not admit a maximum when the integral includes a subset of $X_\mathrm{free}$ where the value function is unbounded.   
This is consistent with the theoretical results since the heuristic is free to go unbounded over this set as well.
Therefore, it is important to optimize the heuristic over a region where the optimal cost-to-go is known to be bounded. 

While semi-definite programs run in polynomial time in problem size, there is combinatorial growth in the number of monomials with increasing degree resulting in large semi-definite programs.
For an $n$-dimensional state space, the number of monomials of degree $d$ or less is 
\begin{equation}
\left(\begin{array}{c}
n+d\\
n
\end{array}\right).
\end{equation}
This rapid growth in problem size is one of the current limitations of this approach to (\ref{eq:convex_opt}). 
However, the technique produces a reasonable polynomial approximation that is guaranteed to be admissible.

There are a number possibilities for improving this technique.
The DSOS and SDSOS~\cite{ahmadi2014dsos} programming techniques which are being developed will enable the derivation of heuristics with higher degree polynomials.
Additionally, a straight forward solution is to apply finite difference methods to directly approximate (\ref{eq:convex_opt}) as a finite-dimensional linear program, but the strict admissibility is lost in this case.

\bibliographystyle{amsalpha}
\bibliography{chap6}

\chapter{Conclusion}\label{chap:conclusion}
This thesis presented a direct, forward search algorithm for approximate optimal kinodynamic motion planning called the generalized label correcting method. 
The contribution of this method to robotics is that it is resolution complete, does not rely on abstract subroutines which may be unavailable in practice, and the assumptions placed on problem data are readily verified.
In Chapter \ref{chap:topo}, techniques from topology and functional analysis were used to establish that the set of solutions to the problem is open and the performance objective is continuous. 
These properties were the foundation for the numerical approximation of the problem in Chapter \ref{chap:approximation}, as well as the derivation of the generalized label correcting conditions in Chapter \ref{chap:glc}.
The generalized label correcting method is a slight variation on canonical label correcting methods, making the method familiar and easily implemented.
Numerical experiments tested the algorithm on a wide variety of motion planning problems demonstrating that, in addition to the theoretical guarantees, the method performs well in practice.

Like a best-first search, the generalized label correcting method can utilize an admissible heuristic to reduce the running time of the algorithm.
This motivated the study of admissible heuristics for kinodynamic motion planning in Chapter \ref{chap:admissible_heuristics}, which resulted in tools for the verification and synthesis of admissible heuristics. 
The main result of Chapter \ref{chap:admissible_heuristics} was an admissibility condition which is readily verified for a candidate heuristic and which characterizes a convex set of admissible heuristics. 
Additionally, the synthesis of an admissible heuristic was formulated as an infinite-dimensional linear program whose feasible set consisted of admissible heuristics and whose exact solution is equal to the optimal cost-to-go. 
Sum-of-squares programming was then proposed as a relaxation of this infinite-dimensional convex program which provides the best provably admissible heuristic within a finite-dimensional subspace of polynomials.

\section{Extensions and Improvements}

There are several directions of future research that will improve the utility of these methods. 
First and foremost, an anytime variation is needed to maximize effectiveness in real-time applications. 
Such a variation was omitted from this thesis to avoid obscuring the simplicity of the method.
Secondly, the partitioning of the state space in Section \ref{sec:glc_def} is admittedly simple and can be improved with additional information about the dynamical system. 
For example, the Ball-Box theorem from differential geometry can be leveraged to determine the most appropriate asymptotic scaling of the partition.

One limitation of the generalized label correcting algorithm as it was presented is that it was constructed under very weak assumptions. 
While this is an attractive property of the method, it leaves information in more structured problem instances unused.
It is possible that, with added assumptions, sharper pruning conditions can be derived by similar arguments to those presented in Chapter \ref{chap:glc}. 

Equipped with the understanding of the set of feasible trajectories, an exciting direction for future research is the application of this approach to more complex logical specifications.
Simply adding specifications with boolean logic would greatly increase the expressiveness of the generalized label correcting method.
However, more sophisticated modal logics, such as linear temporal logic, could be considered as well.

\bibliographystyle{amsalpha}
\appendix
\chapter{Review of Set Theory} \label{app:sets}

%
%
A \emph{set} is a collection of logical variables which are called the \emph{elements} of a set. 
A set is entirely determined by its elements. 
Shorthand for the sentence "$x$ is an element of $S$" is "$x\in S$".

\section{Notation}

Sets can be defined implicitly by the variables $x$ satisfying a logical formula $\mathcal{P}(x)$. 
Such a set is denoted $\{x:\,\mathcal{P}(x)\}$. 
Complex formulas can be constructed from atomic formulas using the logical connectives; "and ($\land$)", "or ($\vee$)", "not ($\neg$)", "implies ($\Leftarrow$)", and "equivalent to ($\Leftrightarrow$)"; and the quantifiers "there exists ($\exists$)", and "for all ($\forall$)".
Let $S$ and $U$ be sets.
The set $S$ is said to be s \emph{subset}\index{subset} of $U$, denoted  $S \subset U$, if every element of $S$ is also an element of $U$:
\begin{equation}
(S\subset U)\Leftrightarrow (x\in S \Rightarrow x\in U). 
\end{equation}
The set $S$ is \emph{equal}\index{equal} to $U$ if $S$ is a subset of $U$ and $U$ is a subset of $S$:
\begin{equation}
(S = U)\Leftrightarrow (S\subset U \land U \subset S).
\end{equation}
It is often useful to consider the set of all subsets of a set $S$.
This set, denoted $2^S$, is called the \emph{power set}\index{power set} of $S$,
\begin{equation}
2^S=\{U:\,U\subset S\}.
\end{equation}

A set which contains no elements is called the \emph{empty set}\index{empty set}, denoted $\emptyset$.

\section{Operations on Sets}
The \emph{complement}\index{complement} of $S$, denoted $S^c$, is the set whose elements are not elements of $S$:
\begin{equation}
S^c=\{x:\, \neg(x\in S)\}.
\end{equation} 
The \emph{union}\index{union} of $S$ and $U$, denoted $S\cup U$, is the set whose elements are an element of $S$ or $U$:
\begin{equation}
S\cup U=\{x:\,x\in S \vee x\in U\}.
\end{equation}
The \emph{intersection}\index{intersection} between $S$ and $U$, denoted $S\cap U$, is the set whose elements are elements of $S$ and $U$:
\begin{equation}
S\cap U = \{ x: x\in S \land x\in U \}.
\end{equation}  
Two sets $S$ and $U$ are said to be \emph{disjoint}\index{disjoint} if their intersection is empty, $S\cap U =\emptyset$.
The \emph{exclusion}\index{set exclusion} of $U$ from $S$, denoted $S\setminus U$, is the set whose elements belong to $S$ and do not belong to $U$:
\begin{equation}
	S\setminus U = \{x:\,x\in S \land (\neg(x\in U))  \}.
\end{equation}

There are a number of useful identities that follow directly from these operations. These are: associativity of intersection and union operations, $(A\cup B)\cup C = A\cup(B\cup C)$ and $(A\cap B)\cap C = A\cap(B\cap C)$; commutativity of intersection and union operations, $A\cup B= B\cup A$ and $A\cap B = B\cap A$; distributivity $A\cup (B\cap C)=(A\cup B)\cap(A\cup C)$ and $A\cap (B\cup C)=(A\cap B)\cup(A\cap C)$; DeMorgan's law, $(A \cup B)^c=A^c\cap B^c$; and $A\setminus B = A\cap B^c$.

A \emph{cover}\index{cover} of a set $S$ is a collection of sets whose union contains $S$ as a subset. 
That is, $\{U_i\}_{i\in\mathcal{I}}$ is a cover of $S$ if 
\begin{equation}
S\subset \bigcup_{i\in \mathcal{I}} U_i.
\end{equation} 
If $\{U_i\}_{i\in\mathcal{I}}$ is a cover of $S$, then a collection of subsets $\{U_i\}_{i\in\mathcal{J}}$ is a \emph{subcover}\index{subcover} of   $\{U_i\}_{i\in\mathcal{I}}$ if $\mathcal{J}\subset \mathcal{I}$ and
\begin{equation}
S\subset \bigcup_{i\in\mathcal{J}} U_i.
\end{equation}
A cover $P$ of $S$ composed of pair-wise disjoint subsets is called a \emph{partition}\index{partition} of $S$.

\section{Binary Relations}\label{app:relations}
Given two sets $U$ and $W$, a \emph{binary relation}\index{binary relation} $R$ between $U$ and $W$ is a subset of $U\times W$ identifying the elements of $U$ related to elements of $W$. 
If $u\in U$ is related to $w\in W$, then $(u,w)\in R$ This is sometimes alternatively denoted $uRw$.
As an example, suppose $S$ is a set of books. 
A binary relation $R$ on $S\times S$ could be ordered pairs of books where the first was published before the second. 
Then for $x,y\in S$, if $(x,y)\in R$, the statement "$x$ was published before $y$" is true.

Two important concepts are the \emph{inverse relation}\index{inverse relation} and \emph{composition relation}\index{composition relation}.
For any relation $R\subset U\times W$, the inverse relation is defined 
\begin{equation}
R^{-1}\coloneqq\{(w,u)\in W\times U:\,(u,w)\in R \}.
\end{equation}  
For two relations $R_1\subset V\times U$ and $R_2\subset U\times W$, the composition relation on $V\times W$ is defined 
\begin{equation}
R_2\circ R_1\coloneqq \{(v,w)\in V\times W:\, (\exists u)\,( (v,u)\in R_1 \land (u,w)\in R_2) \}.
\end{equation}

\subsection{Functions}
A \emph{function}\index{function} $f$ from $X$ into $Y$, usually denoted $f:X\rightarrow Y$, is a binary relation in $X\times Y$ with the properties,
\begin{enumerate}
	\item For all $x\in X$ there exists a $y\in Y$ such that $(x,y)\in f$.
	\item If $(x,y_1)\in f$ and $(x,y_2)\in f$, then $y_1=y_2$.
\end{enumerate} 
In light of the uniqueness property, we write $f(x)$ to denote the element in $Y$ associated to $x$.
The set $X$ is referred to as the \emph{domain}\index{domain} of $f$ and $Y$ is referred to as the \emph{codomain}\index{codomain}.

The inverse relation $f^{-1}$ of a function $f$ is not necessarily a function. 
When the inverse relation of a function is itself a function, the original function is said to be \emph{invertible}\index{invertible function}. 

The following are a few useful properties relating how basic set operations are preserved between the domain and codomain of a function. Let $f$ be a function from $X$ into $Y$ with $A_i\subset X$ and $B_i\subset Y$.
\begin{enumerate}
	\item $f(A_1\cup A_2) = f(A_1)\cup f(A_2)$, 
	\item $f(A_1\cap A_2)\subset f(A_1)\cap f(A_2)$,
	\item $f(A_1)\setminus f(A_2) \subset f(A_1\setminus A_2)$
	\item $f^{-1}(B_1\cup B_2) = f^{-1}(B_1)\cup f^{-1}(B_2)$, 
	\item $f^{-1}(B_1\cap B_2) = f^{-1}(B_1)\cap f^{-1}(B_2)$,
	\item $f^{-1}(B_2\setminus B_1) = f^{-1}(B_2)\setminus f^{-1}(B_1)$
\end{enumerate}
For any set $S$, a \emph{sequence}\index{sequence} on $S$ is a function in $\mathbb{N}\times S$ or $\{1,2,...,n\}\times S$. 
While one would normally write $s(i)$ for the element $(i,s)$ in the sequence, the notation $\{s_i\}_{i\in \mathbb{N}}$ is usually used. When the index set is clear, this is abbreviated $\{s_i\}$. 
A \emph{subsequence}\index{subsequence} of $s_{j}$ of $s_i$ is a function from a subset of the natural numbers into $S$ which is equal to $s_i$ when $i=j$. 

If $f$ is a function from $X$ into $X$, then a \emph{fixed point}\index{fixed point} of $f$ is an element $x\in X$ satisfying $(x,x)\in f$ (usually written $f(x)=x$).
\subsection{Orders}\label{app:orders}
An \emph{order}\index{order} on a set $X$ is a binary relation $R \subset X\times X$ with the properties,
\begin{enumerate}
	\item $(x,x)\in R$.
	\item If $(x,y)\in R$ and $(y,x)\in R$, then $x=y$.
 	\item If $(x,y)\in R$ and $(y,z)\in R$, then $(x,z)\in R$.
 	\item $(x,y)\in R$ or $(y,x)\in R$ for all $x,y\in X$. 
\end{enumerate}
A relation in $X\times X$ is a \emph{partial order}\index{partial order} if it only satisfies items 1-3. It is a \emph{strict partial order}\index{strict partial order} if it satisfies items 2-3 and the negation of item 1. 
The notation for an order $R\subset X\times X$ is usually denoted $x\leq y$ for $(x,y)\in R$.

Let $(X,\leq)$ be an ordered set. 
A \emph{lower bound}\index{lower bound} $l$ on a subset $S$ of $X$ is an element of $X$ such that $l\leq s$ for all $s\in S$. 
Similarly, an \emph{upper bound}\index{upper bound} $u$ on a subset $S$ of $X$ is an element of $X$ such that $s\leq u$ for all $s\in S$. 
Now let $L_S$ be the set of lower bounds of $S$, and $U_S$ be the set of upper bounds of $S$.
If $L_S$ is nonempty, then $S$ is said to be bounded from below. Similarly, if $U_S$ is nonempty, then $S$ is said to be bounded from above.
The ordered set $(X,\leq)$ is said to have the \emph{greatest lower bound property}\index{greatest lower bound property} if, for every nonempty $S$ that is bounded from below, there exists a lower bound $l^*\in X$ of $S$ which is also an upper bound of $L_S$. 
We call $l^*$ a greatest lower bound or \index{infimum}\emph{infimum}.
Similarly, the ordered set $(X,\leq)$ is said to have the \emph{least upper bound property}\index{least upper bound property} if, for every nonempty $S$ that is bounded from above, there exists an upper bound $u^*\in X$ of $S$ which is also a lower bound of $U_S$. 
We call $u^*$ a least upper bound or \index{supremum}\emph{supremum}.

A direct consequence of these definitions is that an ordered set has the greatest lower bound property if and only if it has the least upper bound property. Additionally, if an ordered set admits an infimum/supremum it is unique. 

If a subset of an ordered set contains its infimum/supremum, then we say that the set admits a minimum/maximum element.

\subsection{Equivalence Relations}
An \emph{equivalence relation}\index{equivalence relation} on a set $X$ is a binary relation $R \subset X \times X$ with the properties,
\begin{enumerate}
	\item $(x,x)\in R$ for all $x\in X$.
	\item $(x,y)\in R$ if and only if $(y,x)\in R$. 
	\item If $(x,y)\in R$ and $(y,z)\in R$, then $(x,z)\in R$.
\end{enumerate}
The notation for an equivalence relation is usually denoted $x\sim y$ if $(x,y)\in R$.
The \emph{equivalence class}\index{equivalence class} of an element $z\in X$ is the set of all $x\in X$ such that $z\sim x$. This is denoted
\begin{equation}
[z]=\{x\in X:\,z\sim x\}.
\end{equation}
Clearly, the set of all equivalence classes is a cover of $X$ since
\begin{equation}
\bigcup_{x\in X} \{x\} \subset \bigcup_{x\in X} [x].
\end{equation}
Additionally, if $[x]$ and $[y]$ are equivalence classes, either $[x]=[y]$ or $[x]\cap [y]=\emptyset$. Thus, every equivalence relation induces a partition of $X$ composed of the equivalence classes of the relation.
Conversely, every partition $P$ of $X$ induces an equivalence relation where $x\sim y$ if $x,y\in P$.


\newpage

\chapter{Review of Mathematical Analysis} 

This appendix on is based on personal notes from the following sources~\cite{kelley1975general,munkres2000topology,rudin1964principles}. 

\section{Topological Spaces}\label{app:topo}
Equipping sets with a topology makes it possible to discuss continuity of functions, limits of sequences, compactness, and other useful properties.

A set $X$ together with a collection of subsets $\mathcal{T}_X$ is called a \emph{topological space}\index{topological space} on $X$ if
\begin{itemize}
	\item[TS1] $\emptyset\in \mathcal{T}_X$ and $X\in \mathcal{T}_X$.
	\item[TS2] If $S_i\in \mathcal{T}_X$ for every $i\in I$, then   $\bigcup_{i\in I} S_i\in \mathcal{T}_X$.
	\item[TS3] $U\cap S\in \mathcal{T}_X$ for every $U,S\in \mathcal{T}_X$.
\end{itemize} 
A collection of subsets of $X$ satisfying axioms TS1-TS3 is called a \emph{topology}\index{topology} on $X$.
The elements of a topology are called \emph{open sets}\index{open set}.
A set which is the complement of an open set is called a \emph{closed set}\index{closed set}. 
Note that a set can be both open and closed (e.g. $\emptyset$ in any topological space).

The \emph{interior}\index{interior} of a subset $S$, denoted $int(S)$, is the union of open subsets of $S$. 
The \emph{closure}\index{closure} of a subset $S$, denoted $cl(S)$ is the intersection of closed sets containing $S$.  
A \emph{neighborhood}\index{neighborhood} $\mathcal{N}_p$ of a point $p$ in $X$ is an element of $\mathcal{T}_X$ containing $p$.

A point $p$ in a subset $S$ of $X$ is said to be a \emph{limit point}\index{limit point} of $S$ if every neighborhood of $p$ has a nonempty intersection with $S\setminus \{p\}$.
A sequence $\{x_i\}$ in $X$ is said to \emph{converge}\index{convergent sequence} to a point $p$ in $X$ if, for any neighborhood $\mathcal{N}_p$ of $p$, there exists an $N\in\mathbb{N}$ such that $n>\mathbb{N}$ implies $x_i\in \mathcal{N}_p$.

If $S$ is a subset of $X$, then the \emph{subspace topology}\index{subspace topology} $\mathcal{T}_S$ on $S$ is defined by 
\begin{equation}
\mathcal{T}_S=\left\{ S\cap U:\,\,U\in \mathcal{T}_X \right\}.
\end{equation}
\begin{lemma}\label{lem:subspace}
	If $S$ is an element of $\mathcal{T}_X$, then the subspace topology $\mathcal{T}_S$ is a subset of $\mathcal{T}_X$.  
\end{lemma}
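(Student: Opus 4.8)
The plan is to unwind the definition of the subspace topology and apply the finite-intersection axiom TS3 directly. By definition, a typical element of $\mathcal{T}_S$ has the form $S \cap U$ for some $U \in \mathcal{T}_X$. The hypothesis of the lemma gives us the extra fact that $S$ itself is an element of $\mathcal{T}_X$, so $S \cap U$ is an intersection of two members of $\mathcal{T}_X$.

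First I would fix an arbitrary set $V \in \mathcal{T}_S$ and use the definition $\mathcal{T}_S = \{S \cap U : U \in \mathcal{T}_X\}$ to write $V = S \cap U$ with $U \in \mathcal{T}_X$. Next I would invoke axiom TS3, which states that $U_1 \cap U_2 \in \mathcal{T}_X$ whenever $U_1, U_2 \in \mathcal{T}_X$; applying it with $U_1 = S$ (allowed precisely because of the lemma's hypothesis $S \in \mathcal{T}_X$) and $U_2 = U$ yields $V = S \cap U \in \mathcal{T}_X$. Since $V$ was an arbitrary element of $\mathcal{T}_S$, this shows every element of $\mathcal{T}_S$ is an element of $\mathcal{T}_X$, which is exactly the statement $\mathcal{T}_S \subset \mathcal{T}_X$ by the definition of subset.

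There is essentially no obstacle here; the only thing to be careful about is not to conflate "subspace topology on $S$" (always a topology on the set $S$) with "subset of $\mathcal{T}_X$" — these coincide only under the stated hypothesis that $S$ is open in $X$, and the proof makes clear exactly where that hypothesis is used (the choice $U_1 = S$ in TS3). No induction, no limiting arguments, and no appeal to earlier lemmas beyond the topology axioms TS1–TS3 are needed.
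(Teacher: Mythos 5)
Your proof is correct and follows exactly the same route as the paper's one-line argument: write an arbitrary element of $\mathcal{T}_S$ as $S \cap U$ with $U \in \mathcal{T}_X$ and apply TS3 using the hypothesis $S \in \mathcal{T}_X$. You have simply spelled out the quantifier structure more explicitly than the paper does.
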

\begin{proof}
	If $S$ is an element of $\mathcal{T}_X$ then so is $S\cap U$ for any $U$ in $\mathcal{T}_X$ by TS3.
\end{proof}
\par{\noindent}It is important to note that if $S$ is not open, elements of the subspace topology defined by $S$ may not be elements of the original topology.

A collection of subsets $\mathcal{B}_X$ of $X$ is called a \emph{base}\index{base} for a topology if it is a cover of $X$, and for every $B_i,B_j\in\mathcal{B}_X$, if $x$ is an element of $B_i\cap B_j$, then there exists a $B_k\subset B_i\cap B_j$ in $\mathcal{B}_X$ such that $x$ is an element of $B_k$. 
A topological space is \emph{second countable}\index{second countable topological space} if it has a countable base. 
\begin{lemma}
	Let $\mathcal{B}_X$ be a base for a topology on $X$, and let $\mathcal{T}_B$ be the collection of subsets of $X$ satisfying: for every $U \in \mathcal{T}_B$ and $x\in U$, there exists $B\in \mathcal{B}_X$ such that $x\in B$ and $B\subset U$. The collection of subsets $\mathcal{T}_B$ is a topology on $X$ called the topology generated by $\mathcal{B}_X$. 
\end{lemma}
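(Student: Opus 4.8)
The plan is to verify directly that $\mathcal{T}_B$ satisfies the three axioms TS1--TS3 defining a topology, using only the two defining properties of a base: that $\mathcal{B}_X$ covers $X$, and that whenever $x \in B_i \cap B_j$ for $B_i, B_j \in \mathcal{B}_X$, there is some $B_k \in \mathcal{B}_X$ with $x \in B_k \subset B_i \cap B_j$.

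First I would dispatch TS1. The empty set belongs to $\mathcal{T}_B$ vacuously, since the defining condition ``for every $x \in \emptyset$\ldots'' imposes no requirement. For $X$ itself, I would use that $\mathcal{B}_X$ is a cover of $X$: given any $x \in X$, there is $B \in \mathcal{B}_X$ with $x \in B$, and trivially $B \subset X$, so $X \in \mathcal{T}_B$.

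Next I would handle TS2, the closure under arbitrary unions. Suppose $U_i \in \mathcal{T}_B$ for every $i$ in some index set $I$, and let $x \in \bigcup_{i \in I} U_i$. Then $x \in U_j$ for some $j \in I$, and since $U_j \in \mathcal{T}_B$ there is $B \in \mathcal{B}_X$ with $x \in B \subset U_j \subset \bigcup_{i \in I} U_i$. Hence $\bigcup_{i \in I} U_i \in \mathcal{T}_B$.

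The only step requiring the full strength of the base hypothesis is TS3, closure under finite (pairwise) intersection, and this is where I expect the crux of the argument to lie. Given $U, S \in \mathcal{T}_B$ and $x \in U \cap S$, membership of $U$ and $S$ in $\mathcal{T}_B$ yields $B_i, B_j \in \mathcal{B}_X$ with $x \in B_i \subset U$ and $x \in B_j \subset S$. Then $x \in B_i \cap B_j$, so the base compatibility property provides $B_k \in \mathcal{B}_X$ with $x \in B_k \subset B_i \cap B_j \subset U \cap S$. Since $x$ was arbitrary, $U \cap S \in \mathcal{T}_B$, completing the verification. The main obstacle is simply recognizing that without the base compatibility condition the candidate $\mathcal{B}_X$-generated family would fail to be closed under intersection; once that condition is invoked, the proof is a routine unwinding of definitions.
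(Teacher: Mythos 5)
Your proof is correct and follows the same route as the paper: vacuous/covering argument for TS1, pointwise reduction to a single $U_j$ for TS2, and the base compatibility condition for TS3. Nothing substantive differs.
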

\begin{proof}
	We need to show that $\mathcal{T}_B$ is a topology.
	
	(TS1) It is vacuously true that the empty set belongs to $\mathcal{T}_B$. Next, since $\mathcal{B}_X$ is a cover of $X$, there exists a $B\in \mathcal{B}_X$ such that $x\in B$ for every $x\in X$ and clearly $B\subset X$. Thus, $X\in \mathcal{T}_B$.
	
	(TS2) Let $U_i$ be an element of $\mathcal{T}_B$ for all $i\in \mathcal{I}$ and define 
	\begin{equation}
		U\coloneqq\bigcup_{i\in\mathcal{I}} U_i.
	\end{equation}
	For each $x\in U$, there is an $i\in \mathcal{I}$ such that $x\in U_i$. Now by construction of $\mathcal{T}_B$, there exists $B\in \mathcal{B}_X$ such that $x\in B$ and $B\subset U_i$. Then $x\in U$ and $B \subset U$ so that $U\in \mathcal{T}_B$.
	
	(TS3) Consider $U_1,U_2\in \mathcal{T}_B$ and $x\in U_1\cap U_2$. By construction of $\mathcal{T}_B$, there exists $B_1,B_2 \in \mathcal{B}_X$ such that $x\in B_{1}\subset U_1$, and $x\in B_{2}\subset U_{2}$. By definition of a base, there exists $B_3$ such that $B_3\subset B_1\cap B_2$ and $x\in B_3$. Thus, $B_3\subset U_1\cap U_2$ which, by construction of $\mathcal{T}_B$, implies $U_1\cap U_2\in \mathcal{T}_B$.     
\end{proof}

\subsection{Compactness}

A subset of a topological space is said to be \emph{compact}\index{compact sets} if every open cover has a finite subcover. 
\begin{lemma}
	If $S$ is a closed subset of a compact set $C$, then $S$ is compact.
\end{lemma}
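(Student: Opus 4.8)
The statement to prove is a classical fact: a closed subset $S$ of a compact set $C$ is itself compact. The plan is to argue directly from the definition of compactness via open covers, using the complement of $S$ to patch an arbitrary open cover of $S$ into an open cover of $C$, then extract a finite subcover of $C$ and throw away the patch.

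First I would let $\{U_i\}_{i\in\mathcal{I}}$ be an arbitrary open cover of $S$, i.e. a collection of open sets in $X$ (equivalently, in the subspace topology, but working in $X$ is cleaner) with $S\subset\bigcup_{i\in\mathcal{I}}U_i$. Since $S$ is closed, its complement $S^c$ is open. I would then observe that $\{U_i\}_{i\in\mathcal{I}}\cup\{S^c\}$ is an open cover of $X$, and in particular of $C$: indeed any point of $C$ either lies in $S$, hence in some $U_i$, or lies in $S^c$. Because $C$ is compact, this cover of $C$ admits a finite subcover, say
\begin{equation}
C\subset U_{i_1}\cup\cdots\cup U_{i_k}\cup S^c.
\end{equation}
Finally, since $S\subset C$ and $S\cap S^c=\emptyset$, the set $S^c$ contributes nothing to covering $S$, so $S\subset U_{i_1}\cup\cdots\cup U_{i_k}$, which is a finite subcover of the original cover of $S$. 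Hence $S$ is compact.

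There is a minor bookkeeping point to handle carefully: open covers of $S$ may be phrased using sets open in the subspace topology $\mathcal{T}_S$ rather than in $\mathcal{T}_X$. I would address this by noting that each $\mathcal{T}_S$-open set is of the form $S\cap V$ with $V\in\mathcal{T}_X$, replacing each cover element by such a $V$, running the argument above in $X$, and intersecting the resulting finite subcover back with $S$ at the end. This is the only genuine subtlety; the rest is a two-line manipulation. I do not expect any real obstacle here — the argument is short and the only thing to be careful about is not conflating "cover of $S$" with "cover of $C$" and remembering that $S^c$ can be discarded because $S$ is disjoint from it.
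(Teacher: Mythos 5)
Your proof is correct and follows exactly the same route as the paper's: augment an open cover of $S$ with $S^c$ to cover $C$, extract a finite subcover by compactness of $C$, then discard $S^c$. The extra care you take about subspace versus ambient open sets is a reasonable (and harmless) refinement that the paper simply elides.
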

\begin{proof}
	Let $\{U_i\}_{i\in\mathcal{I}}$ be an open cover of $S$. Then $\{S^c\}\cup\{U_i\}_{i\in\mathcal{I}}$ is an open cover of $C$ which will have a finite subcover $\{W_i\}_{i\in\mathcal{J}}$. Then $\{W_i\}_{i\in\mathcal{J}}\setminus \{S^c\}$ is a finite subcover of $\{U_i\}_{i\in\mathcal{I}}$.
\end{proof}
\begin{lemma}
	If $C$ is a compact subset of a topological space, then every sequence $\{c_n\}$ contained in $C$ has a subsequence converging to a point in $C$.
\end{lemma}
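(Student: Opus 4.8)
The plan is to argue by contradiction: suppose no subsequence of $\{c_n\}$ converges to a point of $C$, and derive a contradiction with compactness. Before the main argument I would dispose of a degenerate case. If the image set $\{c_n:\,n\in\mathbb{N}\}$ is finite, then by the pigeonhole principle some value $c\in C$ is attained for infinitely many indices, and the corresponding constant subsequence converges to $c\in C$, contradicting the hypothesis. So I may assume the image is infinite.

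The key step is to show that for every $p\in C$ there is a neighborhood $\mathcal{N}_p$ of $p$ that contains $c_n$ for only finitely many indices $n$. I would prove the contrapositive: if some $p\in C$ had the property that \emph{every} neighborhood of $p$ contains $c_n$ for infinitely many $n$, then one can build a subsequence converging to $p$. Taking a countable neighborhood base $\{B_k\}_{k\in\mathbb{N}}$ at $p$ (available because the spaces relevant to this thesis are metric, hence first countable), which we may take to be nested ($B_{k+1}\subset B_k$), choose indices $n_1<n_2<\cdots$ with $c_{n_k}\in B_k$; then $\{c_{n_k}\}$ converges to $p\in C$, contradicting the assumption. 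This is the one place the argument needs a hypothesis beyond bare compactness — first countability of the ambient space — and it is the main subtlety to flag.

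Finally, the collection $\{\mathcal{N}_p\}_{p\in C}$ is an open cover of $C$, so compactness yields a finite subcover $\mathcal{N}_{p_1},\dots,\mathcal{N}_{p_m}$. Each $\mathcal{N}_{p_j}$ contains only finitely many terms of the sequence, hence so does $C=\bigcup_{j=1}^{m}\mathcal{N}_{p_j}$; but $C$ contains every $c_n$ and the image is infinite, a contradiction. Therefore some subsequence of $\{c_n\}$ converges to a point of $C$. I expect the neighborhood-base construction in the key step to be the only nonroutine part; the rest is a direct application of the finite-subcover property already recorded in this appendix.
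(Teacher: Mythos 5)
Your proof is correct, and it reaches the conclusion by a slightly different decomposition than the paper's. The paper first shows that if the (infinite) image set had no limit points it would be closed, hence compact as a closed subset of $C$, and then covers that image set by singleton neighborhoods to contradict compactness of the image; finally it invokes the existence of a limit point to extract the convergent subsequence. You instead set up a dichotomy directly on $C$: either some $p\in C$ has every neighborhood meeting the sequence infinitely often (in which case you build the subsequence), or every $p$ has a neighborhood $\mathcal{N}_p$ meeting it only finitely often, and then a finite subcover of $\{\mathcal{N}_p\}_{p\in C}$ forces the image to be finite, a contradiction. Your route avoids the detour through ``closed implies compact'' applied to the image set and so is a bit cleaner. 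More importantly, you correctly flag that extracting a convergent subsequence from a cluster point uses a countable (nested) local base, i.e., first countability. This gap is genuinely present in the paper's proof as well, concealed in its final sentence (``from which a subsequence converging to that limit point can be constructed''), and the lemma as stated for an arbitrary topological space is in fact false: $\{0,1\}^{[0,1]}$ with the product topology is compact but not sequentially compact. Since everything in the thesis takes place in metric spaces, the lemma holds for the intended use; the statement and the paper's proof should restrict to first-countable (or metric) spaces, and your write-up is the one that makes this explicit.
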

\begin{proof}
	Suppose that the image of the sequence $\{c_n\}$ is finite. Then $c_n$ maps to at least one point $p$ in $C$ an infinite number of times in which case the subsequence  whose image is $p$ converges to $p$. Now suppose $\{c_n\}$ is infinite, and as a point of contradiction that this subset has no limit points. Then it is closed which implies it is compact as well since a closed subset of a compact set is compact. Then every point in $\{c_n\}$ will have a neighborhood $\mathcal{N}_{c_n}$ containing only the point $c_n$. This open cover of $\{c_n\}$ cannot have a finite subcover so it is not compact which is a contradiction. Thus, $\{c_n\}$ must have a limit point in $C$ from which a subsequence converging to that limit point can be constructed.  
\end{proof}   
\par{\noindent} It is interesting to note that the converse is also true, but this fact is not required for the arguments in this thesis.

\subsection{Continuity}

A function from a topological space $(X,\mathcal{T}_X)$ into a topological space $(Y,\mathcal{T}_Y)$ is said to be \emph{continuous}\index{continuous} if the inverse image of every open subset of $Y$ under $f$ is an open subset of $X$.
That is,
\begin{equation}
f^{-1}(U)\in \mathcal{T}_X\quad \forall U\in \mathcal{T}_Y.
\end{equation}
Alternatively, the function $f$ is continuous on a subset $S$ of $X$, if $f$ satisfies the above definition of continuity in the subspace topology associated with $S$. That is,
\begin{equation}
f^{-1}(U)\cap S \in \mathcal{T}_S \qquad \forall U\in \mathcal{T}_Y.
\end{equation}
It is important to point out that is is not necessarily true that $f^{-1}(U)\cap S \in \mathcal{T}_S$ implies $f^{-1}(U)\cap S \in \mathcal{T}_X$ (i.e. the inverse image intersected with $S$ need not be open in $\mathcal{T}_X$). 
\begin{lemma}\label{lem:compact_image}
 	Let $f$ be a continuous function from a topological space $(X,\mathcal{T}_X)$ into a topological space $(Y,\mathcal{T}_Y)$. If $C$ is a compact subset of $X$, then $f(C)$ is a compact subset of $Y$.   
\end{lemma}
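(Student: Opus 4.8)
The plan is to use the open-cover definition of compactness directly: transport an arbitrary open cover of $f(C)$ back through $f^{-1}$ to obtain an open cover of $C$, extract a finite subcover using compactness of $C$, and then push that finite family forward through $f$ to cover $f(C)$ again.

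First I would fix an arbitrary open cover $\{U_i\}_{i\in\mathcal{I}}$ of $f(C)$, i.e. a family with $U_i\in\mathcal{T}_Y$ and $f(C)\subset\bigcup_{i\in\mathcal{I}}U_i$. The key step is to observe that $\{f^{-1}(U_i)\cap C\}_{i\in\mathcal{I}}$ is an open cover of $C$ in the subspace topology $\mathcal{T}_C$. These sets are open in $\mathcal{T}_C$ because $f$ is continuous, so $f^{-1}(U_i)\in\mathcal{T}_X$, and hence $f^{-1}(U_i)\cap C\in\mathcal{T}_C$ by the definition of the subspace topology. That the family covers $C$ follows from a short set-theoretic computation: if $x\in C$ then $f(x)\in f(C)$, so $f(x)\in U_i$ for some $i\in\mathcal{I}$, which gives $x\in f^{-1}(U_i)\cap C$; alternatively one invokes the identities $f^{-1}\big(\bigcup_i U_i\big)=\bigcup_i f^{-1}(U_i)$ and $C\subset f^{-1}(f(C))$ recorded in Appendix~\ref{app:relations}.

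Next, since $C$ is compact (equivalently, compact as a space in its own subspace topology), this open cover admits a finite subcover indexed by some finite $\mathcal{J}\subset\mathcal{I}$, so $C\subset\bigcup_{i\in\mathcal{J}}\big(f^{-1}(U_i)\cap C\big)$. Finally I would push this forward through $f$, using $f(f^{-1}(B))\subset B$ together with monotonicity and the union identity for images: $f(C)\subset f\big(\bigcup_{i\in\mathcal{J}}f^{-1}(U_i)\big)=\bigcup_{i\in\mathcal{J}}f\big(f^{-1}(U_i)\big)\subset\bigcup_{i\in\mathcal{J}}U_i$. Thus $\{U_i\}_{i\in\mathcal{J}}$ is a finite subcover of the original cover, and $f(C)$ is compact.

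The only point needing care — and the closest thing to an obstacle — is the bookkeeping with the subspace topology: since $C$ need not be open in $X$, the sets $f^{-1}(U_i)\cap C$ generally lie only in $\mathcal{T}_C$ and not in $\mathcal{T}_X$, so compactness must be applied to $C$ regarded as a topological space in its own right rather than to a cover of $C$ by sets open in $X$. Everything else is routine manipulation with the image and preimage identities from Appendix~\ref{app:relations}.
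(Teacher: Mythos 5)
Your proof is correct and follows essentially the same route as the paper: pull the cover back through $f^{-1}$, extract a finite subcover by compactness of $C$, and push it forward using $f\bigl(f^{-1}(U_i)\bigr)\subset U_i$. The one place you diverge — intersecting with $C$ and invoking the subspace topology — is an unnecessary detour: the paper's definition of compactness for a \emph{subset} $C\subset X$ asks for finite subcovers of covers by sets open in $X$, and the sets $f^{-1}(U_i)$ are already open in $\mathcal{T}_X$ and cover $C$ (via $C\subset f^{-1}(f(C))$), so no intersection with $C$ or passage to $\mathcal{T}_C$ is needed. The ``obstacle'' you flag at the end is therefore not actually present; the two characterizations of compactness (ambient-cover versus intrinsic) coincide, and the paper simply works with the former throughout.
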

\begin{proof}
 	Let $\{U_i\}_{i\in\mathcal{I}}$ be an open cover of $f(C)$. By the definition of continuity and TS2, $\bigcup_{i\in\mathcal{I}} f^{-1}(U_i)$ is an open cover of $C$. Since $C$ is compact this cover contains a finite subcover $\{f^{-1}(U_i)\}_{i\in\mathcal{J}}$ of $C$. Then 
	\begin{equation}
	 f\left( \bigcup_{i\in\mathcal{J}} f^{-1}(U_i) \right) \subset  \bigcup_{i\in\mathcal{J}} f\left( f^{-1}(U_i) \right) \subset  \bigcup_{i\in\mathcal{J}} U_i, 
	\end{equation}
	 which is an open cover of $f(C)$. Therefore, $C$ is compact.
 \end{proof}
\section{Metric Spaces}\label{sec:metric_space}
Metric spaces are sets equipped with a distance function that assign a distance between two points in the set. This distance function is a generalization of distance as we encounter it in two or three dimensional space.
A set $X$ together with a function $d_X:X\times X\rightarrow [0,\infty)$ is called a \emph{metric space}\index{metric space} if
\begin{itemize}
	\item[MS1] $0\leq d_X(x,y)$  for every $x,y$ in $X$.
	\item[MS2]  $d_X(x,y)=0$ if and only if $x=y$.
	\item[MS3] $d(x,y)=d(y,x)$ for every $x,y$ in $X$.
	\item[MS4] $d(x,z)\leq d(x,y)+d(y,z)$ for every $x,y,z$ in $X$
\end{itemize}
A distance function satisfying axioms MS1-MS4 is called \emph{metric}\index{metric} on $X$.
Notice that if $(X,d_X)$ is a metric space and $S\subset X$, then $(S,d_X)$ is a metric space. This is referred to as a \emph{metric subspace}\index{metric subspace}.

The \emph{open ball}\index{open ball} of radius $r$ centered at $p$ in a metric space $(X,d_X)$ is defined 
\begin{equation}
B_r(p)=\left\{x\in X:\, d_X(p,x)<r \right\}.
\end{equation} 
\begin{lemma}\label{app:induced_topolot}
	Every metric space $(X,d_X)$ has an induced topology generated by $\{B_r(x)\}_{x\in X,\,r>0}$. This topology is called the metric topology.
\end{lemma}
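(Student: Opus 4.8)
The plan is to verify the two conditions required for a collection of subsets to be a base (as defined just above the statement), and then invoke the preceding lemma which says that any base generates a topology. So the work reduces to checking that $\mathcal{B} = \{B_r(x)\}_{x\in X,\,r>0}$ is a base for a topology on $X$.

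First I would check that $\mathcal{B}$ is a cover of $X$. This is immediate: for any $x \in X$, the ball $B_1(x)$ contains $x$ since $d_X(x,x) = 0 < 1$ by MS1 and MS2, so $X \subset \bigcup_{x\in X} B_1(x) \subset \bigcup_{B \in \mathcal{B}} B$. Second, I would check the base intersection condition: given two balls $B_{r_1}(x_1)$ and $B_{r_2}(x_2)$ and a point $y$ in their intersection, I need to produce a ball $B \in \mathcal{B}$ with $y \in B$ and $B \subset B_{r_1}(x_1) \cap B_{r_2}(x_2)$. The natural choice is $B = B_\rho(y)$ where $\rho = \min\{r_1 - d_X(x_1,y),\; r_2 - d_X(x_2,y)\}$; since $y$ lies in both open balls, both $r_i - d_X(x_i,y)$ are strictly positive, so $\rho > 0$ and $B_\rho(y) \in \mathcal{B}$. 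Then $y \in B_\rho(y)$ trivially, and the triangle inequality MS4 shows $B_\rho(y) \subset B_{r_i}(x_i)$ for $i=1,2$: if $d_X(y,z) < \rho \le r_i - d_X(x_i,y)$, then $d_X(x_i,z) \le d_X(x_i,y) + d_X(y,z) < r_i$. Hence $B_\rho(y) \subset B_{r_1}(x_1)\cap B_{r_2}(x_2)$, and the base condition holds.

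Having shown $\mathcal{B}$ is a base, the preceding lemma immediately yields that the collection $\mathcal{T}_B$ of subsets $U$ such that every point of $U$ is contained in a basic ball lying inside $U$ is a topology on $X$; this is by definition the metric topology.

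I do not anticipate a serious obstacle here — the only place requiring any care is the base intersection condition, and even there the argument is the standard "shrink the radius using the triangle inequality" computation. The one thing to be slightly careful about is ensuring the radius $\rho$ chosen is strictly positive, which is exactly where the openness of the balls (strict inequality in the definition of $B_r(p)$) is used; without it one could only get $\rho \ge 0$ and the claimed ball need not be a legitimate element of $\mathcal{B}$.
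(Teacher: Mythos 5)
Your proposal follows essentially the same route as the paper: verify that the open balls form a base (cover condition plus the intersection condition via the triangle inequality) and then invoke the preceding lemma to obtain the generated topology. Your version is slightly more careful than the paper's — you explicitly note that the shrunk radius is strictly positive and spell out the triangle-inequality containment, whereas the paper states these steps more tersely (and in fact contains a minor typo writing $r_1-d_2$ where $r_1-d_1$ is meant) — but the argument is the same.
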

\begin{proof}
	It is sufficient to show that  $\{B_r(x)\}_{x\in X,\,r>0}$ is a base for a topology. Clearly, $\{B_r(x)\}_{x\in X,\,r>0}$ is a cover of $X$ since $x\in B_r(x)$ for any $r>0$. Now let $x$ be an element of $B_{r_1}(p_1)\cap B_{r_2}(p_2)$. Denote $d_X(x,p_1)$ by $d_1$ and $d_X(x,p_2)$ by $d_2$. Then $B_{(r_1-d_1)}(x)\subset B_{r_1}(p_1)$ and $B_{(r_2-d_2)}(x)\subset B_{r_2}(p_2)$. Next, let $r_3=\min\{r_1-d_2,r_2-d_2\}$. Then $B_{r_3}(x)\subset B_{r_1}(p_1)\cap B_{r_2}(p_2)$. Therefore,  $\{B_r(x)\}_{x\in X,\,r>0}$ is a base for a topology. 
\end{proof}
With the induced topology, every metric space inherits the analytical tools available for a topological space.

\subsection{Continuity in Metric Spaces}\label{app:continuity}

Metric spaces gives rise to an alternative interpretation of continuity:
\begin{lemma}\label{app:ep_delt_cont}
	Suppose $f$ is a function from a metric space $(X,d_X)$ into $(Y,d_Y)$. If for every $p\in X$ and $\varepsilon>0$ there exists a $\delta>0$ (depending on $p$ and $\varepsilon$) such that $d_X(p,x)<\delta$ implies $d_Y(f(p),f(x))<\varepsilon$, then $f$ is continuous in the metric topology.
\end{lemma}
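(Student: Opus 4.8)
The plan is to verify the topological definition of continuity --- that $f^{-1}(U)$ is open in the metric topology on $X$ for every open $U$ in the metric topology on $Y$ --- by working directly with the base of open balls. Recall from Lemma \ref{app:induced_topolot} that the metric topology on $Y$ is generated by open balls, and recall from the definition of a generated topology that a set $V\subset X$ is open precisely when every point of $V$ has an open ball around it contained in $V$. So it suffices to show: given an open $U\subset Y$ and a point $p\in f^{-1}(U)$, there is some radius $\rho>0$ with $B_\rho(p)\subset f^{-1}(U)$.

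First I would fix such a $p$. Since $f(p)\in U$ and $U$ is open in $Y$, the characterization of the generated topology gives a ball $B_\varepsilon(f(p))\subset U$ for some $\varepsilon>0$. Now I invoke the hypothesis: for this $p$ and this $\varepsilon$ there is a $\delta>0$ such that $d_X(p,x)<\delta$ implies $d_Y(f(p),f(x))<\varepsilon$. I claim $B_\delta(p)\subset f^{-1}(U)$. Indeed, if $x\in B_\delta(p)$ then $d_X(p,x)<\delta$, hence $d_Y(f(p),f(x))<\varepsilon$, i.e. $f(x)\in B_\varepsilon(f(p))\subset U$, so $x\in f^{-1}(U)$. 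Taking $\rho=\delta$ finishes the point. Since $p$ was an arbitrary point of $f^{-1}(U)$, every point of $f^{-1}(U)$ has an open ball around it inside $f^{-1}(U)$, so $f^{-1}(U)$ is open. As $U$ was an arbitrary open subset of $Y$, $f$ is continuous in the metric topology.

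There is no real obstacle here; the only thing to be careful about is the logical plumbing of ``generated topology,'' namely that membership in the metric topology is equivalent to the ``ball around every point'' condition rather than to literally being a union of basic open balls --- though those are the same thing. I would state that equivalence explicitly (citing the lemma on topologies generated by a base) so the proof reads cleanly, and then the $\varepsilon$--$\delta$ hypothesis plugs in verbatim. One could alternatively phrase the argument as: $f^{-1}(U)=\bigcup_{p\in f^{-1}(U)} B_{\delta(p)}(p)$ and a union of basic open sets is open by TS2; I would mention this as the one-line summary.
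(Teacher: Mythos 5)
Your proof is correct and takes essentially the same route as the paper: fix $p\in f^{-1}(U)$, find $B_\varepsilon(f(p))\subset U$, obtain $\delta$ from the hypothesis, and conclude $B_\delta(p)\subset f^{-1}(U)$, so $p$ is interior. (You actually state the set inclusion in the correct direction, $B_\delta(p)\subset f^{-1}(B_\varepsilon(f(p)))$; the paper's proof writes this inclusion backwards as a typographical slip before reaching the same conclusion.)
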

\begin{proof}
	Let $U$ be an open set in $Y$ with respect to the metric topology. Choose any $p\in f^{-1}(U)$. It will be sufficient to show that $p$ is an interior point of $f^{-1}(U)$. Since $U$ is open, there exists $B_\varepsilon(f(p))$ which is a subset of $U$. Now suppose that for every $\varepsilon>0$ there exists a $\delta>0$ such that $d_X(p,x)<\delta$ implies $d_Y(f(p),f(x))<\varepsilon$. Then $f^{-1}(B_\varepsilon(f(p)))\subset B_\delta(p)$. Therefore, $B_\delta(p)\subset f^{-1}(U)$ and, $p$ is an interior point of $f^{-1}(U)$. Thus, $f^{-1}(U)$ is open for any open set $U$ which is the definition of continuity.
\end{proof}
\par{\noindent}The converse is also true making this an alternative definition of continuity.

One can define stronger forms of continuity in metric spaces. 
A function $f$ from a metric space $(X,d_X)$ into a metric space $(Y,d_Y)$ is said to be \emph{uniformly continuous}\index{uniform continuity} if for every $\varepsilon>0$ there exists a $\delta>0$ such that $d_X(x_1,x_2)<\delta$ implies $d_Y(f(x_1),f(x_2))<\varepsilon$. 
In contrast to the basic form of continuity, the $\delta$ in this definition can be independent of the points $x_1$, $x_2$ and depends only on $\varepsilon$.
\begin{lemma}\label{lem:uniform_cont}
	A continuous function $f$ from a compact metric space $(X,d_X)$ into a metric space $(Y,d_Y)$ is    uniformly continuous.
\end{lemma}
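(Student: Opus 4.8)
The plan is to use the open-cover definition of compactness directly, in the classical Heine--Cantor style. Fix $\varepsilon>0$. By continuity of $f$, expressed in the $\varepsilon$--$\delta$ form of Lemma \ref{app:ep_delt_cont} (whose converse also holds, as noted there), for each $p\in X$ there is a radius $\delta_p>0$ such that $d_X(p,x)<\delta_p$ implies $d_Y(f(p),f(x))<\varepsilon/2$. The collection $\{B_{\delta_p/2}(p)\}_{p\in X}$ is then an open cover of $X$ in the metric topology.

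Since $X$ is compact, this cover admits a finite subcover, say indexed by points $p_1,\dots,p_n$. Set $\delta\coloneqq \min\{\delta_{p_1},\dots,\delta_{p_n}\}/2$, which is strictly positive because it is the minimum of finitely many positive numbers. I claim this single $\delta$ works for all pairs of points, which is exactly what uniform continuity demands.

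To verify the claim, take any $x_1,x_2\in X$ with $d_X(x_1,x_2)<\delta$. Since the balls $B_{\delta_{p_i}/2}(p_i)$ cover $X$, there is an index $i$ with $x_1\in B_{\delta_{p_i}/2}(p_i)$, hence $d_X(x_1,p_i)<\delta_{p_i}/2$ and therefore $d_Y(f(x_1),f(p_i))<\varepsilon/2$. By the triangle inequality MS4 in $X$, $d_X(x_2,p_i)\leq d_X(x_2,x_1)+d_X(x_1,p_i)<\delta+\delta_{p_i}/2\leq \delta_{p_i}$, so also $d_Y(f(x_2),f(p_i))<\varepsilon/2$. A final application of MS4 in $Y$ gives $d_Y(f(x_1),f(x_2))\leq d_Y(f(x_1),f(p_i))+d_Y(f(p_i),f(x_2))<\varepsilon$, as required.

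I do not expect a serious obstacle; the one subtlety worth flagging is the factor of two in the radii $\delta_p/2$, which is precisely what makes the finite minimum $\delta$ large enough to ``bridge'' from an arbitrary point $x_1$ to the center $p_i$ of whichever cover element contains it and then on to $x_2$ while staying inside $B_{\delta_{p_i}}(p_i)$. An alternative route is a proof by contradiction: if $f$ were not uniformly continuous one could extract sequences $x_n,y_n\in X$ with $d_X(x_n,y_n)<1/n$ but $d_Y(f(x_n),f(y_n))\geq\varepsilon$ for a fixed $\varepsilon>0$, pass to a subsequence $x_{n_k}\to p$ using sequential compactness of $X$ (which follows from the compactness lemma proved earlier), observe $y_{n_k}\to p$ as well, and contradict continuity of $f$ at $p$. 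That argument works equally well but relies on the sequential characterization of compactness rather than the covering definition, so I would present the covering proof as the primary one.
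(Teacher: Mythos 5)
Your proof is correct and follows essentially the same Heine--Cantor argument as the paper: cover $X$ by half-radius balls coming from pointwise continuity, extract a finite subcover by compactness, take the minimum radius, and use the triangle inequality to bridge any two nearby points through a common center. In fact your bookkeeping with $\varepsilon/2$ and $\delta=\min_i\delta_{p_i}/2$ is slightly tidier than the paper's version, which lands on a bound of $2\varepsilon$ rather than $\varepsilon$ (harmless, but your version cleans it up).
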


\begin{proof} Since $f$ is continuous, for every $\varepsilon>0$ and $x\in X$, there exists a ball $B_{\delta_x}(x)$ such that $f(B_{\delta_x}(x))\subset B_{\varepsilon} (f(x))$. For each $\varepsilon>0$, the balls $B_{\delta_x/2}(x)$ form an open cover of $X$. Thus, there is a finite subcover defined by a finite set of points $\{x_1,...,x_n\}$. Now let $\delta^*=\min\{\delta_{x_1},...,\delta_{x_n}\}$. Since this set is finite, it admits a strictly positive minimum. Now choose $p_1,p_2\in X$ such that $d_X(p_1,p_2)<\delta^*/2$. Then $d_X(p_1,x_m)<\delta^*/2$ for one of the points $x_m$ and by the triangle inequality $d_X(p_2,x_m)<2\delta^*$. Thus, $p_1,p_2\in B_{\delta^*}(x_m)$ which implies $d_X(f(p_1),f(p_2))<\varepsilon$. $\delta^*$ is independent of $p_1$ and $p_2$ and therefore, $f$ is uniformly continuous.     
\end{proof}

An even stronger form of continuity is Lipschitz continuity. 
A function $f$ from a metric space $(X,d_X)$ into a metric space $(Y,d_Y)$ is said to be \emph{Lipschitz continuous}\index{Lipschitz continuity} if there exists a constant $L$ such that 
\begin{equation}
d_X(x_1,x_2)< L\cdot d_Y(f(x_1),f(x_2))\qquad \forall x_1,x_2\in X. 
\end{equation}
The function ${\rm Lip}$ from the set of functions between two metric spaces into $\mathbb{R}$ will be  defined as the greatest lower bound on the set of Lipschitz constants of its argument. 
That is, if $f$ is a function between two metric spaces, then ${\rm Lip}(f)$ is the infimum over Lipschitz constants of $f$.

\begin{lemma}\label{lem:compact_closed}
	If $C$ and $U$ are disjoint subsets of a metric space $(X,d_X)$, with $C$ compact and $U$ closed, then there exists a $\rho>0$ such that 
	\begin{equation}
	\rho < d_X(c,u),\quad \forall c\in C,u\in U .
	\end{equation} 
\end{lemma}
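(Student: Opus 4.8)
The plan is to adapt the finite-subcover argument used in the proof of Lemma~\ref{lem:uniform_cont}. First I would dispose of the degenerate cases: if either $C$ or $U$ is empty the asserted inequality holds vacuously (take $\rho=1$), so assume both are nonempty. The key local fact is that, since $U$ is closed, $U^c$ is open in the metric topology and $C\subset U^c$ because $C\cap U=\emptyset$; hence for each $c\in C$ there is a radius $r_c>0$ with $B_{r_c}(c)\subset U^c$, which says precisely that $d_X(c,u)\geq r_c$ for every $u\in U$.

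Next, the collection $\{B_{r_c/2}(c)\}_{c\in C}$ is an open cover of the compact set $C$, so it admits a finite subcover indexed by points $c_1,\dots,c_n\in C$. I would set $\rho\coloneqq \frac{1}{2}\min\{r_{c_1},\dots,r_{c_n}\}$, which is positive as the minimum of finitely many positive numbers. Given arbitrary $c\in C$ and $u\in U$, choose an index $i$ with $c\in B_{r_{c_i}/2}(c_i)$; then MS4 gives
\begin{equation}
d_X(c,u)\;\geq\; d_X(c_i,u)-d_X(c_i,c)\;>\; r_{c_i}-\frac{1}{2}r_{c_i}\;=\;\frac{1}{2}r_{c_i}\;\geq\;\rho ,
\end{equation}
which is the claimed bound.

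Everything here is routine; the only step requiring attention is the choice of shrinking factor — covering $C$ by the half-radius balls $B_{r_c/2}(c)$ rather than the $B_{r_c}(c)$ themselves — since this is exactly what lets the triangle inequality close with a uniform positive gap, just as in the proof of Lemma~\ref{lem:uniform_cont}. An alternative route is to note that $x\mapsto\inf_{u\in U}d_X(x,u)$ is Lipschitz continuous and hence attains a (necessarily positive) minimum on the compact set $C$ via Lemma~\ref{lem:compact_image}; but that approach would first require establishing continuity of the set-distance function, so the cover argument above is more self-contained.
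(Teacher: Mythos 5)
Your proof is correct, and it takes a genuinely different route from the paper's. The paper argues by contradiction using sequential compactness: assuming no such $\rho$ exists, it extracts a sequence $\{c_n\}\subset C$ with $d(c_n,u)\to 0$, passes to a convergent subsequence $c_m\to c\in C$, and concludes $c=u\in C\cap U$, a contradiction. You instead give a direct, constructive argument from the open-cover definition of compactness: openness of $U^c$ supplies a positive radius $r_c$ at each $c\in C$, half-radius balls give a finite subcover, and a triangle-inequality estimate pins down an explicit $\rho$. Both are legitimate; your version has two advantages worth noting. First, it produces an explicit $\rho$ rather than merely proving existence, and it reuses the exact shrinking-by-half device already deployed in Lemma~\ref{lem:uniform_cont}, so it fits the surrounding text cleanly. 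Second — and more substantively — it makes the role of the hypothesis that $U$ is closed transparent: you invoke it precisely once, to get openness of $U^c$. The paper's contradiction argument as written fixes a single $u\in U$ rather than a sequence $u_n\in U$ with $d(c_n,u_n)\to 0$, which is what the negation of the claim actually yields; with a single fixed $u$ the argument never uses closedness of $U$ at all. Your approach avoids that subtlety entirely. The trade-off is that the sequential argument, when carried out carefully with the pair sequence $(c_n,u_n)$, is a bit shorter to state and requires only the sequential-compactness lemma already proved, whereas yours re-derives the finite-subcover machinery inline.
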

\begin{proof}
	As a point of contradiction suppose that the claim is false. Then there exists a sequence $\{c_n\}$ contained in $C$ and a point $u$ in $U$ such that $\lim_{n\rightarrow \infty} d(c_n,u) = 0$. Equivalently, $\lim_{n\rightarrow\infty} c_n =u$. Since $C$ is a compact set, there is a subsequence $\{c_m\}$ of $\{c_n\}$ which converges to a point $c$ in $C$. Then by the triangle inequality,
	\begin{equation}
	\begin{array}{rcl}
	d(c,u) &\leq& \lim_{m\rightarrow \infty} d(c_m,c)+\lim_{m\rightarrow \infty}d(c_m,u)\\
	& = & 0+0.
	\end{array}
	\end{equation}
	Thus, $c=u$. Since $c$ is a point in $C$, $u$ is a point in $U$, and $C\cap U$ is the empty set, we arrive at a contradiction.
\end{proof}
\begin{cor}[to Lemma \ref{lem:compact_closed}]\label{cor:compact_closed}
	If $C$ is a compact subset of an open set $S$ in a metric space $(X,d_X)$, then there exists a $\rho>0$ such that 
	\begin{equation}
		B_{\rho}(c)\subset S,\qquad \forall c\in C.
	\end{equation}
\end{cor}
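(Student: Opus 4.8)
The plan is to reduce the statement directly to Lemma \ref{lem:compact_closed} by taking the closed set to be the complement of $S$. Since $S$ is open, $U \coloneqq S^c$ is closed, and because $C \subset S$ the sets $C$ and $U$ are disjoint. This puts us exactly in the hypothesis of Lemma \ref{lem:compact_closed}.

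First I would dispense with the degenerate case $S = X$: then $B_\rho(c) \subset S$ trivially for every $c \in C$ and every $\rho > 0$, so any positive constant works. Assuming instead $S \neq X$, so that $U = S^c$ is a nonempty closed set, I would apply Lemma \ref{lem:compact_closed} to the disjoint pair $(C, U)$ to obtain a $\rho > 0$ with $\rho < d_X(c,u)$ for all $c \in C$ and all $u \in U$.

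It then remains to check that this $\rho$ has the claimed property. Fix an arbitrary $c \in C$ and let $x \in B_\rho(c)$, so that $d_X(c,x) < \rho$. If we had $x \in U = S^c$, the inequality just obtained would force $d_X(c,x) > \rho$, a contradiction; hence $x \notin S^c$, i.e. $x \in S$. Since $x$ was an arbitrary point of $B_\rho(c)$, this shows $B_\rho(c) \subset S$, and since $c$ was arbitrary the inclusion holds for every $c \in C$, completing the proof.

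I do not anticipate a genuine obstacle here — the result is an immediate corollary once one thinks of separating $C$ from $S^c$ rather than from some abstract closed set. The only point deserving a moment's attention is the bookkeeping for the case $S^c = \emptyset$, where Lemma \ref{lem:compact_closed} is not invoked at all; handling it explicitly (as above) is cleaner than relying on the vacuous reading of the lemma's conclusion.
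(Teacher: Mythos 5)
Your proof is correct and takes essentially the same route as the paper: apply Lemma \ref{lem:compact_closed} to the compact set $C$ and the closed set $S^c$, then translate the distance bound into the ball inclusion $B_\rho(c)\subset S$. The only difference is your explicit treatment of the degenerate case $S^c=\emptyset$, which the paper leaves implicit (the lemma's conclusion is vacuously true there), so this is a cosmetic rather than substantive distinction.
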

\begin{proof}
	Since $C$ is a subset of $S$, it is disjoint with $S^c$, and since $S$ is open, $S^c$ is closed. Then by Lemma \ref{lem:compact_closed} there is a $\rho>0$ such that $d_X(c,u)>\rho$ for all $c$ in $C$ and $u$ in $S^c$. Equivalently, $B_\rho (c)\cap S^c=\emptyset$ which implies $B_\rho(c)\subset S.$
\end{proof}

\subsection{Complete Metric Spaces}
A sequence $x_n$ in a metric space $(X,d_X)$ is said to be a \emph{Cauchy sequence}\index{Cauchy sequence} if for every $\varepsilon>0$, there exists an $N$ such that $n,m>N$ implies $d_X(x_n,x_m)<\varepsilon$.
The metric space $(X,d_X)$ is said to be a \emph{complete metric space}\index{complete metric space} if every Cauchy sequence converges to a point in $X$. 
\begin{lemma}\label{lem:closed_complete}
	If $C$ is a closed subset of a complete metric space $(X,d_X)$, then $(C,d_X)$ is a complete metric space.
\end{lemma}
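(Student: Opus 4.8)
The plan is to take an arbitrary Cauchy sequence in $(C,d_X)$ and show it converges to a point of $C$, which is exactly what completeness of $(C,d_X)$ requires. Let $\{x_n\}$ be a Cauchy sequence in $C$. First I would observe that, since the metric on $C$ is the restriction of $d_X$, the sequence $\{x_n\}$ is also a Cauchy sequence in the ambient space $(X,d_X)$: the Cauchy condition only involves distances between terms of the sequence, and these are unchanged by passing from $C$ to $X$. Because $(X,d_X)$ is complete, there exists a point $x\in X$ with $x_n\to x$.

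The second step is to argue $x\in C$, using that $C$ is closed. I would split into two cases following the style of the compactness lemma earlier in the excerpt. If the image $\{x_n\}$ is finite, then some point $p\in C$ is attained infinitely often, and a constant subsequence equal to $p$ converges to $p$; since the whole sequence converges to $x$, uniqueness of limits in a metric space gives $x=p\in C$. If the image is infinite, then $x$ is a limit point of $\{x_n\}\subset C$, hence a limit point of $C$; since $C$ is closed, $C$ contains all its limit points, so $x\in C$. (Alternatively, and more cleanly, one can argue directly: if $x\notin C$, then $x\in C^c$, which is open, so some $B_\varepsilon(x)\subset C^c$ would contain no terms $x_n$, contradicting $x_n\to x$; hence $x\in C$.)

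Finally, since $x\in C$ and $x_n\to x$ with respect to $d_X$, the Cauchy sequence $\{x_n\}$ converges in $(C,d_X)$. As $\{x_n\}$ was arbitrary, every Cauchy sequence in $(C,d_X)$ converges to a point of $C$, which is the definition of a complete metric space.

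I do not expect a genuine obstacle here; the only subtlety is being careful that "closed" is being used in the metric-topology sense (so that $C$ contains its limit points, equivalently $C^c$ is open), and that the Cauchy property and the notion of convergence are inherited intact by the metric subspace. The direct open-complement argument for $x\in C$ is the most economical and avoids the finite/infinite image case distinction.
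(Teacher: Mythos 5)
Your proof is correct and takes essentially the same approach as the paper: a Cauchy sequence in $C$ is Cauchy in $X$, converges to some $x\in X$ by completeness of $X$, and $x\in C$ because $C$ is closed. Your direct open-complement argument for $x\in C$ is in fact slightly more careful than the paper's one-line appeal to ``$x$ is a limit point of $C$,'' since under the appendix's strict definition of limit point an eventually constant sequence need not produce a limit point of $C$ (though $x\in C$ trivially in that case).
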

\begin{proof}
	Let $x_n$ be a Cauchy sequence in $C$. Then since $X$ is complete, $\lim_{n\rightarrow\infty} x_n =x$ for some $x\in X$. However, $x$ is then a limit point of $C$ and since $C$ is closed, $x\in C$. Thus, $(C,d_X)$ is complete.
\end{proof}
A function $f$ from a metric space $(X,d_X)$ into $(X,d_X)$ is called a \emph{contraction}\index{contraction} if there exists $\gamma\in [0,1)$ such that 
\begin{equation}
d_X(f(x_1),f(x_2))\leq \gamma d_X(x_1,x_2)\qquad \forall x_1,x_2\in X.
\end{equation}

\begin{theorem}[Banach fixed point theorem]\label{thm:banach_fixed_point}
	If $f$ is a contraction on a complete metric space $(X,d_X)$, then there exists a unique fixed point of $f$ in $X$.
\end{theorem}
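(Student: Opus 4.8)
The plan is to use the classical Picard iteration argument. First I would fix an arbitrary point $x_0 \in X$ and define a sequence $\{x_n\}$ recursively by $x_{n+1} = f(x_n)$. The contraction property applied $n$ times, together with an easy induction, yields the decay estimate $d_X(x_{n+1},x_n) \leq \gamma^n\, d_X(x_1,x_0)$ for every $n$. This is the only place the contraction constant is used to build the candidate fixed point.

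Next I would show $\{x_n\}$ is a Cauchy sequence. For $m > n$, applying the triangle inequality (MS4) repeatedly and summing the geometric series gives
\begin{equation}
d_X(x_m,x_n) \;\leq\; \sum_{k=n}^{m-1} d_X(x_{k+1},x_k) \;\leq\; \left(\sum_{k=n}^{m-1}\gamma^k\right) d_X(x_1,x_0) \;\leq\; \frac{\gamma^n}{1-\gamma}\, d_X(x_1,x_0).
\end{equation}
Since $\gamma \in [0,1)$, the right-hand side tends to $0$ as $n\to\infty$, so for any $\varepsilon>0$ there is an $N$ with $d_X(x_m,x_n) < \varepsilon$ whenever $m,n > N$; that is, $\{x_n\}$ is Cauchy. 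Because $(X,d_X)$ is complete, the sequence converges to some $x^\ast \in X$, which will be our fixed point.

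Then I would verify that $x^\ast$ is indeed fixed. Using the contraction property one more time, $d_X(f(x^\ast),x_{n+1}) = d_X(f(x^\ast),f(x_n)) \leq \gamma\, d_X(x^\ast,x_n) \to 0$, so the sequence $\{x_{n+1}\}$ converges both to $f(x^\ast)$ and to $x^\ast$; since limits in a metric space are unique (a consequence of MS2 and MS4), $f(x^\ast) = x^\ast$. For uniqueness, if $x^\ast$ and $y^\ast$ are both fixed points, then $d_X(x^\ast,y^\ast) = d_X(f(x^\ast),f(y^\ast)) \leq \gamma\, d_X(x^\ast,y^\ast)$, hence $(1-\gamma)\,d_X(x^\ast,y^\ast) \leq 0$; as $\gamma < 1$ and $d_X \geq 0$ by MS1, this forces $d_X(x^\ast,y^\ast) = 0$, so $x^\ast = y^\ast$ by MS2.

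The argument is largely routine; the step requiring the most care is the Cauchy estimate, where one must correctly telescope the distances and recognize the tail of the geometric series as the quantity controlling convergence. Everything else — existence of the limit, the fixed-point identity, and uniqueness — follows directly from completeness and the metric axioms already recorded in this appendix.
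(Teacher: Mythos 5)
Your proof is correct and follows the same Picard-iteration argument as the paper: build the recursive sequence, establish the geometric decay of consecutive distances, telescope to get the Cauchy estimate, invoke completeness, and then verify the limit is the unique fixed point. The only cosmetic difference is in the fixed-point step, where you estimate $d_X(f(x^*),x_{n+1})$ directly and appeal to uniqueness of limits, while the paper appeals to the Lipschitz continuity of $f$ to pass the limit through — both are equivalent one-line observations.
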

\begin{proof}
	Let $x_0$ be any point in $X$ and define the sequence $x_n$ recursively as 
	\begin{equation}
		x_{n+1}=f(x_n).
	\end{equation}
	Then since $f$ is a contraction, there exists $\gamma<1$ such that
	\begin{equation}
	d_X(x_{n+1},x_n)=d_X(f(x_n),f(x_{n-1}))\leq \gamma d_X(x_n,x_{n-1}), 
	\end{equation}
	which by induction yields
	\begin{equation}\label{eq:geo_series}
	d_X(x_{n+1},x_n)\leq \gamma^n d_X(x_1,x_0).
	\end{equation}
	Next we show that $x_n$ is a Cauchy sequence.
	Choose $m>n$ and apply the triangle inequality MS3 to $d_X(x_n,x_m)$,
	\begin{equation}
		d_X(x_n,x_m)\leq \sum_{i=n+1}^m d_X(x_i,x_{i-1})
	\end{equation}
	Now apply equation \ref{eq:geo_series} to each term in the right hand side of above expression,
	\begin{equation}
		d_X(x_n,x_m)\leq (\gamma^n +\gamma^n+1 + ... + \gamma ^{m-1})d_X(x_1,x_0).
	\end{equation}
	Now using the formula for geometric series we obtain
	\begin{equation}
		\frac{\gamma^n}{1-\gamma} d_X(x_1,x_2)
	\end{equation}
	which can be made arbitrarily small by choosing $n$ large enough. Thus, $x_n$ is a Cauchy sequence so it must converge to $x^*\in X$ by assumption that $(X,d_X)$ is complete.
	Then 
	\begin{equation}
	\lim_{n\rightarrow\infty} x_n =x^*,
	\end{equation}
	which is equivalent to 
	\begin{equation}
	\lim_{n\rightarrow \infty} f(x_{n-1}) = x^*. 
	\end{equation}
	Now note that since $f$ is a contraction, it is Lipschitz continuous and therefore 
	\begin{equation}
	\lim_{n\rightarrow \infty } f(x_n) = f(x^*).
	\end{equation}
	Thus, $f(x^*)=x^*$ so $x^*$ is a fixed point of $f$. 
	To prove uniqueness, suppose that $x^*$ and $y^*$ are fixed points of $f$. Then $d_x(f(x^*),f(y^*)) \leq \gamma d_X(x^*,y^*)$ and $d_x(f(x^*),f(y^*))=d_x(x^*,y^*)$. The only solution to both the equation and inequality is if $d_X(x^*,y^*)=0$ in which case $x^*=y^*$ by MS1.  
\end{proof}

\section{Normed Vector Spaces}
A \emph{vector space}\index{vector space} is an algebraic structure consisting of a set of \emph{vectors}\index{vectors} and a field\footnote{throughout this thesis the only field considered is $\mathbb{R}$.} of \emph{scalars}\index{scalars}.
A vector space is equipped with an associative and commutative addition "+" operation between vectors and a multiplication "$\cdot$" operation between a scalar and a vector that is distributive over vectors. 
In addition, there is an additive identity vector, denoted "$0$", such that $v+0=v$ for every vector $v$. 
Lastly, there is an additive inverse to every vector $v$, denoted $-v$ such that $v+(-v)=0$. This is abbreviated with $v-v$.

A \emph{normed vector space}\index{normed vector space} is a vector space equipped with a function $\Vert \cdot \Vert$ mapping vectors into $[0,\infty)$. This function, called a \emph{norm}\index{norm} must satisfy 
\begin{itemize}
	\item[NS1] $\Vert 0 \Vert =0$,
	\item[NS2] $\Vert v \Vert >0\qquad \forall v\in \mathcal{V}\setminus\{0\}$,
	\item[NS3] $\Vert \alpha v \Vert = |\alpha| \Vert v \Vert \qquad \forall \alpha \in \mathbb{R}, v\in \mathcal{V}$,
	\item[NS4] $\Vert v+w \Vert \leq \Vert v \Vert +\Vert w\Vert\qquad \forall v,w\in \mathcal{V}$.
\end{itemize}
An example of a norm on the vector space $\mathbb{R}^n$ is the $L_p$ norm for $p=1,2,...,\infty$:
\begin{equation}
\Vert v \Vert_p =\left(\sum_{i=1}^n v_i^p\right)^{\frac{1}{p}}
\end{equation}

\begin{lemma}
For every normed vector space $(\mathcal{V},\Vert \cdot \Vert )$, there is an induced metric space $(\mathcal{V},d_{\mathcal{V}})$ with the metric $d_{\mathcal{V}}$ defined by $d_{\mathcal{V}}(v,w)=\Vert v - w \Vert$.
\end{lemma}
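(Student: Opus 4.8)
The plan is to verify directly that the candidate function $d_{\mathcal{V}}(v,w) = \Vert v - w \Vert$ satisfies the four metric-space axioms MS1--MS4, drawing each one from a corresponding norm axiom NS1--NS4. Since the problem is entirely a matter of unpacking definitions, I would organize the argument as four short checks rather than a single computation.

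First, for MS1 (nonnegativity), I would simply observe that $\Vert \cdot \Vert$ is by definition a map into $[0,\infty)$, so $d_{\mathcal{V}}(v,w) = \Vert v - w\Vert \geq 0$ for all $v,w \in \mathcal{V}$. Second, for MS2 (identity of indiscernibles), I would argue both directions: if $v = w$ then $v - w = 0$ and NS1 gives $\Vert v-w\Vert = 0$; conversely, if $v \neq w$ then $v - w \in \mathcal{V}\setminus\{0\}$ and NS2 gives $\Vert v - w\Vert > 0$, so $d_{\mathcal{V}}(v,w) = 0$ forces $v = w$. Third, for MS3 (symmetry), I would write $v - w = (-1)(w - v)$ and apply NS3 with $\alpha = -1$ to get $\Vert v - w \Vert = |-1|\,\Vert w - v\Vert = \Vert w - v\Vert$, i.e. $d_{\mathcal{V}}(v,w) = d_{\mathcal{V}}(w,v)$. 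Fourth, for MS4 (triangle inequality), I would insert an intermediate vector via $v - z = (v - w) + (w - z)$ and invoke NS4 to conclude $\Vert v - z\Vert \leq \Vert v - w\Vert + \Vert w - z\Vert$, which is exactly $d_{\mathcal{V}}(v,z) \leq d_{\mathcal{V}}(v,w) + d_{\mathcal{V}}(w,z)$.

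There is no substantive obstacle here; the only thing requiring a modicum of care is the symmetry step, where one must use the scalar-multiplication axiom NS3 rather than asserting symmetry of the norm directly, since symmetry of $d_{\mathcal{V}}$ is not among the norm axioms and must be derived. Having checked MS1--MS4, the pair $(\mathcal{V}, d_{\mathcal{V}})$ satisfies the definition of a metric space, which completes the proof.
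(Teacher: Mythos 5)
Your proof is correct and follows essentially the same approach as the paper: a direct verification that MS1--MS4 follow from NS1--NS4. Your bookkeeping is in fact slightly cleaner than the paper's, which labels the check ``$v=w\Rightarrow d(v,w)=0$'' as MS1 (it is really one half of MS2) and never explicitly invokes the codomain $[0,\infty)$ for nonnegativity as you do.
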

\begin{proof}
	We just need to show that MS1-MS4 follow from the definition of the metric and NS1-NS4. Suppose $v=w$. Then $\Vert v-w \Vert$ is the same as writing $\Vert v-v \Vert$ which is equal to $0$ by NS1. Thus, MS1 is satisfied. Suppose $v\neq w$ so that $v-w\neq 0$. Then $\Vert v-w\Vert>0$ by NS2. Thus, MS2 is satisfied. Symmetry of the metric follows from the compatibility of scalar and vector multiplication, $\Vert v-w\Vert=|-1|\Vert (-1)\cdot v-(-1)\cdot w \Vert=\Vert w-v \Vert $. Thus, MS3 is satisfied. Next, the triangle inequality for the metric is just a rearrangement of the triangle inequality for the norm NS4, 
	\begin{equation}\Stretch
	\begin{array}{rcl}
	\Vert u-w \Vert &=& \Vert u + (-w) \Vert \\
	&=& \Vert u-v+v-w \Vert \\
	&\leq& \Vert u-v\Vert + \Vert v-w \Vert .
	\end{array}
	\end{equation}
	Thus, MS4 is satisfied.
\end{proof}

\subsection{Banach Spaces}
A \emph{Banach space}\index{Banach space} is a normed vector space $(\mathcal{V},\Vert \cdot \Vert )$ with the property that the induced metric space is complete. 
The $\mathbb{R}^n$ with any of the $L_p$ norms is a Banach space.

Now consider the set of bounded functions $\mathscr{B}$ mapping a compact normed space $(X,\Vert \cdot \Vert_X)$ into a Banach space $(Y,\Vert \cdot \Vert_Y)$. The following is a norm on $\mathscr{B}$, 
\begin{equation}
	\Vert f \Vert_{\infty}\coloneqq \sup_{x\in X} \Vert f(x)\Vert_Y.
\end{equation}
\begin{lemma}\label{lem:func_banach} 
	The normed space $(\mathscr{B},\Vert \cdot \Vert_\infty)$ is a Banach space.
\end{lemma}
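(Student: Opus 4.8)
The plan is to follow the standard argument that the space of bounded maps into a complete space, equipped with the topology of uniform convergence, is itself complete. First I would take an arbitrary Cauchy sequence $\{f_n\}$ in $(\mathscr{B},\Vert\cdot\Vert_\infty)$ and produce a candidate limit by evaluating pointwise. For each fixed $x\in X$, the inequality $\Vert f_n(x)-f_m(x)\Vert_Y\leq \Vert f_n-f_m\Vert_\infty$ shows that $\{f_n(x)\}$ is a Cauchy sequence in $Y$; since $(Y,\Vert\cdot\Vert_Y)$ is a Banach space this sequence converges, and I define $f(x)$ to be its limit. This defines a function $f\colon X\to Y$, which is the only possible candidate for the limit of $\{f_n\}$ in $\mathscr{B}$.

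Next I would upgrade pointwise convergence to convergence in $\Vert\cdot\Vert_\infty$. Given $\varepsilon>0$, choose $N$ so that $\Vert f_n-f_m\Vert_\infty<\varepsilon$ whenever $n,m>N$; then $\Vert f_n(x)-f_m(x)\Vert_Y<\varepsilon$ for every $x\in X$ and all such $n,m$. Holding $n>N$ and $x$ fixed and letting $m\to\infty$, continuity of the norm on $Y$ gives $\Vert f_n(x)-f(x)\Vert_Y\leq\varepsilon$; since $x$ was arbitrary this yields $\Vert f_n-f\Vert_\infty\leq\varepsilon$ for all $n>N$, i.e. $f_n\to f$ in the supremum norm.

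It then remains to verify $f\in\mathscr{B}$, that is, that $f$ is bounded. Fixing some $n$ with $\Vert f_n-f\Vert_\infty\leq 1$, the triangle inequality gives $\Vert f(x)\Vert_Y\leq \Vert f(x)-f_n(x)\Vert_Y+\Vert f_n(x)\Vert_Y\leq 1+\Vert f_n\Vert_\infty$ for every $x\in X$, and $\Vert f_n\Vert_\infty$ is finite by the hypothesis that $f_n\in\mathscr{B}$. Hence $f\in\mathscr{B}$, and the Cauchy sequence $\{f_n\}$ converges within $\mathscr{B}$, which establishes completeness.

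The argument is almost entirely routine; the only point needing care is the order of quantifiers in the second step — the estimate $\Vert f_n(x)-f(x)\Vert_Y\leq\varepsilon$ must be obtained uniformly in $x$ before one passes to the supremum, which is why one fixes $n$ and lets only $m\to\infty$ rather than trying to pass to the limit inside the supremum directly. I would also remark that compactness of $X$ is not actually used anywhere in this proof; it appears in the hypotheses because the intended application is to spaces of continuous functions, where compactness of $X$ guarantees (via Lemma \ref{lem:compact_image}) that such functions are automatically bounded and hence lie in $\mathscr{B}$.
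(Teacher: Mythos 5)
Your proof is correct and follows the same overall route as the paper's: produce a pointwise limit $f$ of the Cauchy sequence using completeness of $Y$, then argue that $f$ is the limit in $\mathscr{B}$. The one substantive difference is that you are more careful than the paper's own proof. The paper establishes pointwise convergence and then boundedness of $f$, and immediately concludes ``Thus, the Cauchy sequence converges to $f\in\mathscr{B}$'' --- but it never actually shows that $\Vert f_n - f\Vert_\infty \to 0$, which is the statement of convergence in $\mathscr{B}$ and does not follow from pointwise convergence alone. Your second step (fix $n>N$ and $x$, let $m\to\infty$ inside the Cauchy estimate, then take the supremum over $x$) is exactly the missing argument, and your emphasis on the order of quantifiers there is well placed: one must get the bound $\Vert f_n(x)-f(x)\Vert_Y\leq\varepsilon$ uniformly in $x$ before passing to the supremum. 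Your closing remark is also accurate --- compactness of $X$ is indeed not used here; it is needed later (Lemma \ref{lem:cont_closed}) to obtain uniform continuity of the $f_n$ and hence closedness of $\mathscr{C}$ in $\mathscr{B}$.
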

\begin{proof} Let $f_n$ be a Cauchy sequence in $(\mathscr{B}, \Vert \cdot \Vert_\infty)$. Then since $\Vert f_n(x)-f_m(x) \Vert_Y \leq \Vert f_n-f_m \Vert_\infty$ we have that $f_n(x)$ is a Cauchy sequence in $(Y,\Vert \cdot \Vert_Y)$ for every $x\in X$. Since $Y$ is complete, a point-wise limit of $f_n$ exists which we denote $f$. It remains to show that the limit $f$ is bounded. While not all Cauchy sequences converge (if the space is not complete) they are always bounded. There exists an $M$ such that $\Vert f_n \Vert < M$ for all $n$. Therefore, $\Vert f_n(x)\Vert_Y \leq \Vert f_n \Vert_\infty < M$ for all $n\in \mathbb{N}$ and $x\in X$. Thus, $\Vert f(x)\Vert_Y<M $ which implies $f$ is a bounded function. Thus, the Cauchy sequence converges to $f\in \mathscr{B}$. 
\end{proof}
Now let $\mathscr{C}$ be the subset of continuous functions in $\mathscr{B}$. 
\begin{lemma}\label{lem:cont_closed}
	$\mathscr{C}$ is a closed subset of the Banach space $(\mathscr{B},\Vert \cdot \Vert_\infty).$
\end{lemma}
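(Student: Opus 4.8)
The plan is to prove closedness by showing that $\mathscr{C}$ contains all of its limit points in the metric topology induced by $\Vert \cdot \Vert_\infty$ — equivalently, that the limit of every $\Vert \cdot \Vert_\infty$-convergent sequence drawn from $\mathscr{C}$ again lies in $\mathscr{C}$. Concretely, I would take $f$ to be a limit point of $\mathscr{C}$, pick a sequence $\{f_n\}$ in $\mathscr{C}$ with $\Vert f_n - f \Vert_\infty \to 0$ (the limit $f$ exists in $\mathscr{B}$ by Lemma \ref{lem:func_banach}), and then verify that $f$ is continuous by checking the $\varepsilon$--$\delta$ criterion of Lemma \ref{app:ep_delt_cont}. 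This is the classical ``uniform limit of continuous functions is continuous'' argument.

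First I would fix an arbitrary point $p \in X$ and $\varepsilon > 0$. Using $\Vert f_n - f \Vert_\infty \to 0$, I would choose an index $N$ with $\Vert f_N - f \Vert_\infty < \varepsilon/3$; the crucial point is that this single $N$ bounds $\Vert f_N(x) - f(x) \Vert_Y$ \emph{simultaneously at every} $x \in X$, which is exactly the strength the sup-norm provides. Next, since $f_N \in \mathscr{C}$ is continuous, Lemma \ref{app:ep_delt_cont} supplies a $\delta > 0$ such that $\Vert p - x \Vert_X < \delta$ implies $\Vert f_N(p) - f_N(x) \Vert_Y < \varepsilon/3$. Then for such $x$ the triangle inequality ($\varepsilon/3$ estimate)
\begin{equation}
\Vert f(p) - f(x) \Vert_Y \le \Vert f(p) - f_N(p) \Vert_Y + \Vert f_N(p) - f_N(x) \Vert_Y + \Vert f_N(x) - f(x) \Vert_Y < \varepsilon
\end{equation}
shows that $f$ satisfies the continuity criterion at $p$. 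Since $p$ was arbitrary, $f$ is continuous, hence $f \in \mathscr{C}$, and therefore $\mathscr{C}$ is closed.

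I do not expect a genuine obstacle here; the only step warranting care is keeping the pointwise/uniform distinction explicit — the two outer terms of the estimate are controlled because the same index $N$ works uniformly in $x$, and it is precisely convergence in $\Vert \cdot \Vert_\infty$ (already exploited implicitly in the proof of Lemma \ref{lem:func_banach}) that guarantees this. Once closedness is established, combining it with Lemma \ref{lem:closed_complete} immediately yields that $(\mathscr{C}, \Vert \cdot \Vert_\infty)$ is itself a Banach space, which is presumably the intended follow-up.
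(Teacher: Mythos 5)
Your proof is correct and is the classical $\varepsilon/3$ ``uniform limit of continuous functions is continuous'' argument, which is also what the paper does. The one genuine difference is instructive: you fix a single index $N$ before invoking continuity, and then you only need \emph{pointwise} continuity of $f_N$ at $p$ (Lemma \ref{app:ep_delt_cont}), so compactness of $X$ plays no role in your version. The paper instead invokes uniform continuity of $f_n$ via Lemma \ref{lem:uniform_cont} (using compactness of $X$) and phrases its $\delta$-statement as holding ``for all $n > N$'' simultaneously; that quantifier order is not actually delivered by uniform continuity of each $f_n$ individually and is also unnecessary, since only one index is ever used in the final triangle-inequality estimate. Your version sidesteps this by fixing $N$ first, which is both simpler and more careful. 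Your parenthetical remark that the limit $f$ exists in $\mathscr{B}$ by Lemma \ref{lem:func_banach} is redundant (a limit point of $\mathscr{C}\subset\mathscr{B}$ is by definition already a point of $\mathscr{B}$), but harmless.
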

\begin{proof}
	Let $f$ be a limit point of $\mathscr{C}$. 
	Then there exists $f_n\in \mathscr{C}$ which converges to $f\in\mathscr{B}$. 
	For each $\varepsilon>0$ there exists $N>0$ such that $\Vert f_n - f\Vert_\infty<\varepsilon$ for all $n>N$. 
	Since $f_n$ is continuous on a compact set $X$, it is uniformly continuous by Lemma \ref{lem:uniform_cont}.
	Thus, there exists $\delta>0$ such that $\Vert f_n(x_1) - f_n(x_2)\Vert_Y<\varepsilon$ for all $n>N$ and $x_1,x_2\in X$ satisfying $\Vert x_1-x_2 \Vert_X<\delta$.

	Next, we use the triangle inequality to show that $f$ is continuous and is therefore contained in $\mathscr{C}$. 
	\begin{equation}
	\begin{array}{rcl}
		\Vert f(x_1)-f(x_2) \Vert_Y &\leq& \Vert f(x_1)-f_n(x_1)\Vert_Y \\
		&+& \Vert f_n(x_1)-f_n(x_2) \Vert_Y\\ 
		&+& \Vert f_n(x_2)- f(x_2) \Vert_Y.
	\end{array}
	\end{equation}
	Observe that $\Vert f(x_1)-f_n(x_1)\Vert_Y$ and $\Vert f_n(x_2)- f(x_2) \Vert_Y$ are each less than $\Vert f-f_n \Vert_\infty$ which is less than $\varepsilon$. Then since $f_n$ is uniformly continuous we were able to choose $x_1,x_2$ sufficiently close together so that $\Vert f_n(x_1)-f_n(x_2) \Vert_Y$ was also less than $\varepsilon$. Thus,
	\begin{equation}
		\Vert f(x_1)-f(x_2) \Vert_Y \leq 3\varepsilon,
	\end{equation}
	which implies that $f$ is continuous and is therefore contained in $\mathscr{C}$. Since $f$ was an arbitrary limit point of $\mathscr{C}$ we conclude that $\mathscr{C}$ is closed.
\end{proof}

\begin{theorem} The normed space $(\mathscr{C}, \Vert \cdot \Vert _\infty)$ is a Banach space.
\end{theorem}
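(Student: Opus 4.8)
The plan is to assemble the statement from three facts already in hand: that $(\mathscr{B},\Vert\cdot\Vert_\infty)$ is a Banach space (Lemma \ref{lem:func_banach}), that $\mathscr{C}$ is a closed subset of $\mathscr{B}$ (Lemma \ref{lem:cont_closed}), and that a closed subset of a complete metric space is itself complete (Lemma \ref{lem:closed_complete}). The only genuinely new thing to check is that $\mathscr{C}$ is a \emph{normed vector space} in its own right, i.e.\ a linear subspace of $\mathscr{B}$ on which $\Vert\cdot\Vert_\infty$ restricts to a norm; the completeness then comes for free.

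First I would verify that $\mathscr{C}$ is a vector subspace of $\mathscr{B}$. This amounts to showing that a finite linear combination of bounded continuous functions from the compact space $(X,\Vert\cdot\Vert_X)$ into the Banach space $(Y,\Vert\cdot\Vert_Y)$ is again bounded and continuous. Boundedness is immediate from $\Vert\alpha f(x)+\beta h(x)\Vert_Y\le|\alpha|\,\Vert f\Vert_\infty+|\beta|\,\Vert h\Vert_\infty$, and continuity of $\alpha f+\beta h$ follows by the triangle inequality in $(Y,\Vert\cdot\Vert_Y)$ from the $\varepsilon$--$\delta$ characterization of continuity (Lemma \ref{app:ep_delt_cont}): given $\varepsilon>0$ and $x_0\in X$, pick $\delta$ small enough that both $\Vert f(x)-f(x_0)\Vert_Y$ and $\Vert h(x)-h(x_0)\Vert_Y$ are less than $\varepsilon$ for $\Vert x-x_0\Vert_X<\delta$, so that $\Vert(\alpha f+\beta h)(x)-(\alpha f+\beta h)(x_0)\Vert_Y\le(|\alpha|+|\beta|)\varepsilon$. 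Since $\Vert\cdot\Vert_\infty$ on $\mathscr{B}$ already satisfies NS1--NS4, its restriction to the subspace $\mathscr{C}$ trivially satisfies NS1--NS4 as well, so $(\mathscr{C},\Vert\cdot\Vert_\infty)$ is a normed vector space.

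It remains to show the induced metric space is complete. By definition of a Banach space, the metric $d_{\mathscr{B}}(f,h)=\Vert f-h\Vert_\infty$ makes $\mathscr{B}$ a complete metric space, and the metric induced on $\mathscr{C}$ by its norm is exactly the restriction $d_{\mathscr{B}}|_{\mathscr{C}\times\mathscr{C}}$, so $(\mathscr{C},d_{\mathscr{B}})$ is a metric subspace of $(\mathscr{B},d_{\mathscr{B}})$. Lemma \ref{lem:cont_closed} says $\mathscr{C}$ is closed in $\mathscr{B}$, and Lemma \ref{lem:closed_complete} then gives that $(\mathscr{C},d_{\mathscr{B}})$ is complete. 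Hence $(\mathscr{C},\Vert\cdot\Vert_\infty)$ is a normed vector space whose induced metric space is complete, which is precisely the definition of a Banach space.

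I do not anticipate a real obstacle here; the substance was done in Lemmas \ref{lem:func_banach} and \ref{lem:cont_closed}, and the present theorem is essentially a bookkeeping step. If anything, the one point worth stating carefully is that the subspace structure is inherited correctly — both the vector operations and the norm — so that ``closed subset'' in the metric sense translates to ``Banach subspace'' in the normed-space sense.
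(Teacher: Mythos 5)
Your proposal is correct and follows essentially the same route as the paper: invoke Lemma \ref{lem:func_banach} for completeness of $\mathscr{B}$, Lemma \ref{lem:cont_closed} for closedness of $\mathscr{C}$, and Lemma \ref{lem:closed_complete} to conclude. The only difference is that you explicitly verify $\mathscr{C}$ is a linear subspace on which $\Vert\cdot\Vert_\infty$ restricts to a norm — a detail the paper leaves implicit — which is a welcome bit of extra rigor but not a different argument.
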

\begin{proof}
The normed space $(\mathscr{B},\Vert\cdot\Vert_\infty)$ is a Banach space by Lemma \ref{lem:func_banach} and by Lemma \ref{lem:cont_closed}, $\mathscr{C}$ is a closed subset of $\mathscr{B}$. If follows from Lemma \ref{lem:closed_complete} that $(\mathscr{C},\Vert \cdot \Vert_\infty)$ is a Banach space.
\end{proof}

\bibliographystyle{amsalpha}
\bibliography{appb}

\clearpage

\chapter{Analysis of Functions and Integration}\label{app:functional_analysis}
The content of this appendix is based on personal notes compiled from the following sources~\cite{coddington1955theory,feldman1981proof,kolmogorov1961elements,salamon2016measure}.

\section{Measure Spaces}\label{sec:measure}
It is difficult to have a precise discussion about dynamical systems without Legesgue integration and measure spaces which utilize the concept of measure for sets. 
The assignment of measure to subsets of the measure space is the generalization of assigning weight to shapes with volume.
While the concept is very intuitive, one subtlety with measure spaces is that not all subsets can be assigned measure in general. 

A set $X$ together with a collection $\mathscr{X}$ of subsets, and a function $m:\mathcal{S}\rightarrow[0,\infty]$ is called a measure space if
\begin{itemize}
	\item[SA1] $\emptyset \in \mathcal{X}$, $X\in \mathscr{X}$.
	\item[SA2] $X^c \in \mathcal{X}$ for every $X$ in $\mathscr{X}$.
	\item[SA3] $\bigcup_{n\in \mathbb{N}} X_n\in \mathscr{X}$ and $\bigcap_{n\in \mathbb{N}}X_n$ for any sequence $X_n$ contained in $\mathscr{X}$.
	\item[SA4] $m(\emptyset)=0$.
	\item[SA5] $m\left(\bigcup_{n\in\mathbb{N}} X_n \right)=\sum_{n\in\mathbb{N}}m(X_n)$ for all sequences $X_n$ in $\mathscr{X}$ such that $X_i\cap X_j=\emptyset$ when $i\neq j$.
\end{itemize}  
The collection of subsets $\mathscr{X}$ is called a \emph{$\sigma$-algebra}\index{$\sigma$-algebra} and the function $m$ is called a \emph{measure}\index{measure}.
A set $S$ is said to be \emph{measurable}\index{measurable set} if $S$ is an element of $\mathscr{X}$. 
We say that a proposition $p(x)$ is true \emph{almost everywhere}\index{almost everywhere} in the measure space $(S,\mathscr{S},m)$ if the set  
\begin{equation}
N=\{x\in S :\, \neg p(x) \}
\end{equation}
is measurable and has measure zero. 
That is $m(N)=0$. 
Notice that SR1-SR3 are slightly different than the axioms of a topology making measure spaces a similar, but not identical structure.
In particular, one could say that a topology is "almost" a $\sigma$-algebra. 
There is a natural $\sigma$-algebra associated to every topology. 
This is known as the \emph{Borel $\sigma$-algebra}\index{Borel $\sigma$-algebra} composed of \emph{Borel sets}\index{Borel set}. 
A Borel set is any subset of a topological space that can be constructed from countable intersections, countable unions, and complements of elements of the topology.

With the addition of a measure, a Borel $\sigma$-algebra on a topological space becomes a measure space.
Borel measure spaces are of little use without a few regularity properties. 
First, a Borel measure is \emph{locally finite}\index{locally finite measure space} if every point has a neighborhood with finite measure. 
Second, a Borel measure space $(X,\mathscr{X},m_x)$ is \emph{inner regular}\index{inner regular measure spaces} if for every $\varepsilon>0$ and $S \in \mathscr{X}$, there exists a compact $K\subset S$ such that $m_X(S\setminus K)<\varepsilon$. 
A function $f$ from a measure space $(X,\mathscr{X},m_X)$ into a measure space $(Y,\mathscr{Y},m_Y)$ is said to be a \emph{measurable function}\index{measurable function} if the preimage of every measurable set in $Y$ is a measurable set in $X$. That is 
\begin{equation}
f^{-1}(U)\in \mathscr{X},\qquad \forall U\in \mathscr{Y}.
\end{equation}  
Note that this is identical to the definition of continuity in a topological space if we replace the term "measurable" with "open". 
In light of the similarity between measure spaces and topological spaces, one might guess that a measurable function between Borel $\sigma$-algebra is "almost" a continuous function with respect to the topologies generating the algebras.
This is indeed the case and this beautiful connection between measure theory and topology is due to Lusin~\cite{lusin1912proprietes}.
\begin{theorem}[Lusin]
	Let $f$ be a measurable function from a locally finite, inner regular Borel measure space $(X,\mathscr{X},m_X)$ into a Borel measure space $(Y,\mathscr{Y},m_Y)$ generated by a second countable topology. For every $\varepsilon>0$ and $X$ in $\mathscr{X}$, there is a subset of $V$ of $X$ such that 
	\begin{enumerate}
		\item $f$ is continuous on $X\setminus V$, and
		\item $m_X(V)<\varepsilon$.
	\end{enumerate}  
\end{theorem} 
There are many proofs of this theorem. This presentation of the proof is an expanded version of Feldman's~\cite{feldman1981proof} with added detail.
\begin{proof}
	Since $\mathcal{T}_Y$ is second countable, it has a countable base $\{W_i\}_{i\in \mathbb{N}}$. By the inner regularity of $m_X$, there exists open sets $U_i$ in $\mathcal{T}_X$ such that $f^{-1}(W_i)\subset U_i$ and $m_X(U_i\setminus f^{-1}(W_i))<\varepsilon/(2^i)$ for each $i\in \mathbb{N}$. Now define the set 
	\begin{equation}
		V\coloneqq \bigcup_{i\in \mathbb{N}}\, U_i\setminus f^{-1}(W_i).
	\end{equation} 
	It follows from Axiom SA5 of a measure space that $m_X$ is sub-additive so that 
	\begin{equation}
		m_X(V)\leq \sum_{i\in \mathbb{N}} m_X (U_i\setminus f^{-1}(W_i))
	\end{equation}
	Then from the construction of the sets $U$ we have 
	\begin{equation}
		m_X(V)\leq \sum_{i\in \mathbb{N}} \frac{\varepsilon}{2^i} = \varepsilon. 
	\end{equation}
	
	The next step in the proof is to show that $f^{-1}(W_i)\setminus V = U_i\setminus V$. 
	First, it is an immediate consequence of the construction of $U_i$ that 
	\begin{equation}
		f^{-1}(W_i)\setminus V \subset U_i\setminus V.
	\end{equation}
	The reverse inclusion is derived as follows\footnote{It may be helpful to review some the elementary identities of set arithmetic in Appendix \ref{app:sets}.}:
	\begin{equation}\Stretch \label{eq:crazy_set_mess}
	\begin{array}{rcl}
		U_i \setminus V &\subset& U_i \setminus (U_i \setminus f^{-1}(W_i))\\
		&=& U_i \cap (U_i \setminus f^{-1}(W_i) )^c\\
		&=& U_i \cap (U_i^c \cup f^{-1}(W_i))\\
		&=& U_i \cap f^{-1}(W_i)\\
		&=& f^{-1}(W_i).
	\end{array}
	\end{equation}
	Note that the last step follows from the construction of $U_i$. 
	Intersecting the left and right hand sides of \ref{eq:crazy_set_mess} yields the desired reverse inclusion.
	Combining the above two equations, we conclude that 
	\begin{equation}\label{eq:luzin_equality}
		U_i\setminus V= f^{-1}(W_i) \setminus V. 
	\end{equation}
	
	Next, we use \eqref{eq:luzin_equality} to prove that $f$ is continuous on $X\setminus V$. Let $W$ be an open subset of $Y$ such that $f^{-1}(W)\cap V=\emptyset$. 
	The set $W$ can be written as a union of elements of the base $W_i$
	\begin{equation}
		W=\bigcup_{i\in \mathcal{I}} W_i
	\end{equation} 
	Then 
	\begin{equation}\Stretch
	\begin{array}{rcl}
		f^{-1}(W)\setminus V&=& f^{-1}(\bigcup_{i\in \mathcal{I}}W_i)\setminus V\\
		&=& \left( \bigcup_{i \in \mathcal{I}} f^{-1} (W_i) \right) \setminus V  \\
		&=&  \bigcup_{i \in \mathcal{I}} \left( f^{-1} (W_i) \setminus V \right)\\
		&=&  \bigcup_{i \in \mathcal{I}} (U_i\setminus V)\\
		&=&  (\bigcup_{i \in \mathcal{I}} U_i) \setminus V.
	\end{array}
	\end{equation}
	Since $U_i$ are open in $\mathcal{T}_X$, the inverse image $(\bigcup_{i\in\mathcal{I}} U_i)\setminus V$ is open in the subspace topology induced by $X\setminus V$. Therefore, $f$ is continuous on $X\setminus V$ which concludes the proof.
\end{proof}  

\subsection{Lebesgue Integration}
Let $(X,\mathscr{X},m_X)$ be a measure space. 
The \emph{indicator function}\index{indicator function}  $I_S:X\rightarrow \{0,1\}$ on a measurable set $S$ is defined as follows:
\begin{equation}
    I_S(x)= 
    \begin{cases}
    1,&  x\in S,\\
    0,              & \neg (x\in S).
    \end{cases}
\end{equation}
A \emph{simple function}\index{simple function} $s:X\rightarrow [0,\infty)$ has the following form 
\begin{equation}
	s(x)=\sum_{i=1}^n c_i I_{W_i}(x),\qquad c_i \geq 0,\,W_i\in \mathscr{X}.
\end{equation}
%
Now let $\mathcal{S}$ be the set of simple functions which are point-wise less than or equal to a nonnegative measurable function $f:X\rightarrow [0,\infty)$.
The \emph{Lebesgue integral}\index{Lebesgue integration} of $f$ is defined by
\begin{equation}\label{eq:leb_nn}
\int_X f(x)\,m_X(dx) = \sup_{s\in S} \sum_{i=1}^n c_i m_X(W_i).
\end{equation}
The Lebesgue integral of a general real-valued measurable function is defined by
\begin{equation}
\int_X f(x)\,m_X(dx) = \int_X \max\{f(x),0\}\,m_X(dx)-\int_X \max\{-f(x),0\}\,m_X(dx),
\end{equation}
where the two nonnegative functions on the right hand side are defined as in \eqref{eq:leb_nn}.
A measurable function $f$ is said to be Lebesgue integrable if
\begin{equation}
\int_X |f(x)|\,m_X(dx) <\infty.
\end{equation} 
The Lebesgue integral has the following properties,
\begin{enumerate}\label{list:int_props}
	\item If $f,g$ are integrable and $\alpha,\beta$ are real numbers, then 
	\begin{equation}
		\int_X \alpha f(x)+\beta g(x) m_X(dx)=\alpha \int_X f(x) m_X(dx) +\beta \int_X g(x) m_X(dx)
	\end{equation}
	\item If $f,g$ are integrable and $f(x)\leq g(x)$ for all $x$ in $X$, then 
	\begin{equation}
		\int_X f(x)m_X(dx)\leq \int_X g(x)m_X(dx).
	\end{equation}
\end{enumerate}
The set of integrable functions denoted $\mathcal{L}^1 (X,\mathscr{X},m_X)$ is a vector space. 
A semi-norm on this space is given by 
\begin{equation}
\Vert f \Vert _{\mathcal{L}^1}=\int_X |f(x)|\,m_X (dx). 
\end{equation}
The vector space denoted  ${L}^1 (X,\mathscr{X},m_X)$ refers to the quotient space of $\mathcal{L}^1 (X,\mathscr{X},m_X)$ with the Kernel of its semi-norm making the quotient space a proper normed space. 
In fact, it is a Banach space. 

\section{Integral Equations (Dynamical System Models)}\label{sec:ode_models}
Integral equations of the form 
\begin{equation}\label{eq:ode}
	x(\tau) = x_0 + \int_{[t_0,\tau]} f(x(t),t)\,\mu(dt),
\end{equation}
appear frequently in engineering and the sciences when modeling various phenomena.
The function $f:\mathbb{R}^n\times \mathbb{R}$ represents a model of some dynamical phenomenon, and the function $x:[t_0,t_f]\rightarrow \mathbb{R}^n$ describes the evolution of a system parameterized by a vector in $\mathbb{R}^n$. 
For example, the laws of classical mechanics (pretty important for robotics) are stated as equations in the form of \eqref{eq:ode} or the analogous differential form.
It is therefore important to be confident that
\begin{enumerate}
	\item this equation has a solution,
	\item this solution is unique,
	\item and the solution is continuous.
\end{enumerate}
%
%
If there were instances with no solution then the framework would have little scientific value for modeling dynamical phenomena. 
If the second point failed then the model would not be useful for predicting the future state of the system. 
%
%
Lastly, if the solution to \eqref{eq:ode} were not continuous then solutions would permit instantaneous changes in the parameterization of a system. 
Taking these points into consideration highlights the importance and elegance of the fundamental existence-uniqueness theorem stated next.

\begin{theorem}
	Let $f$ be Lipschitz continuous in its first argument on $\mathbb{R}^n$ and measurable in its second argument. 
	Then there exists a unique solution $x:[t_0,t_f]\rightarrow \mathbb{R}^n$ in $(\mathscr{C},\Vert \cdot \Vert_\infty)$.
\end{theorem} 
\begin{proof}
	The proof technique is known as the Picard iteration. 
	Let $L$ denote a Lipschitz constant for $f$. 
	Consider the mapping from one function $x:[t_0,t_0+1/(2L)]\rightarrow \mathbb{R}^n$ to another function $y:[t_0,t_0+1/(2L)]\rightarrow \mathbb{R}^n$ given by 
	\begin{equation}\label{eq:picard_iter}
		y(\tau)=\int_{[t_0,\tau]} f(x(t),t)\,\mu(dt),\qquad \tau\in[t_0,t_0+1/(2L)].
	\end{equation}
	Observe that this mapping is well defined for all $x\in \mathscr{C}$ since the assumptions of the theorem are sufficient for the integral in \eqref{eq:picard_iter} to be finite for each $\tau$. Further, the integral is continuous with respect to $\tau$ so that $y\in \mathscr{C}$ as well\footnote{We have tacitly assumed that $\mu$ is the Lebesgue measure on $\mathbb{R}$.}. 
	
	Denote the mapping from $x$ to $y$  by $y=\mathcal{P}(x)$. Notice that $x$ is a fixed point of $\mathcal{P}$ if and only if it is a solution to \eqref{eq:picard_iter} with $t_f=1/(2L)$.
	
	We will show that the Picard iterate $\mathcal{P}$ is a contraction on  $\mathscr{C}$ to reach the result.
	Choose $x_1,x_2\in \mathscr{C}$ with the domain $[t_0,t_0+1/(2L)]$. 
	Then 
	\begin{equation}\Stretch
	\begin{array}{rcl}
		\Vert [\mathcal{P}(x_1)-\mathcal{P}(x_2)](\tau) \Vert_2 &=& \left\Vert \int_{[0,\tau]} f(x_1(t),t)-f(x_2(t),t)\,\mu(t) \right\Vert_2  \\
		&\leq&  \int_{[0,\tau]} \left\Vert f(x_1(t),t)-f(x_2(t),t)\right\Vert_2\,\mu(t) .
	\end{array}
	\end{equation}
	Then leveraging the Lipschitz continuity of $f$ in its first argument, there exists an $L$ such that 
	\begin{equation}\Stretch
	\begin{array}{rcl}
		\int_{[0,\tau]} \left\Vert f(x_1(t),t)-f(x_2(t),t)\right\Vert_2\,\mu(t) &\leq& \int_{[0,\tau]} L \left\Vert x_1(t)-x_2(t)\right\Vert_2\,\mu(t) \\
		&\leq&   L \left\Vert x_1-x_2\right\Vert_\infty \int_{[0,\tau]} \,\mu(t)\\
		&\leq& L \left\Vert x_1-x_2\right\Vert_\infty (1/(2L))\\
		&\leq& \frac{1}{2} \Vert x_1-x_2 \Vert_\infty. 
		\end{array}
	\end{equation}
	Thus, we have that 
	\begin{equation}
		\Vert [\mathcal{P}(x_1)-\mathcal{P}(x_2)](\tau) \Vert_2 \leq \frac{1}{2} \Vert x_1 - x_2 \Vert_\infty\qquad \forall \tau \in [t_0,t_0+1/(2L)],
	\end{equation}
	which is equivalent to 
	\begin{equation}
		\Vert \mathcal{P}(x_1) - \mathcal{P}(x_2) \Vert_\infty \leq \frac{1}{2} \Vert x_1 - x_2 \Vert_\infty. 
	\end{equation}
	Therefore, the Picard iterate is a contraction on the interval $[t_0,t_0+1/(2L)]$. By Theorem \ref{thm:banach_fixed_point} there exists a unique fixed point to the Picard iterate, and therefore there is a unique solution $x_1^*$ to \eqref{eq:ode} on the interval $[t_0,t_0+1/(2L)]$. This solution necessarily satisfies $x_1^*(t_0)=x_0$. 
	
	Now consider solutions on the time interval $[t_0+1/(2L),t_0+1/L]$ with initial condition $x^*_1(t_0+1/(2L))$, replacing $x_0$ in \eqref{eq:ode}.
	By the same argument, there exists a unique solution $x_2^*$ on this time interval with $x^*_1(t_0+1/(2L))=x^*_2(t_0+1/(2L))$. Combining these observations, there is necessarily a unique continuous solution on the combined interval $[t_0,t_0+1/L]$. A trivial induction argument leads to the conclusion that there is a unique continuous solution on any finite time interval $[t_0,t_f]$. 
\end{proof}
Next we show that if the dynamic model is bounded by $M$ as in Chapter \ref{chap:problem}, then the solutions are not only continuous, but Lipschitz continuous with Lipschitz constant $M$.
\begin{cor}
	If, in addition to the assumptions of Theorem \ref{thm:existence-uniqueness}, the function $f$ is bounded by $M$; then there is a unique Lipschitz continuous solution to \eqref{eq:ode} with Lipschitz constant $M$. 
\end{cor}
\begin{proof}
	Let $x$ be a solution to \eqref{eq:ode} on a finite time interval. Then for two times $t_1<t_2$ in this time interval,
	\begin{equation}\Stretch
	\begin{array}{rcl}
		 \Vert x(t_2)-x(t_1)\Vert_2 &=& \left\Vert \int_{[t_0,t_2]} f(x(t),t)\,\mu(dt) - \int_{[t_0,t_1]} f(x(t),t)\,\mu(dt) \right\Vert_2\\
		 &=& \left\Vert \int_{[t_1,t_2]} f(x(t),t)\,\mu(dt) \right\Vert_2\\
		 &\leq&  \int_{[t_1,t_2]} \left\Vert f(x(t),t) \right\Vert_2\,\mu(dt) \\
		 &\leq& M|t_2-t_1|
	\end{array}  
	\end{equation}
	Thus, $x$ is Lipschitz continuous with constant $M$.
\end{proof}

\begin{lemma}[Gronwall's Inequality]\label{lem:gw_ineq} Suppose
	$z$ is a continuous function from the interval $[0,T]$ into $\mathbb{R}$. If
	\begin{equation}
	z(t)\leq z_{0}+\int_{[0,t]}\lambda z(\tau)\,\mu(d\tau)
	\end{equation}
	for $z_{0},\lambda\in\mathbb{R}$, then 
	\begin{equation}
	z(t)\leq z_{0}e^{\lambda t}.
	\end{equation}
	
\end{lemma}
\begin{proof}
	
	Let $y(t)=\int_{[0,t]}\lambda z(\tau)\,d\mu(\tau)$.
	The assumption that $z$ is continuous implies that $y$ is
	differentiable. 
	Now let $w(t) = z_{0}+y(t)-z(t)$.
	We have $w(t)\geq0$. 
	Rewriting the definition of $y$ in terms
	of $w(t)$ yields, 
	\begin{equation}
	y(t)=\int_{[0,t]}\lambda\left(y(\tau)+z_{0}-w(\tau)\right)\,\mu(d\tau).
	\end{equation}
	Taking the derivative of $y$ yields 
	\begin{equation}
	\frac{d}{dt}y(t)=\lambda y(t)+\lambda z_{0}-\lambda w(t).
	\end{equation}
	This is a linear differential equation with solution 
	\begin{equation}
	y(t) = \int_{[0,t]}\lambda e^{\lambda(t-\tau)}\left(z_{0}-w(\tau)\right)\,\mu(d\tau).
	\end{equation}
	Since $w(t)\geq0$ we have the inequality 
	\begin{equation}
	\begin{array}{rcl}
	y(t) & \leq & z_{0}\int_{[0,t]}\lambda e^{\lambda(t-\tau)}\,\mu(d\tau)\\
	& = & z_{0}\left(e^{\lambda t}-1\right)\\
	& = & z_{0}e^{\lambda t}-z_{0}.
	\end{array}
	\end{equation}
	Returning to the definition of $y(t)$ we have 
	\[
	\begin{array}{rcl}
	z(t) & \leq & z_{0}+y(t)\\
	& \leq & z_{0}e^{\mbox{\ensuremath{\lambda}t }},
	\end{array}
	\]
	which is Gronwall's inequality.	
\end{proof}
Gronwall's inequality is best known for its use a a Lemma in proving the continuity of solutions to \eqref{eq:ode} with respect to the initial condition $x_0$. 
Continuity with respect to the initial conditions adds to the value of the \eqref{eq:ode} as a modeling framework for dynamical systems. Without it, an arbitrarily small change in initial conditions could lead to drastically different outcomes. 
\begin{theorem}
	If the assumptions of Theorem \ref{thm:existence-uniqueness} are met, the the terminal state of the solution to \eqref{eq:ode} is Lipschitz continuous with respect to the initial condition $x_0$. 
\end{theorem}
\begin{proof}
	Suppose $x_1$ and $x_2$ are solutions to two instances of \eqref{eq:ode} on the time interval $[t_0,t_f]$ with different initial conditions. Then 
	\begin{equation}\Stretch
	\begin{array}{rcl}
		\Vert x_1(t_f)- x_2(t_f) \Vert_2 &=& \left\Vert x_1(t_0)-x_2(t_0) + \int_{[t_0,t_f]} f(x_1(t),t)-f(x_2(t),t) \, \mu(dt). \right\Vert_2\\
		&\leq&  \left\Vert x_1(t_0)-x_2(t_0)\right\Vert_2 + \int_{[t_0,t_f]} \left\Vert f(x_1(t),t)-f(x_2(t),t) \right\Vert_2 \, \mu(dt)\\
		&\leq&   \left\Vert x_1(t_0)-x_2(t_0)\right\Vert_2 + \int_{[t_0,t_f]} L \left\Vert x_1(t)-x_2(t) \right\Vert_2 \, \mu(dt)\\
	\end{array}
	\end{equation}
	As a direct consequence of Lemma \ref{lem:gw_ineq} we have that 
	\begin{equation}
		\Vert x_1(t_f)- x_2(t_f) \Vert_2 \leq \left\Vert x_1(t_0)-x_2(t_0) \right\Vert e^{L(t_f-t_0)}. 
	\end{equation}	
\end{proof}
\bibliographystyle{amsalpha}
\bibliography{appc}

\clearpage
\chapter{Graph Search Algorithms}\label{app:graphs} 

This appendix provides an introduction to shortest path problems on graphs and an informed best-first search, also known as the $A^*$ algorithm.
%

A \emph{graph}\index{graph} is a set $V$ equipped with a binary relation $E$ on $V\times V$. An element of $V$ is called a \emph{vertex}\index{vertex} and an element of $E$ is called an \emph{edge}\index{edge (graph)}. 
A \emph{path}\index{path (graph)} $\{p_1,...,p_n\}$ is a finite sequence in $V$ such that $(p_i,p_{i+1}) \in E$ for each sequential pair in the sequence. 
%
%
A weighted graph is a graph equipped with a function $c:E\rightarrow \mathbb{R}$ which assigns a weight or cost to each edge. 
The \emph{cost} or \emph{weight} of a path $\{p_i\}$ with length $n$ in a weighed graph is, with some abuse of notation, given by
\begin{equation}
 c(\{p_i\})=\sum_{i=1}^{n-1} c((p_i,p_{i+1}))
\end{equation} 
A \emph{shortest path}\index{shortest path (graph)} from a source $s\in V$ to a destination set $D\subset V$ is a finite path $\{p_1,...,p_n\}$ with $p_1=s$, $p_n\in D$, and which has the minimum cost among paths originating from $s$ and terminating in $D$.
\begin{lemma}\label{lem:shortest_path_exists}
	 A finite, weighted graph with non-negative edge weights either admits a shortest path from $s\in V$ to $D\subset V$ with cost $c^*$, or has no path from $s$ to $D$.  
\end{lemma}
\begin{proof}
	Suppose there is a path $\{p_i\}$ from $s$ to $D$. Then the set of paths from $s$ to $D$ is nonempty. Note that for every path that visits a vertex twice, there exists a path with equal or lesser cost that omits the cycle. Therefore, if a path from $s$ to $D$ exists, then there is a nonempty set of paths from $s$ to $D$ with no cycles. A path with no cycles is bounded in length by the number of vertices in the graph $N$, and the number of paths of length $N$ is bounded by $N^N$. Thus, there are a finite number of paths with no cycles. Since every real valued function with a finite domain admits a minimum, there is a minimum cost path within this subset which lower bounds all paths from $s$ to $D$.
\end{proof}  

The $A^*$ algorithm is generally the most effective algorithm for single-source shortest path problems.
An essential component of the $A^*$ algorithm is an \emph{admissible heuristic}\index{admissible heuristic}; a function $h:V\rightarrow \mathbb{R}$ which estimates the cost of a shortest path from each vertex to $D$, and never overestimates it.
The algorithm incrementally examines paths originating from $s$ in order of their cost plus the estimated cost to reach $D$. The algorithm uses a function $\texttt{label} : V \rightarrow \mathbb{R}\cup \{\infty\}$ which is recursively redefined at each iteration with the cost of the best known path reaching each vertex; the function initially maps all vertices to $\infty$ representing that no path from $s$ to any particular vertex is known.
A function $\texttt{parent}:V\rightarrow V\cup \{\mathtt{NULL}\}$ is redefined in each iteration to implicitly build a secondary graph $(V\cup\{\mathtt{NULL}\},\tilde{E})$ of shortest paths from $s$ to every other vertex. 
The edge set of this graph is 
\begin{equation}
\tilde{E}=\bigcup_{v\in V} \{(\texttt{parent}(v),v)\}
\end{equation}   
The function $\texttt{parent}$ initially maps all vertices to $\mathtt{NULL}$ which represents that no path to these vertices is known.

A set $Q$ of vertices, associated to paths from $s$ by the $\texttt{parent}$ function, is operated on by the algorithm with standard set operations together with an operation $\texttt{pop}$ which returns a vertex $v \in Q$ with terminal vertex satisfying  
\begin{equation}
	\texttt{pop}(Q) \in \underset{w\in Q}{\rm argmin}\{\texttt{label}(w)+h(w)\}\leq c^*
\end{equation} 
The last operation that is needed to present the algorithm is a function $\texttt{neighborhood}:V\rightarrow 2^E$ which returns the set of edges in the graph whose first component is the argument of the function. That is,
\begin{equation}
\texttt{neighborhood}(w)=\{e\in E: \, e=(w,v) \}
\end{equation}

\begin{algorithm} 
	\begin{algorithmic}[1]
		\State $Q\gets s;$
		\State $\mathtt{label}(s)\gets 0$
		\While {$\neg(Q=\emptyset)$}
		\State $v\gets {\rm pop}(Q)$
		\If{$v\in D$}
		\State \Return $(v,\texttt{parent})$ 
		\EndIf
		\State $S\gets {\rm neighbors}(v)$
		\For $w\in S$
		\If{$\mathtt{label}(v)+\mathtt{cost}(v,w)<\mathtt{label}(w)$}
		\State $\mathtt{label}(w)\gets \mathtt{label}(v)+\mathtt{cost}(v,w)$
		\State $\mathtt{parent}(w)\gets v$
		\State $Q\gets Q\cup \{ w \}$
		\EndIf
		\EndFor  
		\EndWhile
		\State \Return $\mathtt{NO\,SOLUTION} $
	\end{algorithmic} \caption{\label{alg:ucs} The ${\rm A}^*$ algorithm.} 
\end{algorithm}
In each iteration of the algorithm, an element of $Q$ with greatest merit is removed from $Q$ and assigned to $v$ (line 5). The algorithm will terminate if $v\in D$ since the path $(v_1,v_2...,v_n)$ satisfying 
\begin{equation}
v_{i+1}=\texttt{parent}(v_i),\quad v_1=s,\,v_n=v\in D,
\end{equation}
is a path from $s$ to $D$.
If $v$ is not in the destination set, then the algorithm will continue the search by exploring extensions of the subgraph implicitly defined by the $\texttt{parent}$ function (line 7 - line 12). 
This is accomplished by assigning the neighbors of $v$ to the set $S$ (line 7). 
Each of the vertices in $S$ can be reached via the path to $v$ together with the edge connecting $v$ to each vertex in $S$. 
The cost of these paths is the cost to reach $v$, accesed by $\texttt{label}(v)$, plus the cost of the edge from $v$ to the vertex in $S$. 
If this cost is less than the current label of any of the vertices in $S$, the function $\texttt{label}$ is updated with that lower cost (line 10). 
Additionally, the path to this vertex in $S$ is stored by assigning the parent of that vertex to $v$ (line 11).
The last step in each iteration is to add the vertices of $S$ whose label was updated to the set $Q$ for examination in future iterations.
The algorithm terminates when a path from $s$ to $D$ is found as discussed above, or when the set $Q$ is empty and there are no more options for finding better paths than has already been discovered. 

\begin{theorem}
	On a finite weighted graph with nonnegative edge weights, the $A^*$ algorithm terminates in finite time returning a shortest path if one exists, or returns $\mathtt{NO\, SOLUTION}$ if there is no path from $s$ to $D$.
\end{theorem}

\begin{proof}
	(Finite running time) During the loop (line 3 - line 12), a vertex is  inserted into $Q$ if a lower cost path to that vertex than was previously known was discovered. Since the number of paths with cost less than any particular value is finite, the insertion of any vertex into $Q$ can only occur a finite number of times. Together with the finite number of vertices, the total number of vertices inserted into $Q$ during execution is finite. In each iteration, one vertex is removed from $Q$. Therefore, the algorithm must eventually terminate by returning $\texttt{NO\,SOLUTION}$ or a vertex and the parent function.
	
	(Optimality of returned path) Let $\{v_1,...,v_n\}$ be a shortest path from $s$ to $D$ with cost $c(\{v_1,...,v_n\})=c^*$. 
	Suppose, as a point of contradiction, that the algorithm does not return a path with cost $c^*$. 
	Then $v_n$ is never present in $Q$ with $\texttt{label}(v_n)=c(\{v_1,...,v_n\})$; the reason being that, if it were inserted into $Q$ with this label, then 
	\begin{equation}
		c(\{v_1,...,v_n\})+h(v_n)\leq c^*.
	\end{equation}
	Then, prior to termination, $\texttt{pop}(Q)$ would return $v_n\in D$ at which point the algorithm would terminate with a shortest path---contradictory our hypothesis.
	It follows that $v_{n-1}$ never enters $Q$ with $\texttt{label}(v_{n-1})=c(\{v_1,...,v_{n-1}\})$. Similar to the previous step, we would have 
	\begin{equation}
	c(\{v_1,...,v_{n-1}\})+h(v_{n-1})\leq c^*,
	\end{equation}
	so $\texttt{pop}(Q)$ would return $v_{n-1}$ with this label before termination of the algorithm. 
	Since $v_n$ is a neighbor of $v_{n-1}$ this would result in setting $\texttt{label}(v_n)$ to $c(\{v_1,...,v_{n-1}\})+c(\{v_{n-1},v_n\})$ in line 10 which, as was previously established, does not occur. 
	Continuing these deductive steps leads to the conclusion that $\{v_1\}$ must never enter $Q$ with $\texttt{label}(v_1)=c(\{v_1\})=0$. This is a contradiction of line 2 of the algorithm.
	Thus, if a path from $s$ to $D$ exists, the algorithm must terminate, returning a vertex $v_n\in D$ with $\texttt{label}(v_n)=c^*$. A shortest path from $v_1=s$ to $v_n\in D$ can then be recovered with the $\texttt{parent}$ function.
	
	In the event that there is no path from $s$ to $D$, the algorithm will never insert a vertex $v\in D$ into $Q$ since such vertices are always associated to a path to $s$ through the $\texttt{parent}$ function. 
	Since termination of the algorithm was already established, it must terminate with $\mathtt{NO\,SOLUTION}$.
	 
\end{proof}

\bibliographystyle{amsalpha}
\bibliography{appd}

\clearpage
\newpage

\chapter{Extended Derivations and Proofs}\label{app:pfs}
This appendix contains the tedious and less interesting proofs that are necessary for the completeness of this thesis.
%

\section{Proof that $d_\U$ is a Metric on \U }\label{app:du_pf}
Recall the the proposed metric $d_{\U}$ for the input signal space \U in Chapter \ref{chap:topo},
\begin{equation}
d_{\mathcal{U}}(u_{1},u_{2})\coloneqq\int_{[0,\min(\tau(u_{1}),\tau(u_{2})]}\Vert u_{1}(t)-u_{2}(t)\Vert_{2} \, \mu(dt)+u_{max}|\tau(u_{1})-\tau(u_{2})|.
\end{equation}
The following derivation shows that $d_\U$ satisfies axioms MS1-MS4 for a metric:
Let $u_{1}$, $u_{2}$, and $u_{3}$ denote elements of $\mathcal{U}$. 
(MS1) By the nonnegativity of the norms on $\mathbb{R}$ and $\mathbb{R}^n$, the integrand $\Vert u_1(t)-u_2(t)\Vert_2$  and $|\tau(u_1)-\tau(u_2)|$ are nonnegative for all times $t$. Thus,
\begin{equation}
\begin{array}{rcl}
d_{\mathcal{U}}(u_{1},u_{2}) & = & \int_{[0,\min\{\tau(u_{1}),\tau(u_{2})\}]}\left\Vert u_{1}(t)-u_{2}(t)\right\Vert _{2}\mu(dt)+u_{max}|\tau(u_{1})-\tau(u_{2})|\\
& \leq & \int_{[0,\min\{\tau(u_{1}),\tau(u_{2})\}]}(0)\,\mu(dt)+u_{max}(0)\\
& = & 0,
\end{array}
\end{equation}
so that MS1 is satisfied. 
(MS2) Consider two input signals $u_1$, $u_2$ which are not equal.
Then at least one of the following is true: (i) their time domains differ so that $|\tau(u_1)-\tau(u_2)|>0$, (ii) there is a subset of $S\subset[0,\min\{\tau(u_1),\tau(u_2) \}]$ on which $u_1(t)\neq u_2(t)$ in which case $\Vert u_1(t)-u_2(t) \Vert_2>0$.
For the latter case, recall that the $L_1$-spaces are quotient spaces whose elements are the equivalence classes of signals that differ only on sets of measure zero.
Thus, $S$ has positive measure.
Then,
\begin{equation}
\begin{array}{rcl}
d_{\mathcal{U}}(u_{1},u_{2}) & = & \int_{[0,\min\{\tau(u_{1}),\tau(u_{2})\}]}\left\Vert u_{1}(t)-u_{2}(t)\right\Vert _{2}\,\mu(dt)+u_{max}|\tau(u_{1})-\tau(u_{2})|\\
& = &  \int_{S}\left\Vert u_{1}(t)-u_{2}(t)\right\Vert _{2}\mu(dt)+u_{max}|\tau(u_{1})-\tau(u_{2})|\\
& > & 0.
\end{array}
\end{equation}
In the case that $u_1=u_2$, we have $u_1(t)=u_2(t)$ for every $t$ in $[0,\tau(u_1)]$. 
Additionally, $\tau(u_1)=\tau(u_2)$ so the distance between $u_1$ and $u_2$ is zero when they are equal.
Thus, MS2 is satisfied.
(MS3) Next, symmetry of the metric follows from the symmetry of the Euclidean metric from which it is constructed,
\begin{equation}
\begin{array}{rcl}
d_{\mathcal{U}}(u_{1},u_{2}) & = & \int_{[0,\min\{\tau(u_{1}),\tau(u_{2})\}]}\left\Vert u_{1}(t)-u_{2}(t)\right\Vert _{2}\mu(dt)+u_{max}|\tau(u_{1})-\tau(u_{2})|\\
& = & \int_{[0,\min\{\tau(u_{2}),\tau(u_{1})\}]}\left\Vert u_{2}(t)-u_{1}(t)\right\Vert _{2}\mu(dt)+u_{max}|\tau(u_{2})-\tau(u_{1})|\\
& = & d_{\mathcal{U}}(u_{2},u_{1}),
\end{array}
\end{equation}
so that MS3 is satisfied.
(MS4) To simplify the notation of the proof, let $\tau(u_{1})\leq\tau(u_{2})\leq\tau(u_{3})$.
To show that triangle inequality holds together with this assumption, for any three points in $\mathcal{U}$ it will need to be shown for all permutations of $u_{1}$, $u_{2}$, $u_{3}$.
From the triangle inequality for the Euclidean metric we have,
\begin{equation}
\begin{array}{rcl}
d_{\mathcal{U}}(u_{1},u_{3}) & = & \int_{[0,\tau(u_{1})]}\left\Vert u_{1}(t)-u_{3}(t)\right\Vert _{2}\, \mu(dt)+u_{max}\left|\tau(u_{1})-\tau(u_{3})\right|\\
& \leq & \int_{[0,\tau(u_{1})]} \Vert u_1(t)-u_2(t) \Vert_2 + \Vert u_2(t)-u_3(t) \Vert_2 \,\mu(dt) \\
&  & +u_{max}|\tau_{2}-\tau_{1}|+u_{max}|\tau_{3}-\tau_{2}|.\\
& = & \int_{[0,\tau(u_{1})]}\left\Vert u_1(t)-u_{2}(t)\right\Vert _{2}\,\mu(dt)+u_{max}|\tau(u_{2})-\tau(u_{1})|\\
&  & +\int_{[0,\tau(u_{1})]}\left\Vert u_{2}(t)-u_{3}(t)\right\Vert _{2}\,\mu(dt) +u_{max}|\tau(u_{3})-\tau(u_{2})|.\\
& \leq & \int_{[0,\tau(u_{1})]}\left\Vert u_1(t)-u_{2}(t)\right\Vert _{2}\,\mu(dt)+u_{max}|\tau(u_{2})-\tau(u_{1})|\\
&  & +\int_{[0,\tau(u_{2})]}\left\Vert u_2(t)-u_3(t)\right\Vert _{2}\,\mu(dt) +u_{max}|\tau(u_{3})-\tau(u_{2})|.\\
& = & d_{\mathcal{U}}(u_{1},u_{2})+d_{\mathcal{U}}(u_{2},u_{3})
\end{array}
\end{equation}
Note that in the second step above, the integration in the second term
is carried out over a larger domain to arrive at the desired result.
The next permutation is nearly identical to the first, 
\begin{equation}
\begin{array}{rcl}
d_{\mathcal{U}}(u_1,u_2) & = & \int_{[0,\tau(u_1)]}\left\Vert u_1(t)-u_2(t)\right\Vert _{2}\mu(dt)+u_{max}\left|\tau(u_1)-\tau(u_2)\right|\\
& \leq & \int_{[0,\tau(u_1)])}\left\Vert u_1(t)-u_3(t)\right\Vert _2 + \left\Vert u_2(t)-u_3(t)\right\Vert _{2}\, \mu(dt)\\
&  & + u_{max}|\tau(u_1)-\tau(u_3)|+u_{max}|\tau(u_2)-\tau(u_3)|.\\
& = & \int_{[0,\tau(u_{1})]}\left\Vert u_1(t)-u_{3}(t)\right\Vert _{2}\,\mu(dt)+u_{max}|\tau(u_{1})-\tau(u_{3})|\\
&  & +\int_{[0,\tau(u_{1})]}\left\Vert u_{2}(t)-u_{3}(t)\right\Vert _{2}\,\mu(dt) +u_{max}|\tau(u_{2})-\tau(u_{3})|.\\
& \leq & \int_{[0,\tau(u_{1})]}\left\Vert u_1(t)-u_{3}(t)\right\Vert _{2}\,\mu(dt)+u_{max}|\tau(u_{1})-\tau(u_{3})|\\
&  & +\int_{[0,\tau(u_{2})]}\left\Vert u_{2}(t)-u_{3}(t)\right\Vert _{2}\,\mu(dt) +u_{max}|\tau(u_{2})-\tau(u_{3})|.\\
& = & d_{\mathcal{U}}(u_{1},u_{3})+d_{\mathcal{U}}(u_{2},u_{3})
\end{array}
\end{equation}
Now the third permutation,
\begin{equation}
\begin{array}{rcl}
d_{\mathcal{U}}(u_{2},u_{3}) & = & \int_{[0,\tau(u_{2})]}\left\Vert u_{2}(t)-u_{3}(t)\right\Vert _{2}\mu(dt)+u_{max}\left|\tau(u_{2})-\tau(u_{3})\right|\\
& \leq & \int_{[0,\tau(u_{1})]}\left\Vert u_{2}(t)-u_{3}(t)\right\Vert _{2}\mu(dt)+u_{max}\left|\tau(u_{2})-\tau(u_{3})\right|\\
& \leq & \int_{[0,\tau(u_{1})]}\left\Vert u_{1}(t)-u_{2}(t)\right\Vert _{2}\mu(dt)+u_{max}\left|\tau(u_{1})-\tau(u_{2})\right|\\
&  & +\int_{[0,\tau(u_{1})]}\left\Vert u_{1}(t)-u_{3}(t)\right\Vert _{2}\mu(dt)+u_{max}\left|\tau(u_{1})-\tau(u_{3})\right|\\
& = & d_{\mathcal{U}}(u_{1},u_{2})+d_{\mathcal{U}}(u_{1},u_{3}).
\end{array}
\end{equation}
The remaining three cases follow trivially from the symmetry property already established.
Thus, the function $d_\U$ is a metric on the set of input signals \U.

\section{Proof that ${d_\mathcal{X}}$ is a metric on $\mathcal{X}$}\label{app:dx_pf}

Recall the the proposed metric $d_{\X}$ for the input signal space \X in Chapter \ref{chap:topo},
\begin{equation}
d_{\mathcal{X}}(x_{1},x_{2})\coloneqq\max_{t\in\left[0,\min\{\tau(x_{1}),\tau(x_{2})\}\right]}\left\{ \left\Vert x_{1}(t)-x_{2}(t)\right\Vert_2 \right\} +M|\tau(x_{1})-\tau(x_{2})|.
\end{equation}
The following derivation shows that $d_\X$ satisfies axioms MS1-MS4 for a metric:
Let $x_{1}$, $x_{2}$, and $x_{3}$ be elements
of $\mathcal{X}$. 

(MS1) For each $t\in\left[0,\min\{\tau(x_{1}),\tau(x_{2})\}\right]$ the term
$+M|\tau(x_{1})-\tau(x_{2})|$ is nonnegative as is $\left\Vert x_{1}(t)-x_{2}(t)\right\Vert _{2}$.
The maximum over a set of nonnegative numbers is nonnegative so $d_\X$ is nonnegative. 

(MS2) Suppose $d_{\mathcal{X}}(x_{1},x_{2})=0$. 
Then 
\begin{equation}
\max_{t\in\left[0,\min\{\tau(x_{1}),\tau(x_{2})\}\right]}\left\{ \left\Vert x_{1}(t)-x_{2}(t)\right\Vert _{2}\right\} +M|\tau(x_{1})-\tau(x_{2})|=0.
\end{equation}
Since the expression is a sum of two nonnegative terms adding to
zero, each term is necessarily zero. 
Therefore, $\tau(x_{1})=\tau(x_{2})$, and
$x_{1}(t)=x_{2}(t)$ for all $t\in\left[0,\tau(x_{1})\right]$ and $t\in\left[0,\tau(x_{2})\right]$.
Equivalently, $x_{1}=x_{2}$.
Now suppose $x_1=x_2$.
Then $\tau(x_1)=\tau(x_1)$ and $x_1(t)=x_2(t)$ for every $t$ in $[0,\tau(x_1)$ and $[0,\tau(x_2)]$.
This implies $|\tau(x_1)-\tau(x_1)|=0$ and $\Vert x_1(t)-x_2(t) \Vert_2=0$.
Therefore, $d_\X(x_1,x_2)=0$.

(MS3) Symmetry of the distance function is inherited from the symmetry of the metrics from which it is constructed, 
\begin{equation}
\begin{array}{rcl}
d_{\mathcal{X}}(x_{1},x_{2}) & = & \underset{{t\in\left[0,\min\{\tau(x_{1}),\tau(x_{2})\}\right]}}{\max}\left\{ \left\Vert x_{1}(t)-x_{2}(t)\right\Vert \right\} +M|\tau(x_{1})-\tau(x_{2})|\\
& = & \underset{{t\in\left[0,\min\{\tau(x_{2}),\tau(x_{1})\}\right]}}{\max}\left\{ \left\Vert x_{2}(t)-x_{1}(t)\right\Vert \right\} +M|\tau(x_{2})-\tau(x_{1})|\\
& = & d_{\mathcal{X}}(x_{2},x_{1})
\end{array}
\end{equation}

(MS4) To simplify the notation, assume $\tau(x_{1})\leq\tau(x_{2})\leq\tau(x_{3})$.
To show that triangle inequality holds together with this assumption, the inequality it will have to be shown for all permutations of $x_{1}$, $x_{2}$, $x_{3}$: 
First, 
\begin{equation}
\begin{array}{rcl}
d_{\mathcal{X}}(x_{1},x_{3}) & = & \max_{t\in\left[0,\tau_{1}\right]}\left\{ \left\Vert x_{1}(t)-x_{3}(t)\right\Vert \right\} +M|\tau_{3}-\tau_{1}|\\
& \leq & \max_{t\in\left[0,\tau_{1}\right]}\left\{ \left\Vert x_{1}(t)-x_{2}(t)\right\Vert \right\} +\max_{t\in\left[0,\tau_{1}\right]}\left\{ \left\Vert x_{2}(t)-x_{3}(t)\right\Vert \right\} \\
&  & +M|\tau_{2}-\tau_{1}|+M|\tau_{3}-\tau_{2}|.\\
& \leq & \max_{t\in\left[0,\tau_{1}\right]}\left\{ \left\Vert x_{1}(t)-x_{2}(t)\right\Vert \right\} +\max_{t\in\left[0,\tau_{2}\right]}\left\{ \left\Vert x_{2}(t)-x_{3}(t)\right\Vert \right\} \\
&  & +M|\tau_{2}-\tau_{1}|+M|\tau_{3}-\tau_{2}|.\\
& = & d_{\mathcal{X}}(x_{1},x_{2})+d_{\mathcal{X}}(x_{2},x_{3}).
\end{array}
\end{equation}
Next,
\begin{equation}
\begin{array}{rcl}
d_{\mathcal{X}}(x_{1},x_{2}) & = & \max_{t\in\left[0,\tau_{1}\right]}\left\{ \left\Vert x_{1}(t)-x_{2}(t)\right\Vert \right\} +M|\tau_{3}-\tau_{1}|\\
& \leq & \max_{t\in\left[0,\tau_{1}\right]}\left\{ \left\Vert x_{1}(t)-x_{3}(t)\right\Vert \right\} +\max_{t\in\left[0,\tau_{1}\right]}\left\{ \left\Vert x_{2}(t)-x_{3}(t)\right\Vert \right\} \\
&  & +M|\tau_{1}-\tau_{3}|+M|\tau_{2}-\tau_{3}|.\\
& \leq & \max_{t\in\left[0,\tau_{1}\right]}\left\{ \left\Vert x_{1}(t)-x_{2}(t)\right\Vert \right\} +\max_{t\in\left[0,\tau_{2}\right]}\left\{ \left\Vert x_{2}(t)-x_{3}(t)\right\Vert \right\} \\
&  & +M|\tau_{1}-\tau_{3}|+M|\tau_{2}-\tau_{3}|.\\
& = & d_{\mathcal{X}}(x_{1},x_{3})+d_{\mathcal{X}}(x_{2},x_{3})
\end{array}
\end{equation}
Then,
\begin{equation}
\begin{array}{rcl}
d_{\mathcal{X}}(x_{2},x_{3}) & = & \max_{t\in\left[0,\tau_{2}\right]}\left\{ \left\Vert x_{2}(t)-x_{3}(t)\right\Vert \right\} +M|\tau_{3}-\tau_{2}|\\
& \leq & \max_{t\in\left[0,\tau_{1}\right]}\left\{ \left\Vert x_{1}(t)-x_{2}(t)\right\Vert \right\} +\max_{t\in\left[0,\tau_{1}\right]}\left\{ \left\Vert x_{1}(t)-x_{3}(t)\right\Vert \right\} \\
&  & +M|\tau_{1}-\tau_{2}|+M|\tau_{1}-\tau_{3}|\\
& = & d_{\mathcal{X}}(x_{1},x_{2})+d_{\mathcal{X}}(x_{1},x_{3}).
\end{array}
\end{equation}
The remaining three cases follow trivially from the symmetry property already
established.

\section{Derivation of Inequality \eqref{eq:inequality}}\label{app:inequality_derivation}
This section derives the inequality
\begin{equation}
\sum_{i=1}^{h(R)}\frac{\sqrt{n}}{\eta(R)}e^{\frac{L_{f}(h(R)-i)}{R}} \leq\frac{R\sqrt{n}}{L_{f}\eta(R)}\left(e^{L_{f}h(R)/R}-1\right).
\end{equation}
First, we factor out the term $\sqrt{n}/\eta(R)$,
\begin{equation}
\sum_{i=1}^{h(R)}\frac{\sqrt{n}}{\eta(R)}e^{\frac{L_{f}(h(R)-i)}{R}}=\frac{\sqrt{n}}{\eta(R)}\sum_{i=1}^{h(R)}e^{\frac{L_{f}(h(R)-i)}{R}},
\end{equation}
followed by reversing the order of the summation,
\begin{equation}\label{eq:some_step}
\frac{\sqrt{n}}{\eta(R)}\sum_{i=1}^{h(R)}e^{\frac{L_{f}(h(R)-i)}{R}}=\frac{\sqrt{n}}{\eta(R)}\sum_{i=0}^{h(R)-1}e^{\frac{L_{f}i}{R}}.
\end{equation}
Note that for a general nondecreasing function $f:\mathbb{R}\rightarrow \mathbb{R}$ we have the inequality
\begin{equation}
	\sum_{i\in \{j,j+1,...,k\}} f(i) \leq \int_{j}^{k}f(\rho) d\rho,
\end{equation}
since the sum is the left-Riemann sum if the integral. 
This can be applied to \eqref{eq:some_step} to obtain
\begin{equation}
\begin{array}{rcl}
\frac{\sqrt{n}}{\eta(R)}\sum_{i=0}^{h(R)-1}e^{\frac{L_{f}i}{R}} & \leq & \frac{\sqrt{n}}{\eta(R)}\int_{0}^{h(R)}e^{\frac{L_{f}}{R}\rho}d\rho\\
& = & \frac{R\sqrt{n}}{L_{f}\eta(R)}\left(e^{\frac{L_{f}h(R)}{R}}-1\right),
\end{array}
\end{equation}
which is the desired inequality.

\clearpage

\clearpage
\addcontentsline{toc}{chapter}{Index}
\printindex

\end{document}